\definecolor{darkblue}{rgb}{0.0, 0.0, 0.8}
\definecolor{darkred}{rgb}{0.8, 0.0, 0.0}
\definecolor{darkgreen}{rgb}{0.0, 0.8, 0.0}
\definecolor{purple}{RGB}{153,50,204}
\newcommand*{\prob}{\mathcal{P}}
\newcommand{\cpl}{\mathcal{C}}
\newcommand{\dW}{d_\mathrm{W}}
\newcommand{\dWLk}{d_{\mathrm{WL}}^{\scriptscriptstyle{(k)}}}
\newcommand{\dGk}{d_{\mathcal{G},q}^{\scriptscriptstyle{(k)}}}
\newcommand{\dGek}{d_{\mathcal{G},(\eps)}^{\scriptscriptstyle{(k)}}}
\newcommand{\eps}{\varepsilon}
\newcommand{\R}{\mathbb{R}}
\newcommand{\lc}{\left(}
\newcommand{\rc}{\right)}
\newcommand*{\mX}{\mathcal{X}}
\newcommand*{\mY}{\mathcal{Y}}
\newcommand{\N}{\mathbb{N}}
\newcommand{\NN}{\mathcal{N\!N}}
\newcommand{\WLh}[2]{\mathfrak{l}^{\scriptscriptstyle{(#1)}}_{\scriptscriptstyle{#2}}}
\newcommand{\PP}{\mathbb{P}}
\newcommand{\law}{\mathrm{law}}
\theoremstyle{plain}
\newtheorem{theorem}{Theorem}[section]
\newtheorem{proposition}[theorem]{Proposition}
\newtheorem{lemma}[theorem]{Lemma}
\newtheorem{corollary}[theorem]{Corollary}
\theoremstyle{definition}
\newtheorem{definition}[theorem]{Definition}
\theoremstyle{remark}
\newtheorem{remark}[theorem]{Remark}
\newtheorem{claim}{Claim}
\icmltitlerunning{The Weisfeiler-Lehman Distance: Reinterpretation and Connection with GNNs}
\begin{document}

\twocolumn[
\icmltitle{The Weisfeiler-Lehman Distance: Reinterpretation and Connection with GNNs}



\icmlsetsymbol{equal}{*}
\begin{icmlauthorlist}
    \icmlauthor{Samantha Chen}{equal,to}
    \icmlauthor{Sunhyuk Lim}{equal,goo}
    \icmlauthor{Facundo Mémoli}{equal,a}
    \icmlauthor{Zhengchao Wan}{equal,b}
    \icmlauthor{Yusu Wang}{equal,to,b}
    \end{icmlauthorlist}
    
    \icmlaffiliation{to}{Department of Computer Science and Engineering, University of California San Diego, La Jolla, California, USA}
    \icmlaffiliation{goo}{Max Planck Institute for Mathematics in the Sciences, Leipzig, Saxony, Germany}
    \icmlaffiliation{a}{Department of Mathematics and Department of Computer Science and Engineering, The Ohio State University, Columbus, Ohio, USA}
    \icmlaffiliation{b}{Hal{\i}c{\i}o\u{g}lu Data Science Institute, University of California San Diego, La Jolla, California, USA}
    
    \icmlcorrespondingauthor{Zhengchao Wan}{zcwan@ucsd.edu}

\icmlkeywords{Machine Learning, ICML}

\vskip 0.3in
]



\printAffiliationsAndNotice{\icmlEqualContribution} 

\begin{abstract}
In this paper, we present a novel interpretation of the Weisfeiler-Lehman (WL) distance introduced by \cite{chen2022weisfeilerlehman} using concepts from stochastic processes. The WL distance compares graphs with node features, has the same discriminative power as the classic Weisfeiler-Lehman graph isomorphism test and has deep connections to the Gromov-Wasserstein distance. Our interpretation connects the WL distance to the literature on distances for stochastic processes, which also makes the interpretation of the distance more accessible and intuitive. We further explore the connections between the WL distance and certain Message Passing Neural Networks, and discuss the implications of the WL distance for understanding the Lipschitz property and the universal approximation results for these networks.
\end{abstract}

\section{Introduction}
\label{section:introduction}

The Weisfeiler-Lehman (WL) test, a classic graph isomorphism test \citep{leman1968reduction} which has recently gained renewed interest as a tool for analyzing Message Passing Graph Neural Networks (MP-GNNs) \citep{xu2018powerful,azizian2020expressive}. 
Recently, \citet{chen2022weisfeilerlehman} introduced the Weisfeiler-Lehman (WL) distance between labeled measure Markov chains (LMMCs). 
The WL distance has the same power as the WL test in distinguishing non-isomorphic graphs and it is more discrimative than a certain WL based graph kernel \cite{togninalli2019wasserstein}. Moreover, \citet{chen2022weisfeilerlehman} unveiled interesting connections between the WL distance and both a certain neural network architecture on Markov chains and a variant of the Gromov-Wasserstein distance \citep{memoli2011gromov}. 

Although the WL distance possesses nice theoretical properties and good empirical performance in graph classification tasks, the original formulation of the WL distance is complicated and can be hard to decipher. 
In this work, we identify a novel characterization of the WL distance using concepts from stochastic processes. This new characterization eventually provides an alternative, more intuitive reformulation of the WL distance. Via this reformulation, we further identify connections between the WL distance and the so-called Causal Optimal Transport (COT), a branch of the Optimal Transport theory specifically taylored for comparing stochastic processes \citep{lassalle2018causal}.

Finally, we recall that
\citet{chen2022weisfeilerlehman} introduced a certain neural network structure, called Markov Chain Neural Networks (MCNNs), for Markov chains and utilized the WL distance to explain theoretical properties of MCNNs. 
It was mentioned that MCNNs will reduce to a special type of MP-GNNs when restricted to Markov chains induced by graphs. In this work, we proceed further along this line and explicitly clarify how the WL distance can be used to understand properties of MP-GNNs. In particular, inspired by the analysis of \cite{chen2022weisfeilerlehman,chuang2022tree}, we establish the Lipschitz property and a universal approximation result for such MP-GNNs. 

{\bf Related work.~}
Our work builds on recent developments in graph similarity measures, particularly the WL distance introduced by \cite{chen2022weisfeilerlehman}. Another relevant work is the Tree Mover's (TM) distance, introduced in \citep{chuang2022tree}, which compares labeled graphs using Wasserstein distance and has similar discriminative power to the WL distance.  
However, in contrast to the combinatorial computation tree structure used in the TM distance (which currently cannot yet handle weighted graphs), the WL distance benefits from a more flexible Markov chain formulation. This formulation enables the WL distance to compare weighted graphs, potentially handle continuous objects, such as heat kernels on Riemannian manifolds, and facilitates the development of differentiable distances for comparing labeled graphs, as demonstrated in \cite{brugere2023distances}. 
Additionally, \citet{toth2022capturing} introduced the hypo-elliptic graph Laplacian and a corresponding diffusion model that captures the evolution of random walks on graphs. This approach shares similarities with the WL test and the WL distance by focusing on entire walk trajectories instead of individual steps. Exploring the relationship between these concepts in future research would be intriguing.

\section{Preliminaries}\label{sec:pre}
\subsection{Probability measures and Optimal Transport}\label{sec:probability}
Let $Z$ be any metric space. We let $\mathcal{P}(Z)$ denotes the space of all Borel probability measures on $Z$ with finite 1-moment\footnote{This is equivalent to saying that for any $\alpha\in\mathcal{P}(Z)$ and any $z_0\in Z$, one has that $\int_Zd_Z(z,z_0)\alpha(dz)<\infty$.}. We let $\mathcal{P}^{\circ 1}(Z):=\mathcal{P}(Z)$ and for each $k=1,\ldots$, we inductively let $\mathcal{P}^{\circ (k+1)}(Z):=\mathcal{P}(\mathcal{P}^{\circ k}(Z))$.

Given any measurable map $f:X\rightarrow Y$ and a probability measure $\alpha\in\mathcal{P}(X)$, we let $f_\#\alpha\in\mathcal{P}(Y)$ denote the \emph{pushforward} of $\alpha$, i.e., for any measurable $B\subseteq Y$, 
$f_\#\alpha(B):=\alpha(f^{-1}B).$

For any probability measures $\alpha,\beta$ on $Z$, the ($\ell_1$) Wasserstein distance between them is defined as follows:
\[\dW(\alpha,\beta):=\inf_{\pi\in\mathcal{C}(\alpha,\beta)}\int_{Z\times Z} d_Z(z,z')\pi(dz\times dz').\]
Here $\mathcal{C}(\alpha,\beta)$ denotes the set of all couplings between $\alpha$ and $\beta$, i.e., the set of all probability measure $\pi\in\mathcal{P}(Z\times Z)$ such that marginals of $\pi$ are $\alpha$ and $\beta$, respectively.

\paragraph{A note on notation for probability measures.} We will mostly deal with finite sets in this paper. Given a finite set $X$ and $\alpha\in\mathcal{P}(X)$, we let $\alpha(x):=\alpha(\{x\})$ for any $x\in X$.

\paragraph{An alternative description of the Wasserstein distance.} A random variable $X$ with values in a complete and separable metric space $Z$ is any measurable map from a probability space $(\Omega,\mathbb{P})$ to $Z$. We let $\mathrm{law}(X)\coloneqq X_\#\mathbb{P}$ denote the law of $X$.
Now, we define the notion of {coupling} in terms of random variables.
\begin{definition}[Coupling in terms of random variables]\label{def: coupling rv}
Given two probability measures $\alpha,\beta$ on a complete and separable metric space $Z$, we call any pair of random variables $\lc{X},{Y}\rc:(\Omega,\mathbb{P})\rightarrow Z\times Z$ a coupling between $\alpha$ and $\beta$ if $\mathrm{law}(X)=\alpha$ and $\law(Y)=\beta$.
\end{definition}
Of course, given a coupling $(X,Y)$, one has that $\law((X,Y))\in\mathcal{C}(\alpha,\beta)$. Conversely, note that given any (measure-theoretical) coupling $\pi\in\mathcal{C}(\alpha,\beta) $, one can always find a coupling $\lc{X},{Y}\rc$ between $\alpha$ and $\beta$ such that $\mathrm{law}\lc\lc{X},{Y}\rc\rc=\pi$. 

Now, given two probability measures $\alpha$ and $\beta$ on a metric space $Z$, the Wasserstein distance between $\alpha$ and $\beta$ can be rewritten using the language of random variables as follows:
\begin{equation}\label{eq:expected dw}
\dW(\alpha,\beta)=\inf_{\lc X,Y\rc}\mathbb{E}\, d_Z\lc{X},{Y}\rc,
\end{equation}
where the infimum is taken over all couplings $\lc X,Y\rc$ between $\alpha$ and $\beta$. 
\subsection{Markov chains}
Let $X$ be a finite set. We denote by $m_\bullet^X:X\rightarrow \mathcal{P}(X)$ a Markov transition kernel on $X$. Let $\mu_X\in\mathcal{P}(X)$ be a \emph{stationary} distribution w.r.t. $m_\bullet^X$. Then, we call the tuple $\mathcal{X}:=(X,m_\bullet^X,\mu_X)$ a measure Markov chain (MMC). 

Due to the Kolmogorov extension theorem \citep{kolmogorov2018foundations} (see also \citep[Theorem 2.1.14]{durrett2019probability}), an equivalent way of describing a measure Markov chain $\mathcal{X}:=(X,m_\bullet^X,\mu_X)$ is to view it as a probability measure $\mathbb{P}_X$ on the path space $X^\N=X\times X\times \cdots$, i.e., $X^\N=\{w=(x_i)_{i=0}^\infty:\,x_i\in X\}$\footnote{Here $\N$ denotes the set of all non negative integers.}: If we let $X_i:X^\N\rightarrow X$ denote the projection map to the $i$-th component for $i\in\N$, then $\mathbb{P}_X$ is required to satisfy that 
\begin{itemize}
    \item $\mathbb{P}_X(X_{i+1}(w)=x'|X_i(w)=x)=m_x^X(x')$ for any $x,x'\in X$ and
    \item $(X_0)_\#\mathbb{P}_X=\mu_X$.
\end{itemize}
As $X_i$ can be viewed as a random variable on the probability space $(X^\N,\PP_X)$ with values in $X$, the second condition can be also rewritten as $\law(X_0)=\mu_X$.

Now given any metric space $Z$, consider any map $\ell_X:X\rightarrow Z$. Then, we call the tuple $(\mathcal{X},\ell_X)$ a $Z$-labeled measure Markov chain (($Z-$)LMMC).

\subsection{The Weisfeiler-Lehman Distance}\label{sec:WL distance}

In \citep{chen2022weisfeilerlehman}, a notion of (pseudo-)distance is proposed for labeled measure Markov chains (LMMCs). The motivation comes from the classical Weisfeiler-Lehman (WL) graph isomorphism test, which compares two graphs by iteratively testing whether certain aggregated node-label summaries of the two input graphs are the same. The WL distance of \citep{chen2022weisfeilerlehman} introduced a measure-theoretic treatment of the node labels via Markov kernels, and it essentially ``metrized'' the WL-test procedure into a distance measure that is compatible with WL test. 
We briefly introduce the concept below; see the original paper \citep{chen2022weisfeilerlehman} for details. 

Consider a $Z-$LMMC $(\mathcal{X},\ell_X)$. We recursively define a sequence of maps $\WLh{k}{(\mX,\ell_X)}:X\rightarrow \mathcal{P}^{\circ k}(Z)$ for $k\in \N$. First of all, we let $\WLh{0}{(\mX,\ell_X)}:=\ell_X$. Then,
$$\WLh{k+1}{(\mX,\ell_X)}\coloneqq\lc\WLh{k}{(\mX,\ell_X)}\rc_\# m_\bullet^X:X\rightarrow \mathcal{P}^{\circ k}(Z).$$ 
Finally, we let 
$$\mathfrak{L}_k\!\lc(\mX,\ell_X)\rc\coloneqq \lc \WLh{k}{(\mX,\ell_X)}\rc_\#\mu_X\in\prob^{\circ(k+1)}(Z).$$

Now, given two $Z-$LMMCs $(\mathcal{X},\ell_X)$ and $(\mathcal{Y},\ell_Y)$ and any $k\in\N$, the Weisfeiler-Lehman distance of depth $k$ is defined as follows:
\begin{equation}\label{eq:definition of dwlk}
    \dWLk\!\lc(\mX,\ell_X),(\mY,\ell_Y)\rc\coloneqq\dW\!\lc \mathfrak{L}_k\!\lc(\mX,\ell_X)\rc,\mathfrak{L}_k\!\lc(\mY,\ell_Y)\rc\rc.
\end{equation}

\section{Reinterpretation of the WL distance}\label{sec:sec4}
The original definition of the WL distance, while well-suited for devising a computation algorithm for the distance, is rather intricate due to its iterative consideration of probability measures on spaces $\mathcal{P}^{\circ k}(Z)$ with increasing complexity. However, a better understanding of the concept can be achieved by using the language of stochastic processes. In this section, we will elaborate on two approaches: (1) using the concept of \emph{Markovian couplings}, a special type of couplings between Markov chains, and (2) exploring the connection between the WL distance and the theory of causal Optimal Transport \citep{lassalle2018causal,backhoff2017causal}, a variant of OT specialized for comparing stochastic processes. These interpretations of the WL distance offer valuable insights for future research in this field.

\subsection{Recall of a characterization of the WL distance}
We first recall a characterization of the WL distance from \citep{chen2022weisfeilerlehman}. 
Given any two MMCs $\mX$ and $\mY$, one can inductively define the notion of $k$-step coupling between $m_\bullet^X$ and $m_\bullet^Y$ as follows:
\begin{itemize}
    \item[$k=1$:] A \emph{1-step coupling} between $m_\bullet^X$ and $m_\bullet^Y$ is defined to be any measurable map 
    \[\nu^{\scriptscriptstyle{(1)}}_{\bullet,\bullet}:X\times Y\rightarrow \mathcal{P}(X\times Y)\]
    such that $\nu^{\scriptscriptstyle{(1)}}_{x,y}\in\mathcal{C}(m_x^X,m_y^Y) $ for any $x\in X$ and $y\in Y$.
    \item[$k\geq 2$:] A measurable map 
    $\nu^{\scriptscriptstyle{(k)}}_{\bullet,\bullet}:X\times Y\rightarrow \mathcal{P}(X\times Y)$ is called a \emph{$k$-step coupling} between $m_\bullet^X$ and $m_\bullet^Y$ if there exist a $(k-1)$-step coupling $\nu^{(k-1)}_{\bullet,\bullet}$ and a $1$-step coupling
    $\nu^{\scriptscriptstyle{(1)}}_{\bullet,\bullet}$ such that for any $x\in X$ and $y\in Y$, one has
    \[\nu^{\scriptscriptstyle{(k)}}_{x,y}=\int_{X\times Y}\nu^{(k-1)}_{x',y'}\cdot\nu^{\scriptscriptstyle{(1)}}_{x,y}(dx'\times dy')\]
    
\end{itemize}

Let $\mathcal{C}^{\scriptscriptstyle{(k)}}(m_\bullet^X,m_\bullet^Y)$ denote the collection of all $k$-step couplings between $m_\bullet^X$ and $m_\bullet^Y$. 
Furthermore, for any $\gamma\in\mathcal{C}(\mu_X,\mu_Y)$ and any $\nu^{\scriptscriptstyle{(k)}}_{\bullet,\bullet}\in\mathcal{C}^{\scriptscriptstyle{(k)}}(m_\bullet^X,m_\bullet^Y)$, denote by 
\[\nu_{\bullet,\bullet}^{\scriptscriptstyle{(k)}}\odot\gamma:= \int_{X\times Y}\nu^{\scriptscriptstyle{(k)}}_{x',y'}\cdot\gamma(dx'\times dy').\]
It is shown in \citep{chen2022weisfeilerlehman} that $\nu_{\bullet,\bullet}^{\scriptscriptstyle{(k)}}\odot\gamma\in\mathcal{C}(\mu_X,\mu_Y)$. 
We then let $\mathcal{C}^{\scriptscriptstyle{(k)}}(\mu_X,\mu_Y)$ denote the collection of all such couplings. 
Then, it turns out the WL distance can be characterized as follows\footnote{In Theorem A.7 of \citep{chen2022weisfeilerlehman}, it is assumed that the measures equipped in LMMCs are stationary with respect to their corresponding Markov kernels. However, we note that the statement in Theorem A.7 still holds if we remove the requirement measures to be stationary.}.

\begin{theorem}[{\citep[Theorem A.7]{chen2022weisfeilerlehman}}]\label{thm:dwl= dwk}
    Given $k\in\N$ and any two $Z$-LMMCs $(\mX,\ell_X)$ and $(\mY,\ell_Y)$, one has that
    $$\dWLk((\mX,\ell_X),(\mY,\ell_Y))=\!\!\!\!\!\!\!\!\inf_{\gamma^{\scriptscriptstyle{(k)}}\in\mathcal{C}^{\scriptscriptstyle{(k)}}(\mu_X,\mu_Y)}\!\!\!\!\!\!\!\!\mathbb{E}_{\gamma^{\scriptscriptstyle{(k)}}}d_Z(\ell_X(X_0),\ell_Y(Y_0)).$$
\end{theorem}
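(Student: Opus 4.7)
My plan is to proceed by induction on $k\in\N$, exploiting the fact that the Wasserstein distance between two pushforward measures admits a natural recursive decomposition. The key identity is that for measurable maps $f:A\to A'$, $g:B\to B'$ and measures $\alpha\in\prob(A)$, $\beta\in\prob(B)$ (with a natural metric on the targets),
\[\dW(f_\#\alpha,g_\#\beta)=\inf_{\gamma\in\mathcal{C}(\alpha,\beta)}\int d(f(a),g(b))\,\gamma(da\times db),\]
which reduces a Wasserstein problem on pushforwards to a Wasserstein problem on the sources. Since both $\mathfrak{L}_k$ and $\WLh{k}{(\mX,\ell_X)}$ are built by iterated pushforwards of $\ell_X$ under $\mu_X$ and the kernel $m_\bullet^X$, repeated application of this identity will peel off one pushforward layer at a time, and each such layer should contribute one step of coupling on the $X\times Y$ side.

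For the base case $k=0$ we have $\mathfrak{L}_0((\mX,\ell_X))=(\ell_X)_\#\mu_X$ and the identity above immediately gives the claim, under the natural convention $\mathcal{C}^{\scriptscriptstyle{(0)}}(\mu_X,\mu_Y)=\mathcal{C}(\mu_X,\mu_Y)$. For the inductive step I would introduce the auxiliary $D_k(x,y):=\dW(\WLh{k}{(\mX,\ell_X)}(x),\WLh{k}{(\mY,\ell_Y)}(y))$, so that applying the pushforward identity to the outermost layer gives $d_{\mathrm{WL}}^{\scriptscriptstyle{(k)}}=\inf_{\gamma\in\mathcal{C}(\mu_X,\mu_Y)}\int D_k(x,y)\,\gamma(dx\times dy)$. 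A second application, now to the recursion $\WLh{k+1}{(\mX,\ell_X)}=(\WLh{k}{(\mX,\ell_X)})_\# m_\bullet^X$, yields
\[D_{k+1}(x,y)=\inf_{\nu^{\scriptscriptstyle{(1)}}_{x,y}\in\mathcal{C}(m_x^X,m_y^Y)}\int D_k(x',y')\,\nu^{\scriptscriptstyle{(1)}}_{x,y}(dx'\times dy').\]
Plugging in the inductive expression for $D_k(x',y')$ as an infimum over $k$-step couplings and interchanging the pointwise infima with the integral, the resulting measure on $X\times Y$ takes exactly the form $\int\nu^{\scriptscriptstyle{(k)}}_{x',y'}\cdot\nu^{\scriptscriptstyle{(1)}}_{x,y}(dx'\times dy')$, i.e., a $(k+1)$-step coupling by definition. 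Composing with an outer $\gamma\in\mathcal{C}(\mu_X,\mu_Y)$ via $\odot$ then produces the required $\gamma^{\scriptscriptstyle{(k+1)}}\in\mathcal{C}^{\scriptscriptstyle{(k+1)}}(\mu_X,\mu_Y)$, and reversing the chain of equalities establishes the claim at level $k+1$.

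The main subtlety I would expect is justifying the interchange of the pointwise infima over $\nu^{\scriptscriptstyle{(k)}}_{x,y}$ with the outer integrals, i.e., assembling the pointwise optima into a measurable family $(x,y)\mapsto\nu^{\scriptscriptstyle{(k)}}_{x,y}$. In the finite-state setting of the theorem this is automatic since the coupling polytope $\mathcal{C}(m_x^X,m_y^Y)$ depends in a piecewise-linear way on $(x,y)$; in full generality it would invoke a standard measurable selection theorem. Finally, I would note that the argument never uses stationarity of $\mu_X$ or $\mu_Y$, consistent with the footnote accompanying the theorem statement.
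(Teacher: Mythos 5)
The paper does not actually prove this statement: it is imported verbatim from \citet{chen2022weisfeilerlehman} (their Theorem A.7), with only a footnote observing that stationarity of $\mu_X,\mu_Y$ is not needed, so there is no in-paper proof to compare against. Judged on its own, your argument is sound and is essentially the standard one: the identity $\dW(f_\#\alpha,g_\#\beta)=\inf_{\gamma\in\mathcal{C}(\alpha,\beta)}\int d\lc f(a),g(b)\rc\,\gamma(da\times db)$ (whose nontrivial direction, lifting a coupling of pushforwards to a coupling of the sources, is immediate here since $X$ and $Y$ are finite) peels off one pushforward per step, and composing the resulting one-step couplings reproduces exactly the defining recursion of $k$-step couplings. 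Two points deserve care when you write this out. First, the induction hypothesis you actually invoke is the pointwise identity $D_k(x',y')=\inf_{\nu^{(k)}}\int d_Z(\ell_X(u),\ell_Y(v))\,\nu^{(k)}_{x',y'}(du\times dv)$, not the integrated statement of the theorem; this is the theorem applied with Dirac (hence non-stationary) initial distributions, which is legitimate precisely because, as you note, stationarity is never used, but the induction should be formulated at that pointwise level. Second, when interchanging the pointwise infima with the integral you need a single $k$-step coupling that is (near-)optimal at every pair $(x',y')$ simultaneously; merging different $k$-step couplings pointwise at the top level does not obviously yield a $k$-step coupling, but the uniform family comes out of your own recursion run level by level (at each level pick a one-step coupling optimal at every pair, which exists and is trivially measurable in the finite setting). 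With these refinements the induction closes. It is also worth noting that your $D_k$-recursion is the same Bellman-type recursion the paper uses for the functions $W_i$ in \Cref{sec:COT} (cf.\ \Cref{VWsame}), so your route is fully consistent with the machinery the paper builds on top of this theorem.
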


\subsection{Markovian couplings and a stochastic process interpretation of the WL distance}\label{sec:markovian coupling}
Readers familiar with stochastic processes may recognize the construction of $k$-step couplings in the previous section. 
In fact, we observe that these couplings can essentially be derived from the so-called Markovian couplings (which we will introduce later).
Based on this observation, in this section, we provide a clean and intuitive interpretation of the WL distance as a variant of the Wasserstein distance between distributions of paths by highlighting this connection.

Recall that any measure Markov chain $(X,m_\bullet^X,\mu_X)$ can be equivalently described as a probability measure $\mathbb{P}_X$ on the path space $X^\N$ such that $(X_0)_\#\PP_X=\mu_X$ and $\mathbb{P}_X(X_{i+1}=x'|X_i=x)=m_x^X(x')$ for any $x,x'\in X$. 
In this way, given any two MMCs $\mathcal{X}=(X,m_\bullet^X,\mu_X)$ and $\mathcal{Y}=(Y,m_\bullet^Y,\mu_X)$, one can consider couplings $\PP\in\mathcal{C}(\PP_X,\PP_Y)\subseteq \mathcal{P}(X^\N\times Y^\N)$ between their corresponding path distributions $\PP_X$ and $\PP_Y$, respectively.
There is a canonical identification between $X^\N\times Y^\N$ and $(X\times Y)^\N$ sending $((x_j)_{j\in\N},(y_j)_{j\in\N})$ to $(x_j,y_j)_{j\in\N}$. Then, $\PP$ can be also considered as a probability measure on $(X\times Y)^\N$, i.e., a distribution of paths inside the space $X\times Y$.
As this distribution is a coupling between two  Markov chains, it is natural to also impose the Markov property on such couplings. 
This naturally leads to the following definition of Markovian couplings which helps to provide an intuitive characterization of the WL distance.
We first introduce some notation. For any $i\in\N$, we let $X_i:X^\N\times Y^\N\rightarrow X$ sending $((x_j)_{j\in\N},(y_j)_{j\in\N})$ to $x_i$ and we similarly define $Y_i$.
Now, we introduce the notion of Markovian couplings.

\begin{definition}[Markovian couplings]\label{def: markov coupling}
A coupling $\PP\in\mathcal{C}(\PP_X,\PP_Y)$ is called a \emph{Markovian coupling}, if the following conditions are satisfied: 
\begin{enumerate}[leftmargin=*]
    \item The sequence $\{(X_i,Y_i)\}_{i\in \N}$, regarded as random variables valued in $X\times Y$ on the probability measure space $(X^\N\times Y^\N,\PP)$, satisfies the \emph{Markov property}, i.e., for any $i\in \N$, $x_0,\ldots,x_{i}\in X$ and $y_0,\ldots,y_{i}\in Y$, one has that
    \begin{align*}
&\mathbb{P}\big((X_i,Y_i)=(x_i,y_i)|\{(X_j,Y_j)=(x_j,y_j)\}_{0\leq j\leq i-1}\big)\\
&=\mathbb{P}\big((X_i,Y_i)=(x_i,y_i)|(X_{i-1},Y_{i-1})=(x_{i-1},y_{i-1})\big).
    \end{align*}
    \item For any $i\in\N$, $x\in X$ and $y\in Y$, if we construct a probability measure $(\nu_{i+1})_{x,y}\in \mathcal{P}(X\times Y)$ as follows
\begin{align*}
    &(\nu_{i+1})_{x,y}(x',y'):=\\
    &\mathbb{P}\big((X_{i+1},Y_{i+1})=(x',y')|(X_{i},Y_{i})=(x,y)\big)
\end{align*}
$\forall$ $x'\in X$, $y'\in Y$, then $(\nu_{i+1})_{x,y}\in\mathcal{C}(m_x^X,m_y^Y)$. i.e., $(\nu_{i+1})_{\bullet,\bullet}$ is a 1-step coupling between $m_\bullet^X$ and $m_\bullet^Y$.\footnote{When the event $\{(X_i,Y_i)=(x,y)\}$ is null, the conditional probability is not defined and hence we simply let $(\nu_{i+1})_{x,y}$ be the product measure $m_x^X\otimes m_y^Y\in\mathcal{C}(m_x^X,m_y^Y)$.}
\end{enumerate}
We let $\mathcal{C}_\mathcal{M}(\PP_X,\PP_Y)$ denote the collection of all Markovian couplings between $\PP_X$ and $\PP_Y$.
\end{definition}

In short, a Markovian coupling of two Markov chains $\PP_X$ and $\PP_Y$ is a time-inhomogeneous Markov chain on a state space $X\times Y$ that is the product of the state spaces of the two chains and whose transition kernel at each time step is a coupling between the transition kernels of the two chains; see \Cref{fig:markovian coupling} for an illustration.

\begin{remark}[Initial distribution]
   Note that coupling $\PP\in\mathcal{C}(\PP_X,\PP_Y)$ satisfies that $(X_0,Y_0)_\#\PP\in\mathcal{C}(\mu_X,\mu_Y)$. Hence, any Markovian coupling also starts with an initial distribution which is itself a coupling between the initial distributions of the two Markov chains.
\end{remark}

\begin{figure}[htb!]
    \centering
    \includegraphics[width=0.2\textwidth]{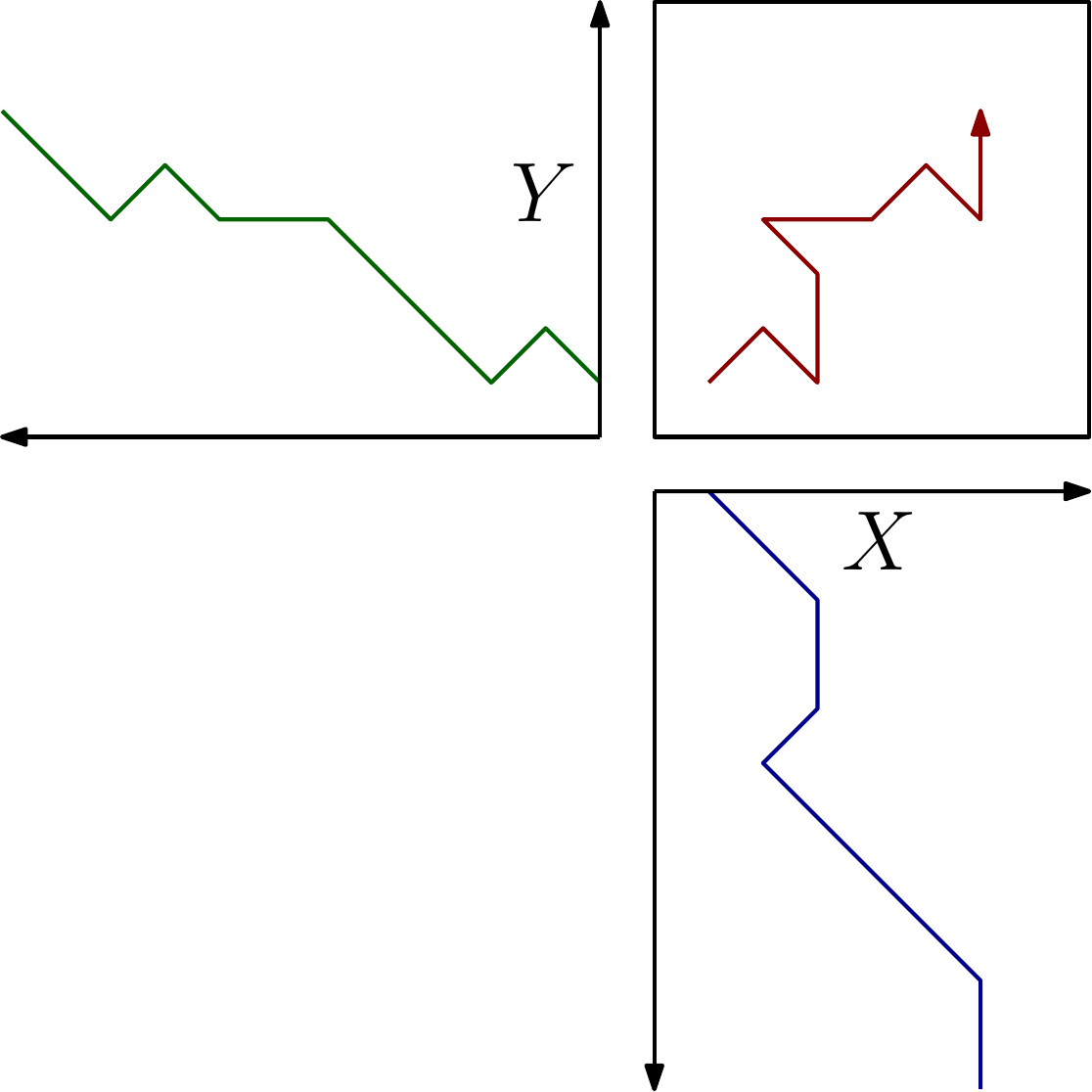}
    \caption{A Markovian coupling between $\mX$ and $\mY$ is a Markov chain in the product space $X\times Y$ so that its projections on to $X$ and $Y$ have the same law as $\mX$ and $\mY$, respectively.}
    \label{fig:markovian coupling}
\end{figure}

\begin{remark}[Coupling method]
    This definition of Markovian coupling is a generalization of a common coupling technique used for proving convergence of Markov chains (see for example \citep{levin2017markov}). 
    Note that whereas we define Markovian couplings for two arbitrary Markov chains, in the context of ``coupling technique'', the two Markov chains involved have the same transition kernels.
    \end{remark}

\begin{remark}[Existence of Markovian couplings]\label{rmk:MarkCoupExist}
Given transition kernels $(\nu_{i+1})_{\bullet,\bullet}$ for all steps $i=0,\ldots$ and an initial distribution $\gamma$, one can always construct a Markovian coupling via the Kolmogorov extension theorem \citep{kolmogorov2018foundations}. 
\end{remark}

As promised at the beginning of this section, below we provide an intuitive explanation of $k$-step couplings. See \Cref{sec:proof of prop:mcoupling=kcoupling} for the proof of this result. 

\begin{proposition}\label{prop:mcoupling=kcoupling}
For any coupling $\gamma\in\mathcal{C}(\mu_X,\mu_Y)$ and any $k$-step coupling $\nu^{\scriptscriptstyle{(k)}}_{\bullet,\bullet}\in\mathcal{C}^{\scriptscriptstyle{(k)}}(m_\bullet^X,m_\bullet^Y)$, there exists a Markovian coupling $\PP\in\mathcal{C}_\mathcal{M}(\PP_X,\PP_Y)$ such that 
$$\mathrm{law}((X_k,Y_k))=\nu_{\bullet,\bullet}^{\scriptscriptstyle{(k)}}\odot\gamma\,\,\text{and}\,\,\mathrm{law}((X_0,Y_0))=\gamma.$$
Conversely, for any Markovian coupling $\PP\in\mathcal{C}_\mathcal{M}(\PP_X,\PP_Y)$, $\gamma:=\mathrm{law}((X_0,Y_0))$ is a coupling between $\mu_X$ and $\mu_Y$ and furthermore, there exists $k$-step coupling $\nu_{\bullet,\bullet}^{\scriptscriptstyle{(k)}}\in\cpl^{\scriptscriptstyle{(k)}}\!\lc m_\bullet^{X},m_\bullet^Y\rc$ such that $\mathrm{law}((X_k,Y_k))=\nu_{\bullet,\bullet}^{\scriptscriptstyle{(k)}}\odot\gamma$.
\end{proposition}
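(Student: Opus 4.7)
The proposition has two directions, both of which I would handle through the path-space picture set up in \Cref{def: markov coupling} and \Cref{rmk:MarkCoupExist}.

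For the forward direction, the plan is to unfold the recursive definition of a $k$-step coupling into an ordered list of $1$-step couplings and then appeal to the Kolmogorov extension theorem. By induction on $k$, I would extract from any $\nu^{\scriptscriptstyle{(k)}}_{\bullet,\bullet}\in\cpl^{\scriptscriptstyle{(k)}}(m_\bullet^X,m_\bullet^Y)$ a sequence of $1$-step couplings $\eta^{(1)}_{\bullet,\bullet},\ldots,\eta^{(k)}_{\bullet,\bullet}$ such that $\nu^{\scriptscriptstyle{(k)}}_{x,y}$ is exactly the time-$k$ marginal of the (time-inhomogeneous) Markov chain on $X\times Y$ that starts at $(x,y)$ and uses $\eta^{(i)}$ at its $i$-th transition; the base case $k=1$ is trivial, and the inductive step prepends a $1$-step coupling to the sequence realizing $\nu^{\scriptscriptstyle{(k-1)}}$, matching the defining recursion. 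I would then invoke the Kolmogorov extension theorem (\Cref{rmk:MarkCoupExist}) using the given $\gamma$ as initial distribution, the kernels $\eta^{(1)},\ldots,\eta^{(k)}$ for the first $k$ transitions, and any fixed $1$-step coupling (for instance $(x,y)\mapsto m_x^X\otimes m_y^Y$) for later transitions. The resulting $\PP$ is Markovian by construction, and its marginals on $X^\N$ and $Y^\N$ are $\PP_X$ and $\PP_Y$ because every $\eta^{(i)}_{x,y}$ has marginals $m_x^X,m_y^Y$ and $\gamma\in\cpl(\mu_X,\mu_Y)$. By the choice of initial distribution $\law((X_0,Y_0))=\gamma$, and by the unfolding $\law((X_k,Y_k)\mid(X_0,Y_0)=(x,y))=\nu^{\scriptscriptstyle{(k)}}_{x,y}$, so integrating against $\gamma$ gives $\law((X_k,Y_k))=\nu_{\bullet,\bullet}^{\scriptscriptstyle{(k)}}\odot\gamma$.

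For the converse, given $\PP\in\cpl_\mathcal{M}(\PP_X,\PP_Y)$, the identities $(X_0)_\#\PP=\mu_X$ and $(Y_0)_\#\PP=\mu_Y$ immediately yield $\gamma:=\law((X_0,Y_0))\in\cpl(\mu_X,\mu_Y)$. I would then define $\nu^{\scriptscriptstyle{(k)}}_{x,y}:=\law((X_k,Y_k)\mid(X_0,Y_0)=(x,y))$ and prove by induction on $k$ that $\nu^{\scriptscriptstyle{(k)}}_{\bullet,\bullet}\in\cpl^{\scriptscriptstyle{(k)}}(m_\bullet^X,m_\bullet^Y)$. The base $k=1$ is exactly the second clause of \Cref{def: markov coupling} (with the null-event convention stated there). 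For the inductive step, the Markov property gives
\[
\nu^{\scriptscriptstyle{(k)}}_{x,y} \;=\; \int_{X\times Y}\law\bigl((X_k,Y_k)\mid(X_1,Y_1)=(x',y')\bigr)\,(\nu_1)_{x,y}(dx'\times dy'),
\]
so it suffices to identify the inner kernel as a $(k-1)$-step coupling of $m_\bullet^X$ and $m_\bullet^Y$. For this I would argue that the shifted process $\{(X_{i+1},Y_{i+1})\}_{i\in\N}$ is itself a Markovian coupling of $\PP_X$ and $\PP_Y$: stationarity of $\mu_X,\mu_Y$ keeps the shifted marginals equal to $\PP_X,\PP_Y$, and the shifted one-step transition kernels $(\nu_{i+2})_{\bullet,\bullet}$ remain $1$-step couplings by hypothesis. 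The inductive hypothesis applied to this shifted coupling then delivers the required $(k-1)$-step coupling, and $\law((X_k,Y_k))=\nu_{\bullet,\bullet}^{\scriptscriptstyle{(k)}}\odot\gamma$ follows by integrating the conditional law against $\gamma$.

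The most delicate step is the time-shift bookkeeping in the converse, namely justifying that conditioning a Markovian coupling on the state at time $1$ (or equivalently, shifting it by one step) still yields a Markovian coupling of the same pair of path laws $\PP_X,\PP_Y$. This relies squarely on the stationarity of $\mu_X,\mu_Y$ together with the time-homogeneity of $m_\bullet^X,m_\bullet^Y$, and the technical nuisance of null conditioning events is absorbed by the convention already adopted in \Cref{def: markov coupling}. The rest of the argument is routine unfolding of definitions.
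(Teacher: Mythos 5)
Your argument is correct, and the forward direction is essentially the paper's: unfold the $k$-step coupling into $k$ one-step couplings, pad with an arbitrary one-step coupling after time $k$, and invoke the Kolmogorov extension theorem (\Cref{rmk:MarkCoupExist}) with initial law $\gamma$. The converse takes a somewhat different route. The paper extracts the transition kernels $(\nu_{i+1})_{\bullet,\bullet}$ (one-step couplings by the second clause of \Cref{def: markov coupling}, including on null events via the stated convention) and simply \emph{defines} $\nu^{\scriptscriptstyle{(k)}}_{x_0,y_0}$ as their $k$-fold composition; this is a $k$-step coupling immediately from the recursive definition, and $\mathrm{law}((X_k,Y_k))=\nu^{\scriptscriptstyle{(k)}}_{\bullet,\bullet}\odot\gamma$ follows from the Markov property, with no appeal to stationarity. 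You instead define $\nu^{\scriptscriptstyle{(k)}}_{x,y}$ as the conditional law of $(X_k,Y_k)$ given $(X_0,Y_0)=(x,y)$ and induct via the shifted process, which forces two extra points: (i) you need stationarity of $\mu_X,\mu_Y$ so that the shifted process is again a coupling of $\PP_X,\PP_Y$ (legitimate here, since MMCs carry stationary measures, but the paper's proof does not need it, consistent with the footnote to \Cref{thm:dwl= dwk}); and (ii) on null events $\{(X_0,Y_0)=(x,y)\}$ the conditional law is undefined, and the convention in \Cref{def: markov coupling} covers only the one-step kernels, so you should explicitly declare $\nu^{\scriptscriptstyle{(k)}}_{x,y}$ to be the composition of those kernels there (this does not affect $\nu^{\scriptscriptstyle{(k)}}_{\bullet,\bullet}\odot\gamma$, since such $(x,y)$ are $\gamma$-null). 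With these small repairs your converse is sound; the paper's kernel-composition argument is shorter and slightly more general.
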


This proposition implies that any $k$-step coupling, just as indicated by the name, is the distribution of two coupled random walks at exactly step $k$.
The direct implication of this result is that we provide a characterization of the WL distance of depth $k$ as follows. We note that the following theorem is an immediate consequence of \Cref{thm:dwl= dwk} and \Cref{prop:mcoupling=kcoupling}.

\begin{theorem}\label{thm:dwlstochastic}
For any $Z$-LMMCs $(\mX,\ell_X)$ and $(\mY,\ell_Y)$, we have that
$$\dWLk\lc(\mX,\ell_X),(\mY,\ell_Y)\rc=\inf_{\PP\in\mathcal{C}_\mathcal{M}(\PP_X,\PP_Y)}\mathbb{E}_{\PP}\, d_Z(\ell_X({X}_k),\ell_Y({Y}_k)).$$
\end{theorem}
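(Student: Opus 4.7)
The plan is to reduce \Cref{thm:dwlstochastic} directly to \Cref{thm:dwl= dwk} via \Cref{prop:mcoupling=kcoupling}. The first observation I would make is that both objective functionals only depend on their candidate couplings through the single marginal distribution on $X\times Y$ that arises at step $k$. On the right-hand side of \Cref{thm:dwl= dwk}, the quantity $\mathbb{E}_{\gamma^{(k)}}d_Z(\ell_X(X_0),\ell_Y(Y_0))$ is just the integral $\int_{X\times Y}d_Z(\ell_X(x),\ell_Y(y))\,\gamma^{(k)}(dx\times dy)$, which is determined by $\gamma^{(k)}\in\mathcal{C}^{(k)}(\mu_X,\mu_Y)$ alone. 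On the right-hand side of \Cref{thm:dwlstochastic}, the integrand $d_Z(\ell_X(X_k),\ell_Y(Y_k))$ is measurable with respect to the projection to the $k$-th coordinate of the path space, so $\mathbb{E}_\PP d_Z(\ell_X(X_k),\ell_Y(Y_k))$ depends on $\PP$ only through $\law((X_k,Y_k))$.

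Next, I would use \Cref{prop:mcoupling=kcoupling} to establish the set identity
\begin{equation*}
\{\law((X_k,Y_k)):\PP\in\mathcal{C}_\mathcal{M}(\PP_X,\PP_Y)\}=\mathcal{C}^{(k)}(\mu_X,\mu_Y).
\end{equation*}
The $\supseteq$ inclusion is exactly the first (forward) half of \Cref{prop:mcoupling=kcoupling}: every element $\nu^{(k)}_{\bullet,\bullet}\odot\gamma\in\mathcal{C}^{(k)}(\mu_X,\mu_Y)$ is realized as $\law((X_k,Y_k))$ for some Markovian coupling $\PP$. The $\subseteq$ inclusion is the converse half: for any Markovian coupling $\PP$, the law $\law((X_k,Y_k))$ can be written as $\nu^{(k)}_{\bullet,\bullet}\odot\gamma$ with $\gamma\in\mathcal{C}(\mu_X,\mu_Y)$ and $\nu^{(k)}_{\bullet,\bullet}\in\mathcal{C}^{(k)}(m_\bullet^X,m_\bullet^Y)$, hence lies in $\mathcal{C}^{(k)}(\mu_X,\mu_Y)$ by definition.

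Combining these two observations, the two infima in \Cref{thm:dwl= dwk} and \Cref{thm:dwlstochastic} are taken over the same collection of distributions on $X\times Y$ with the same objective, so they coincide. I do not anticipate any real obstacle: the theorem is a clean corollary once \Cref{prop:mcoupling=kcoupling} has been proved, which is also why the paper labels it an immediate consequence. The only mild bookkeeping point is to note that the use of $X_0,Y_0$ in \Cref{thm:dwl= dwk} versus $X_k,Y_k$ in \Cref{thm:dwlstochastic} is purely cosmetic once one identifies the common marginal $\gamma^{(k)}$ on $X\times Y$ under which the integral of $d_Z\circ(\ell_X,\ell_Y)$ is taken.
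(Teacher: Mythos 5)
Your proposal is correct and follows essentially the same route as the paper: the paper's proof applies \Cref{prop:mcoupling=kcoupling} in each direction to pass between a $k$-step coupling $\gamma^{(k)}$ and a Markovian coupling $\PP$ with $\mathrm{law}((X_k,Y_k))=\gamma^{(k)}$, obtaining the two inequalities between the infima and then invoking \Cref{thm:dwl= dwk}. Your packaging of this as the set identity $\{\mathrm{law}((X_k,Y_k)):\PP\in\mathcal{C}_\mathcal{M}(\PP_X,\PP_Y)\}=\mathcal{C}^{\scriptscriptstyle{(k)}}(\mu_X,\mu_Y)$ together with the observation that both objectives depend only on that step-$k$ marginal is just a cleaner rendering of the same argument.
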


In this way, we have interpreted the WL distance of depth $k$ as a variant of the Wasserstein distance between the distributions of paths corresponding to the input Markov chains. Here the cost between two paths $w_X\in X^\N$ and $w_Y\in Y^\N$ is given by $d_Z(\ell_X({X}_k(w_X),\ell_Y({Y}_k(w_Y)))$ through the label functions. Essentially, the WL distance at depth $k$ is comparing the distribution of the Markov chains at step $k$; see \Cref{fig:dWL} for an illustration.

\begin{figure}[htb!]
    \centering
\includegraphics[width=0.3\textwidth]{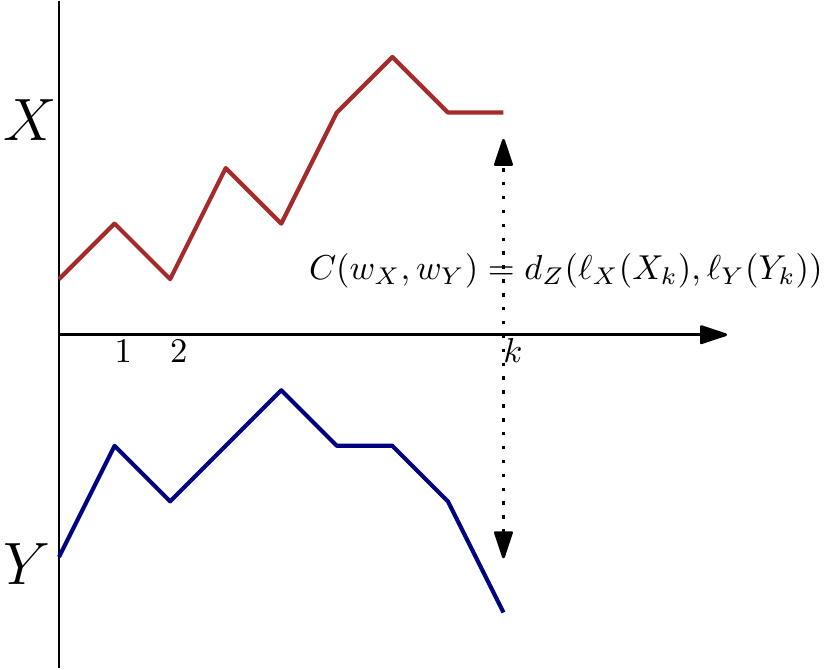}
    \caption{The WL distance of depth $k$ is almost like the Wasserstein distance between path distributions where the cost $C(w_X,w_Y)$ between paths is given by the distance between labels of the two random walks at step $k$: $d_Z(\ell_X({X}_k(w_X),\ell_Y({Y}_k(w_Y)))$.}
    \label{fig:dWL}
\end{figure}

\subsubsection{In terms of the language of random variables}
A Markov chain can of course be realized on probability spaces beyond the path space.
This flexibility can be useful in situations where some random variables need to be independent of the Markov chain.
We consider this slightly more generalized setting in this section and reformulate the definition of Markovian couplings and also \Cref{thm:dwlstochastic}. It is worth noting that skipping this section will not impede understanding of the other parts of the paper.

Note that a common way for describing a measure Markov chain $\mathcal{X}=(X,m_\bullet^X,\mu_X)$ is through a sequence of random variables $\{X_i:\Omega\rightarrow X\}_{i\in \N}$ from some probability space $(\Omega,\mathbb{P})$ such that $\mathrm{law}(X_0)=\mu_X$ and $\mathbb{P}(X_{i+1}=x'|X_i=x)=m_x^X(x')$ for any $x,x'\in X$.
Just to distinguish from our terminology of measure Markov chains $\mathcal{X}$, we call any such sequence of random variables as a \emph{stochastic realization} of the MMC $\mathcal{X}$.
In this way, the projection maps $\{X_i:(X^\N,\PP_X)\rightarrow X\}_{i\in\N}$ on the path space give a concrete example of stochastic realization. 

We now generalize the Markovian couplings from \Cref{def: markov coupling} to the case of general stochastic realizations in a way similar to \Cref{def: coupling rv}.
\begin{definition}[Markovian couplings for stochastic realizations]
Given MMCs $\mathcal{X}=(X,m_\bullet^X,\mu_X)$ and $\mathcal{Y}=(Y,m_\bullet^Y,\mu_Y)$, let $\{X_i\}_{i\in \N}$ and $\{Y_i\}_{i\in \N}$ be stochastic realizations on the \emph{same} probability space $(\Omega,\mathbb{P})$. We call the sequence of random variables $\{(X_i,Y_i)\}_{i\in \N}$ a \emph{Markovian coupling} between $\mathcal{X}$ and $\mathcal{Y}$, if the following properties hold: 

\begin{enumerate}[leftmargin=*]
\item The sequence of random variables $\{(X_i,Y_i)\}_{i\in \N}$ satisfies the Markov property, i.e., for any $i\in \N$, $x_0,\ldots,x_{i}\in X$ and $y_0,\ldots,y_{i}\in Y$, one has that
    \begin{align*}
&\mathbb{P}\big((X_i,Y_i)=(x_i,y_i)|\{(X_j,Y_j)=(x_j,y_j)\}_{0\leq j\leq i-1}\big)\\
&=\mathbb{P}\big((X_i,Y_i)=(x_i,y_i)|(X_{i-1},Y_{i-1})=(x_{i-1},y_{i-1})\big).
    \end{align*}
    \item For any $i\in\N$, $x\in X$ and $y\in Y$, if we construct a probability measure in $\mathcal{P}(X\times Y)$ as follows
\begin{align*}
    &(\nu_{i+1})_{x,y}(x',y'):=\\
    &\mathbb{P}\big((X_{i+1},Y_{i+1})=(x',y')|(X_{i},Y_{i})=(x,y)\big),
\end{align*}

then $(\nu_{i+1})_{x,y}\in\mathcal{C}(m_x^X,m_y^Y)$, i.e., $(\nu_{i+1})_{\bullet,\bullet}$ is a 1-step coupling between $m_\bullet^X$ and $m_\bullet^Y$.
\end{enumerate}
\end{definition}
Note that the above definition is almost identical to the definition of Markovian couplings between path distribution (see \Cref{def: markov coupling}).
Hence, it is not surprising that \Cref{thm:dwlstochastic} can be rephrased in terms of stochastic realizations; see also the similarity between \Cref{eq:expected dw} and the corollary below:

\begin{corollary}\label{thm:dwlstochastic-expectation}
For any $Z$-LMMCs $(\mX,\ell_X)$ and $(\mY,\ell_Y)$, we have that
\[\dWLk\lc(\mX,\ell_X),(\mY,\ell_Y)\rc=\inf_{\left\{\lc{X}_i,{Y}_i\rc\right\}_{i\in \N}}\mathbb{E}\, d_Z(\ell_X({X}_k),\ell_Y({Y}_k))\]
where the infimum is taken over all possible Markovian couplings between $\mathcal{X}$ and $\mathcal{Y}$.
\end{corollary}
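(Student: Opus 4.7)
The plan is to deduce the corollary from Theorem~\ref{thm:dwlstochastic} by showing that the infimum over path-space Markovian couplings (as in Definition~\ref{def: markov coupling}) and the infimum over stochastic-realization Markovian couplings yield the same value. So what I need to establish is a two-way correspondence that preserves the expected cost $\mathbb{E}\, d_Z(\ell_X(X_k),\ell_Y(Y_k))$.

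For the inequality $\geq$, I would observe that every path-space Markovian coupling $\PP\in\mathcal{C}_\mathcal{M}(\PP_X,\PP_Y)$ already is a stochastic-realization Markovian coupling in the sense of the definition just before the corollary: take $(\Omega,\mathbb{P}):=(X^\N\times Y^\N,\PP)$ and let $X_i,Y_i$ be the coordinate projection maps. The two axioms (Markov property and the $1$-step coupling condition on the conditional transition kernels) are inherited verbatim from Definition~\ref{def: markov coupling}, and $\law(X_i)=\mu_X$ for $i$ via the marginal condition on $\PP$ combined with stationarity encoded in the Markov-kernel condition. Hence every path-space coupling contributes to the infimum on the right-hand side of the corollary, so by Theorem~\ref{thm:dwlstochastic} we get $\dWLk\geq $ RHS.

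For the reverse inequality, given an arbitrary stochastic-realization Markovian coupling $\{(X_i,Y_i)\}_{i\in\N}$ on some $(\Omega,\mathbb{P})$, I would push it forward to the path space via the map
\[\Phi:\Omega\rightarrow X^\N\times Y^\N,\quad \omega\mapsto \bigl((X_i(\omega))_{i\in\N},(Y_i(\omega))_{i\in\N}\bigr),\]
and set $\PP:=\Phi_\#\mathbb{P}$. I would then verify: (i) $\PP\in\mathcal{C}(\PP_X,\PP_Y)$, using that the finite-dimensional distributions of $(X_i)_{i\in\N}$ agree with those of $\PP_X$ (since the $\{X_i\}$ form a stochastic realization of $\mathcal{X}$) and similarly for $\{Y_i\}$, followed by the Kolmogorov extension uniqueness; (ii) $\PP$ is Markovian in the sense of Definition~\ref{def: markov coupling}, which is immediate since conditional distributions of the coordinate projections under $\PP$ coincide with the conditional distributions of $(X_i,Y_i)$ under $\mathbb{P}$, so the Markov property and the $1$-step coupling property transfer directly. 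Finally, the change-of-variables identity
\[\mathbb{E}_\mathbb{P}\,d_Z(\ell_X(X_k),\ell_Y(Y_k))=\mathbb{E}_\PP\,d_Z(\ell_X(X_k),\ell_Y(Y_k))\]
(where on the right $X_k,Y_k$ are the projection maps) shows that the two infima evaluate the same cost. Combining this with Theorem~\ref{thm:dwlstochastic} gives the opposite inequality and hence equality.

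The only delicate point, and what I would regard as the main technical step, is verifying item~(i) cleanly: one has to argue that $\Phi_\#\mathbb{P}$ equals $\PP_X$ after projection to $X^\N$, which reduces to matching finite-dimensional marginals and then invoking the Kolmogorov extension theorem (as already used in Remark~\ref{rmk:MarkCoupExist}). Everything else is a direct translation between the two equivalent descriptions of a Markov chain and requires no new ideas beyond those already set up in Section~\ref{sec:markovian coupling}.
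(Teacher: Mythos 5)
Your proposal is correct and follows essentially the route the paper intends: the corollary is presented as an immediate rephrasing of \Cref{thm:dwlstochastic}, and the two-way correspondence you spell out (a path-space Markovian coupling is itself a stochastic-realization coupling on $(X^\N\times Y^\N,\PP)$, while an arbitrary stochastic-realization coupling pushes forward under $\Phi$ to a path-space Markovian coupling with the same expected cost) is precisely the verification the paper leaves implicit. No gaps.
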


\subsection{The Connection with Causal Optimal Transport}\label{sec:COT}

Finally, we comment that the Markovian coupling characterization formula given in the previous section is deeply connected with the notion of Causal Optimal Transport (COT) \citep{lassalle2018causal}. We elucidate this point in this section.

Note that COT has already found applications in mathematical finance \citep{glanzer2019incorporating,backhoff2020adapted} and in machine learning \citep{xu2020cot,xu2022CCOT,klemmer2022spate}. 
As COT lies in the framework of Optimal Transport, multiple Sinkhorn algorithms have been proposed in the literature \citep{pichler2022nested,eckstein2022computational} for accelerating computations. The connection between the WL distance and COT that we study in this section may eventually result in efficient algorithms for computing/approximating the WL distance. We leave this for future study. 
Furthermore, we believe that this connection can also be useful for extending the WL distance to the case when input LMMCs have different labeling space $Z$'s.

We first review some basics of COT. Given two (finite) spaces $X$ and $Y$ and an integer $k\in\N$, consider the product spaces $X^{k+1}$ and $Y^{k+1}$ (these product spaces are viewed as spaces of paths in $X$ or $Y$ of length $k+1$)\footnote{The framework of COT can incorporate the setting of both discrete and continuous paths of finite or infinite length. In this paper, we will focus on paths of finite length for simplicity and clarity of our presentation.}. Let $\alpha\in\mathcal{P}(X^{k+1})$ and $\beta\in \mathcal{P}(Y^{k+1})$. Now we are ready to define the notion of (bi)causal coupling between $\alpha$ and $\beta$.

\begin{definition}[(bi)causal coupling]
A coupling measure $\pi\in\cpl(\alpha,\beta)$ is said to be \emph{causal} from $\alpha$ to $\beta$ if it satisfies
$$\pi((y_0,\dots,y_l)\vert (x_0,\dots,x_k))=\pi((y_0,\dots,y_l)\vert (x_0,\dots,x_l))$$
for all $l\in\{0,\dots,k\}$ and $(x_0,\dots,x_k)\in X^{k+1}$. 
Here, the notataion $\pi(\cdot|\cdot)$ denotes conditional probability. This implies that, at time $l$ and given the past $(x_0,\dots,x_l)$ of $X$, the distribution of $y_l$ does not depend on the future $(x_{l+1},\dots,x_k)$ of $X$.

Moreover, $\pi$ is said to be \emph{bicausal} if it is causal both from $\alpha$ to $\beta$ and from $\beta$ to $\alpha$. The set of bicausal couplings between $\alpha$ and $\beta$ will be denoted by $\cpl_{\mathrm{bc}}(\alpha,\beta)$.
\end{definition}

Now, let $c:X^{k+1}\times Y^{k+1}\rightarrow \R_+$ be any cost function. Then, the \emph{bicausal OT problem} is formulated as follows:
\[d^c(\alpha,\beta):=\inf_{\pi\in\cpl_{\mathrm{bc}}(\alpha,\beta)}\int_{X^{k+1}\times Y^{k+1}}c(x,y)\pi(dx\times dy).\]
Similarly, one can formulate the causal OT (COT) problem by considering causal couplings. We use the acronym COT to refer to both causal and bicausal OT problems.

It turns out that Markovian couplings naturally give rise to bicausal couplings by restricting paths with infinite time steps from $\N$ to paths with finite time steps.
More precisely, given two MMCs $\mathcal{X}=(X,m_\bullet^X,\mu_X)$ and $\mathcal{Y}=(Y,m_\bullet^Y,\mu_X)$, we consider any Markovian coupling $\PP\in\mathcal{C}_\mathcal{M}(\PP_X,\PP_Y)$ {(cf. \Cref{def: markov coupling})}. 
Then, for any $k\in\N$, we let $\alpha^k:=\mathrm{law}((X_0,\ldots,X_k))\in\mathcal{P}(X^{k+1})$ and let $\beta^k:=\mathrm{law}((Y_0,\ldots,Y_k))\in\mathcal{P}(Y^{k+1})$.
Recall that here $\law((X_0,\ldots,X_k)):=(X_0,\ldots,X_k)_\#\PP$ and $\mathrm{law}((Y_0,\ldots,Y_k))$ is similarly defined. 
In fact, it is easy to see that $\alpha^k$ and $\beta^k$ can be expressed explicitly as follows. 
\begin{align*}
    \alpha^k((x_0,x_1,\ldots x_k))&= m_{x_{k-1}}^X(x_k)\cdots m_{x_0}^{X}(x_1)\mu_X(x_0),\\
    \beta^k((y_0,y_1,\ldots y_k)) &= m_{y_{k-1}}^Y(y_k)\cdots m_{y_0}^{Y}(y_1)\mu_Y(y_0).
\end{align*}
Now, we further let 
\[\pi^k:=\mathrm{law}((X_0,\ldots,X_k,Y_0,\ldots,Y_k))\in\mathcal{P}(X^{k+1}\times Y^{k+1}).\]
Then, we have that
\begin{lemma}\label{lemma:MCtoBC}
$\pi^k$ is a bicausal coupling between $\alpha^k$ and $\beta^k$.
\end{lemma}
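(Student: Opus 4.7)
The plan is to verify directly the three conditions defining a bicausal coupling: that the marginals of $\pi^k$ are $\alpha^k$ and $\beta^k$, and that the causality condition holds in both directions. The marginal condition is immediate: since $\PP\in\cpl(\PP_X,\PP_Y)$, we have $(X_0,\ldots,X_k)_\#\PP=(X_0,\ldots,X_k)_\#\PP_X=\alpha^k$, and symmetrically $(Y_0,\ldots,Y_k)_\#\PP=\beta^k$. So $\pi^k\in\cpl(\alpha^k,\beta^k)$, and only causality remains.

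For causality from $\alpha^k$ to $\beta^k$, I would compute the conditional $\pi^k((y_0,\ldots,y_l)\mid(x_0,\ldots,x_k))$ explicitly and show that the dependence on $(x_{l+1},\ldots,x_k)$ drops out. The Markov property of $\{(X_i,Y_i)\}_i$ under $\PP$, combined with the 1-step coupling structure of the transition kernels $(\nu_{i+1})_{\bullet,\bullet}$, yields the factorization
\begin{equation*}
\PP\big((X_i,Y_i)=(x_i,y_i),\,0\le i\le k\big)=\gamma(x_0,y_0)\prod_{i=0}^{k-1}(\nu_{i+1})_{x_i,y_i}(x_{i+1},y_{i+1}),
\end{equation*}
where $\gamma:=\law((X_0,Y_0))\in\cpl(\mu_X,\mu_Y)$. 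Summing this joint mass over $y_{l+1},\ldots,y_k$ telescopes: starting from $i=k-1$ and proceeding downward, each sum $\sum_{y_{i+1}}(\nu_{i+1})_{x_i,y_i}(x_{i+1},y_{i+1})$ equals $m_{x_i}^X(x_{i+1})$, because $(\nu_{i+1})_{x_i,y_i}\in\cpl(m_{x_i}^X,m_{y_i}^Y)$, and crucially the resulting factor no longer depends on $y_i$, allowing the next sum to proceed. This produces
\begin{equation*}
\gamma(x_0,y_0)\prod_{i=0}^{l-1}(\nu_{i+1})_{x_i,y_i}(x_{i+1},y_{i+1})\cdot\prod_{i=l}^{k-1}m_{x_i}^X(x_{i+1})
\end{equation*}
as the joint mass of $\{X_i=x_i\}_{0\le i\le k}$ and $\{Y_j=y_j\}_{0\le j\le l}$.

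Dividing by $\alpha^k(x_0,\ldots,x_k)=\mu_X(x_0)\prod_{i=0}^{k-1}m_{x_i}^X(x_{i+1})$, the tail product $\prod_{i=l}^{k-1}m_{x_i}^X(x_{i+1})$ cancels, and what remains depends only on $(x_0,\ldots,x_l)$ for the fixed $(y_0,\ldots,y_l)$. This is exactly the causal condition $\pi^k((y_0,\ldots,y_l)\mid(x_0,\ldots,x_k))=\pi^k((y_0,\ldots,y_l)\mid(x_0,\ldots,x_l))$. Causality from $\beta^k$ to $\alpha^k$ follows by the symmetric argument, swapping the roles of $X$ and $Y$ (and using that $(\nu_{i+1})_{x_i,y_i}$ has second marginal $m_{y_i}^Y$). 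Hence $\pi^k\in\cpl_{\mathrm{bc}}(\alpha^k,\beta^k)$.

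The main obstacle is the iterated marginalization: one must check that at each sum-out step the coupling factor genuinely collapses to the $X$-marginal $m_{x_i}^X(x_{i+1})$ independently of $y_i$, which is exactly the defining property of a 1-step coupling. Everything else reduces to a direct algebraic cancellation. Standard conventions handle the degenerate situation where $\alpha^k(x_0,\ldots,x_k)=0$, since the conditional probability need only be specified on the support of $\alpha^k$ (equivalently, we are in finite state spaces so all computations above are just sums of nonnegative numbers).
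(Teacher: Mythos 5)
Your proposal is correct and follows essentially the same route as the paper's proof: both verify the marginals directly and then establish causality by showing that, thanks to the Markov property and the fact that each transition kernel $(\nu_{i+1})_{x,y}$ is a coupling of $m_x^X$ and $m_y^Y$, the conditional law of the future of $X$ given the joint past collapses to $\prod_i m_{x_i}^X(x_{i+1})$, so the dependence on $(x_{l+1},\dots,x_k)$ cancels. Your telescoping marginalization of the explicit joint factorization is just a more spelled-out version of the ratio cancellation in the paper's argument.
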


Now, we consider the following cost function $c^k:X^{k+1}\times Y^{k+1}\rightarrow \R_+$ defined by 
\begin{equation}\label{ck}
   ((x_0,\ldots,x_k),(y_0,\ldots,y_k))\mapsto d_Z(\ell_X(x_k),\ell_Y(y_k)) 
\end{equation}
and state the following theorem.

\begin{theorem}\label{thm:dwl cot}
For any $Z$-LMMCs $(\mX,\ell_X)$ and $(\mY,\ell_Y)$, we have that
\[\dWLk\lc(\mX,\ell_X),(\mY,\ell_Y)\rc=d^{c^k}(\alpha^k,\beta^k).\]
\end{theorem}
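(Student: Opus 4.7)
The plan is to translate the bicausal OT formulation into the Markovian-coupling characterization of $\dWLk$ furnished by \Cref{thm:dwlstochastic}. By that theorem, it suffices to prove
\[\inf_{\PP \in \mathcal{C}_{\mathcal{M}}(\PP_X,\PP_Y)} \mathbb{E}_\PP\, d_Z(\ell_X(X_k),\ell_Y(Y_k)) \;=\; \inf_{\pi \in \mathcal{C}_{\mathrm{bc}}(\alpha^k,\beta^k)} \int c^k \, d\pi.\]
Because the cost $c^k$ depends only on the $k$-th coordinate of each finite path, any coupling---on $X^\N\times Y^\N$ or on $X^{k+1}\times Y^{k+1}$---that yields the same joint law at step $k$ produces the same expected cost, so the argument reduces to exhibiting a cost-preserving correspondence between Markovian couplings and bicausal couplings.

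One direction, $d^{c^k}(\alpha^k,\beta^k) \le \dWLk((\mX,\ell_X),(\mY,\ell_Y))$, is immediate from \Cref{lemma:MCtoBC}: any $\PP \in \mathcal{C}_{\mathcal{M}}(\PP_X,\PP_Y)$ yields the bicausal coupling $\pi^k := (X_0,\ldots,X_k,Y_0,\ldots,Y_k)_\# \PP \in \mathcal{C}_{\mathrm{bc}}(\alpha^k,\beta^k)$, and $\int c^k\, d\pi^k = \mathbb{E}_\PP\, d_Z(\ell_X(X_k),\ell_Y(Y_k))$ since $c^k$ only evaluates the $k$-th coordinate.

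For the reverse inequality, starting from any bicausal $\pi \in \mathcal{C}_{\mathrm{bc}}(\alpha^k,\beta^k)$, I would build a Markovian coupling $\PP \in \mathcal{C}_{\mathcal{M}}(\PP_X,\PP_Y)$ whose law on the first $k+1$ coordinates coincides with $\pi$. The construction disintegrates $\pi$ in time into one-step kernels
\[(\nu_{i+1})_{x,y}(x',y') := \pi\bigl((X_{i+1},Y_{i+1}) = (x',y')\,\bigm|\,(X_i,Y_i) = (x,y)\bigr)\]
for $i = 0,\ldots,k-1$, together with the initial coupling $\gamma := (X_0,Y_0)_\# \pi \in \mathcal{C}(\mu_X,\mu_Y)$; for $i \ge k$ one extends arbitrarily, e.g.\ with the independent coupling $m_x^X \otimes m_y^Y$. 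By \Cref{rmk:MarkCoupExist}, these data assemble via the Kolmogorov extension theorem into a Markovian coupling $\PP$ whose first $k+1$ marginals reproduce $\pi$, and hence $\mathbb{E}_\PP\, d_Z(\ell_X(X_k),\ell_Y(Y_k)) = \int c^k\, d\pi$.

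The main obstacle is to verify that each kernel $(\nu_{i+1})_{x,y}$ is genuinely a 1-step coupling between $m_x^X$ and $m_y^Y$, and that the Markovian coupling so constructed indeed reproduces $\pi$ on the first $k+1$ steps. Checking that the $X$-marginal of $(\nu_{i+1})_{x,y}$ equals $m_x^X$ reduces, after the tower property, to showing $\pi(X_{i+1} = x' \mid X_0,\ldots,X_i, Y_0,\ldots,Y_i) = m_{X_i}^X(x')$; this follows from the causal property from $\alpha$ to $\beta$ (which, via a Bayes manipulation, yields conditional independence of $X_{i+1}$ from $Y_0,\ldots,Y_i$ given $X_0,\ldots,X_i$) combined with the Markovianity of $\alpha^k$. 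The $Y$-marginal is handled symmetrically using causality from $\beta$ to $\alpha$. This full disintegration statement---that a bicausal coupling between two Markov laws is itself a joint Markov chain whose transitions couple the marginal transitions---is standard in the causal OT literature (cf.\ \cite{backhoff2017causal}) and may be either imported directly or established as a short self-contained lemma.
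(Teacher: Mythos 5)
Your first inequality is exactly the paper's: push a Markovian coupling forward to the first $k+1$ coordinates and invoke \Cref{lemma:MCtoBC}. For the reverse inequality your route genuinely differs from the paper's. The paper never Markovianizes an arbitrary bicausal coupling; it instead uses the dynamic-programming representation of $d^{c^k}$ (\Cref{eq:dcaltdef}, imported from Proposition 5.2 of \citet{backhoff2017causal}) together with \Cref{VWsame} to collapse the value functions to one-step quantities, then picks optimal one-step couplings level by level and assembles them into a Markovian coupling via \Cref{rmk:MarkCoupExist}. Your alternative---disintegrate an arbitrary $\pi\in\cpl_{\mathrm{bc}}(\alpha^k,\beta^k)$ into its one-step kernels $(\nu_{i+1})_{\bullet,\bullet}$ and initial coupling $\gamma$, then rebuild a Markovian coupling---avoids the value-function machinery entirely, and the crux you isolate is correct: causality from $\alpha^k$ to $\beta^k$ gives conditional independence of $X_{i+1}$ from $(Y_0,\dots,Y_i)$ given $(X_0,\dots,X_i)$, which combined with the Markovianity of $\alpha^k$ and the tower property shows that the $X$-marginal of $(\nu_{i+1})_{x,y}$ is $m^X_x$, and symmetrically for $Y$. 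This buys a self-contained proof that does not lean on the DPP result the paper cites.

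However, one intermediate claim is false as stated: the Markovian coupling $\PP$ you build does not in general reproduce $\pi$ on the first $k+1$ coordinates, because a bicausal coupling between two Markov laws need not itself be Markov. For a counterexample, let $X_0,X_1,X_2$ be i.i.d.\ fair coin flips and set $Y_0=X_0$, $Y_1=X_1$, $Y_2=X_0\oplus X_1\oplus X_2$; both marginals are the i.i.d.\ law and the coupling is bicausal, yet the conditional law of $(X_2,Y_2)$ given $(X_0,Y_0),(X_1,Y_1)$ depends on $X_0$, so the pair process is not Markov and Markovianization changes the joint law of the first three pairs. Consequently the ``standard fact'' you invoke at the end (that a bicausal coupling of Markov laws is itself a joint Markov chain) is not a fact. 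The repair is one line and your conclusion survives: since $c^k$ only sees the time-$k$ coordinate, you only need $\law((X_k,Y_k))$ to agree under $\PP$ and $\pi$, and single-time marginals are preserved by Markovianization---by induction on $i$, the law of $(X_{i+1},Y_{i+1})$ under either measure is obtained by integrating the same kernels $(\nu_{i+1})_{\bullet,\bullet}$ against the law of $(X_i,Y_i)$, with null states contributing nothing. With that substitution your argument is complete and correct.
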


In order to prove Theorem \ref{thm:dwl cot}, we first note that $d^{c^k}$ can be recursively computed.
For each $i=0,\dots,k$, we define $V_i:X^{i+1}\times Y^{i+1}\rightarrow\R$ recursively as follows:
\begin{align*}
    &V_{k}(x_0,\dots,x_{k},y_0,\dots,y_{k}):=d_Z(\ell_X(x_k),\ell_Y(y_k))\text{, and}\\
    &V_{i-1}(x_0,\dots,x_{i-1},y_0,\dots,y_{i-1}):=\\
    &\inf_{\nu^i\in\mathcal{C}(\alpha^k_{x_0,\dots,x_{i-1}},\beta^k_{y_0,\dots,y_{i-1}})}\!\!\!\!\!\!\!\!\mathbb{E}_{\nu^i}\,V_i(X_0,\ldots,X_{i-1},Y_0,\ldots,Y_{i-1})
\end{align*}
where $\alpha^k_{x_0,\dots,x_{i-1}}$ (resp. $\beta^k_{y_0,\dots,y_{i-1}}$) is the probability measure on $X$ (resp. $Y$) defined as the conditional probability measure $\alpha^k_{x_0,\dots,x_{i-1}}(x_i):=\alpha^k(x_i\vert x_0,\dots,x_{i-1})$ (resp. $\beta^k_{y_0,\dots,y_{i-1}}(y_i):=\beta^k(y_i\vert y_0,\dots,y_{i-1})$)\footnote{ It might happen that the event $(X_0,\ldots,X_{i-1})=(x_0,\ldots,x_{i-1})$ is null, then the conditional probability is not defined. In this case, we simply let $\alpha^k_{x_0,\dots,x_{i-1}}$ be any probability measure on $X$. We adopt the same convention for $\beta^k$.  An alternative and more rigorous way of dealing with this is to define $\alpha^k_{x_0,\dots,x_{i-1}}$ as the disintegration of $\alpha^k$ w.r.t. $(x_0,\ldots,x_{i-1})$ {(cf. \citep[Section 5.3]{ambrosio2005gradient})}. However, for simplicity of presentation, we avoid such a definition.}. 

Then, as a direct consequence of Proposition 5.2 in \citep{backhoff2017causal},
one has the following equality 
\begin{equation}\label{eq:dcaltdef}
    d^{c^k}(\alpha^k,\beta^k)=\inf_{\gamma^0\in\mathcal{C}(\mu_X,\mu_Y)}\int_{X\times Y}V_0(x_0,y_0)\,\gamma^0(dx_0\times dy_0).
\end{equation}

As we are dealing with Markov chains, it is expected that $V_i(x_0,\ldots,x_{i-1},y_0,\ldots,y_{i-1})$ is independent of the past, i.e., independent of $(x_0,\ldots,x_{i-2},y_0,\ldots,y_{i-2})$. 
In order to show that this is indeed the case, for each $i=0,\dots,k-1$, we define $W_i:X\times Y\rightarrow\R$ in a way similar to how we defined $V_i$:
\begin{align*}
    &W_{k}(x_{k},y_{k}):=d_Z(\ell_X(x_k),\ell_Y(y_k))\text{, and}\\
    &W_{i-1}(x_{i-1},y_{i-1}):=\inf_{\nu^i\in\mathcal{C}(\alpha^k_{x_{i-1}},\beta^k_{y_{i-1}})}\mathbb{E}_{\nu^i}\,W_i(X_i,Y_i)
\end{align*}
for each $i=0,\dots,k-1$ where $\alpha^k_{x_{i-1}}$ (resp. $\beta^k_{y_{i-1}}$) is the probability measure on $X$ (resp. $Y$) defined as the conditional probability measure $\alpha^k_{x_{i-1}}(x_i):=\alpha^k(x_i\vert x_{i-1})$ (resp. $\beta^k_{y_{i-1}}(y_i):=\beta^k(y_i\vert y_{i-1})$).

\begin{lemma}\label{VWsame}
For all $i\in\{0,\dots,k-1\}$ and $(x_0,\dots,x_i,y_0,\dots,y_i)\in X^{i+1}\times Y^{i+1}$, we have the following equality:
$$V_i(x_0,\dots,x_i,y_0,\dots,y_i)=W_i(x_i,y_i).$$
\end{lemma}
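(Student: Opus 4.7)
The plan is to prove the equality by backward induction on $i$, running from $i=k$ down to $i=0$. The essential insight is that the conditional distributions appearing in the definition of $V_i$ collapse to depend only on the most recent state, thanks to the Markov property of $\alpha^k$ and $\beta^k$.

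First I would verify the base case. At $i=k$ the definition gives $V_k(x_0,\dots,x_k,y_0,\dots,y_k)=d_Z(\ell_X(x_k),\ell_Y(y_k))=W_k(x_k,y_k)$, so the claim holds. (Note: although the lemma statement is indexed over $i\in\{0,\dots,k-1\}$, the recursion for $V_{i-1}$ requires the identity at level $i=k$, which we just verified.)

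For the inductive step, suppose the claim holds at level $i$, so that $V_i(x_0,\dots,x_i,y_0,\dots,y_i)=W_i(x_i,y_i)$ for every tuple. From the explicit product formula
\begin{equation*}
\alpha^k(x_0,\dots,x_k)=m^X_{x_{k-1}}(x_k)\cdots m^X_{x_0}(x_1)\,\mu_X(x_0),
\end{equation*}
a short computation of the conditional probability yields
\begin{equation*}
\alpha^k_{x_0,\dots,x_{i-1}}(x_i)=m^X_{x_{i-1}}(x_i)=\alpha^k_{x_{i-1}}(x_i),
\end{equation*}
and similarly $\beta^k_{y_0,\dots,y_{i-1}}=\beta^k_{y_{i-1}}=m^Y_{y_{i-1}}$. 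Consequently the feasible set of couplings $\mathcal{C}(\alpha^k_{x_0,\dots,x_{i-1}},\beta^k_{y_0,\dots,y_{i-1}})$ in the definition of $V_{i-1}$ coincides with $\mathcal{C}(\alpha^k_{x_{i-1}},\beta^k_{y_{i-1}})$ appearing in the definition of $W_{i-1}$.

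Finally I would substitute the inductive hypothesis into the expectation. Reading the argument of $V_i$ correctly, the inner variables in the expectation are $(X_i,Y_i)\sim\nu^i$ while $(x_0,\dots,x_{i-1},y_0,\dots,y_{i-1})$ are fixed. Using the hypothesis, $V_i(x_0,\dots,x_{i-1},X_i,y_0,\dots,y_{i-1},Y_i)=W_i(X_i,Y_i)$, which is independent of the past. Therefore
\begin{equation*}
V_{i-1}(x_0,\dots,x_{i-1},y_0,\dots,y_{i-1})=\inf_{\nu^i\in\mathcal{C}(m^X_{x_{i-1}},m^Y_{y_{i-1}})}\mathbb{E}_{\nu^i}W_i(X_i,Y_i)=W_{i-1}(x_{i-1},y_{i-1}),
\end{equation*}
completing the induction. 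The main (and only) subtlety is the reduction of the conditional marginals, which is immediate from the Markov structure; no hard analytic obstacle arises.
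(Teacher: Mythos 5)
Your proof is correct and follows essentially the same route as the paper's: backward induction whose only substantive ingredient is the observation that $\alpha^k_{x_0,\dots,x_{i-1}}=\alpha^k_{x_{i-1}}=m^X_{x_{i-1}}$ and $\beta^k_{y_0,\dots,y_{i-1}}=\beta^k_{y_{i-1}}=m^Y_{y_{i-1}}$ by the Markov (product) structure of $\alpha^k$ and $\beta^k$. You simply write out the inductive substitution that the paper declares straightforward and omits.
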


Finally, as a direct consequence of \Cref{thm:dwlstochastic}, \Cref{lemma:MCtoBC}, and \Cref{VWsame}, one can prove \Cref{thm:dwl cot}. See \Cref{sec:proof of cot} for a complete proof.

\begin{remark}[Markov chains in the label space]
In fact, one can also transform a $Z$-LMMC $(\mX,\ell_X)$ into a Markov chain in the label space $Z$, if we require that $\ell_X:X\rightarrow Z$ is injective. 
In this way, the WL distance can be also interpreted as solving a COT problem in the label space $Z$. See \Cref{sec: wl in Z} for more details.
\end{remark}

\section{Implications to Message Passing GNNs}\label{sec: GNN}

In this section, we will provide some results on the use of the WL distance for studying the universality and the Lipschitz property of message passing GNNs. In \citep{chen2022weisfeilerlehman}, the authors introduced a neural network framework, named Markov chain neural network (MCNN), on the collection of all LMMCs and established that this framework is universal w.r.t. the WL distance on this collection. It is briefly touched upon in \citep{chen2022weisfeilerlehman} that a MCNN will reduce to a standard Message Passing Graph Neural Network (MP-GNN) when the input LMMCs are induced from labeled graphs. 
In this section, we will restrict ourselves to graph induced LMMCs and study properties of MP-GNNs via the use of the WL distances. In particular, we show that a special yet common type of MP-GNNs
(1) has the same power as the WL distance in distinguishing labeled graphs,
(2) is Lipschitz w.r.t. change of input labeled graphs through the lens of the WL distance and (3) satisfies the universal approximation property, i.e., any continuous function defined on a compact space of labeled graphs can be approximated by such MP-GNNs.

\subsection{Graph induced LMMCs}\label{sec: graph induced LMMC}
We first introduce some terminology related to graphs. In this paper, we consider finite edge weighted graph $G=(V,E, w:E\rightarrow (0,\infty))$ where $V$ denotes the vertex set, $E$ denotes the edge set and $w$ denotes the edge weight function. For each $v\in V$, we define its degree as $\deg(v):=\sum_{v'\in V}w_{vv'}$.
In order to distinguish between multiple graphs, we sometimes include the graph symbol $G$ in subscripts when referring to these notions, such as in the case of $V_G$ or $w_G$. This helps to clarify which graph we are referring to in cases where multiple graphs are involved.

Now, given a finite edge weighted graph $G$ endowed with a label function $\ell_G:V\rightarrow Z$, one can generate a LMMC as follows.
Given $q\in(0,1)$, we associate to the vertex set $V$ a Markov kernel $m_\bullet^{G,q}$ as follows: for any $v\in V$,
\[m_v^{G,q}:=\begin{cases}
    q\delta_v+\frac{1-q}{\deg(v)}\sum_{v'\in N_G(v)}w_{vv'}\delta_{v'}, & \deg(v)>0;\\
    \delta_v, & \deg(v)=0.
\end{cases}\]
For each vertex $v\in V$, we introduce the following modified notion of degree
\begin{equation}\label{eq: modified degree}
    \overline{\deg}(v):=\begin{cases}
    \deg(v)&\deg(v)>0;\\
    1&\deg(v)=0.
\end{cases}
\end{equation}
Based on $\overline{\deg}$, we introduce the following probability measure on $V$: $\mu_G:=\sum_{v\in V}\frac{\overline{\deg}(v)}{\sum_{v'\in V}\overline{\deg}(v')}\delta_v$. 
Then for any $q\in (0,1)$, $\mu_G$ is a stationary distribution w.r.t. the Markov kernel $m_\bullet^{G,q}$. Then, we let $\mX_q(G):=(V,m_\bullet^{G,q},\mu_G)$ and we say that the LMMC $(\mX_q(G),\ell_G)$ is induced by the labeled graph $(G,\ell_G)$.

Finally, we let $\mathcal{G}(Z)$ denote the collection of all $Z$-labeled graphs and given $q>0$, let $I_q:\mathcal{G}(Z)\rightarrow \mathcal{M}^L(Z)$ denote the map which sends a $Z$-labeled graph into a $Z$-LMMC via the method described above.
Let $\mathcal{G}_q(Z):=I_q(\mathcal{G}(Z))\subseteq\mathcal{M}^L(Z)$. Then, for any $k\geq 0$, $\dWLk$ restricted on $\mathcal{G}_q(Z)$ induces a pseudo-distance, which we denote by $d_{\mathcal{G},q}^{\scriptscriptstyle{(k)}}$, on $\mathcal{G}(Z)$. We then call $d_{\mathcal{G},q}^{\scriptscriptstyle{(k)}}$ the ($q$-damped) WL distance of depth $k$ between labeled graphs.

\subsection{Message Passing Graph Neural Networks}\label{sec:MPNN}
Given $q>0$ we consider the following special type of $k$-layer MP-GNNs.
\begin{align*}
 &\text{Message Passing:}\quad\ell_G^{i+1}(v):=\\
 &\begin{cases}q\,\varphi_{i+1}(\ell_G^i(v))+\frac{1-q}{\deg(v)}\sum_{v'\in N_G(v)}{w_{vv'}}\varphi_{i+1}(\ell^i_G(v'))\\
 \qquad\text{if }\deg(v)>0,\\
 \varphi_{i+1}(\ell_G^i(v))\text{ if }\deg(v)=0\end{cases}
\end{align*}
\begin{align*}
 &\text{Readout:}\\
 &h((G,\ell_G)):=\psi\lc \sum_{v\in V}\frac{\overline{\deg}(v)}{\sum_{v'\in V} \overline{\deg}(v')}\varphi_{k+1}(\ell^k_G(v))\rc
\end{align*}
where $\varphi_i:\R^{d_{i-1}}\rightarrow\R^{d_i}$ and $\psi:\R^{d_{k+1}}\rightarrow \R$ are MLPs.

\begin{remark}\label{rmk:restriction}
Any such MP-GNN $h:\mathcal{G}(Z)\rightarrow \R$ arises as the restriction of a Markov Chain Neural Network (MCNN) $H:\mathcal{M}^L(Z)\rightarrow \R$ (see \citep[Section 4]{chen2022weisfeilerlehman}) to the collection $\mathcal{G}_q(Z)$. For more details, see \Cref{sec:MCNN}.
\end{remark}

We use ${\mathcal{N\!N}_k^q(\R^d)}$ to denote the collection of all such MP-GNNs with $k$ layers.

\paragraph{Discriminative Power of MP-GNNs} 
In addition to the Lipschitz property, we also establish that $\mathcal{N\!N}_k^q(\R^d)$ has the same discriminative power as the WL distance. 

\begin{proposition}\label{prop:zero set of NN}
Given any $(G_1,\ell_{G_1}),(G_2,\ell_{G_2})\in \mathcal{G}(\R^d)$, 
\begin{enumerate}
    \item if $d_{\mathcal{G},q}^{\scriptscriptstyle{(k)}}\!\lc(G_1,\ell_{G_1}),(G_2,\ell_{G_2})\rc =0$, then for every $h\in \mathcal{N\!N}^q_k(\R^d)$ one has that $h((G_1,\ell_{G_1}))=h((G_2,\ell_{G_2}))$;
    \item if $d_{\mathcal{G},q}^{\scriptscriptstyle{(k)}}\!\lc(G_1,\ell_{G_1}),(G_2,\ell_{G_2})\rc >0$, then there exists $h\in \mathcal{N\!N}_k(\R^d)$ such that $h((G_1,\ell_{G_1}))\neq h((G_2,\ell_{G_2}))$.
\end{enumerate}
\end{proposition}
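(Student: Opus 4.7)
Both parts of the proposition reduce to known properties of MCNNs from \citep{chen2022weisfeilerlehman} via \Cref{rmk:restriction}: every $h \in \mathcal{N\!N}_k^q(\R^d)$ is the composition of $I_q:\mathcal{G}(\R^d)\to \mathcal{G}_q(\R^d)$ with the restriction of some MCNN $H$, so that $h((G,\ell_G))=H(I_q(G,\ell_G))$.

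For item (1), the plan is to invoke the Lipschitz property of MCNNs with respect to $\dWLk$ established in \citep{chen2022weisfeilerlehman}. This yields a constant $L$ (depending on $h$) with
$$|h((G_1,\ell_{G_1})) - h((G_2,\ell_{G_2}))| \le L\cdot \dGk((G_1,\ell_{G_1}),(G_2,\ell_{G_2})),$$
which vanishes by assumption.

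For item (2), I would first produce a continuous separating function on LMMCs and then approximate it by an MCNN. A natural candidate is
$$f((\mX,\ell_X)) := \dWLk\bigl((\mX,\ell_X), I_q(G_1,\ell_{G_1})\bigr),$$
which is $1$-Lipschitz with respect to $\dWLk$ and hence continuous, with $f(I_q(G_1,\ell_{G_1}))=0$ and $f(I_q(G_2,\ell_{G_2}))=\dGk((G_1,\ell_{G_1}),(G_2,\ell_{G_2}))>0$. I would then invoke the universal approximation theorem for MCNNs from \citep{chen2022weisfeilerlehman} on the finite (hence compact) set $K := \{I_q(G_1,\ell_{G_1}), I_q(G_2,\ell_{G_2})\}$ to obtain an MCNN $H$ within $\varepsilon < \tfrac{1}{3}\dGk((G_1,\ell_{G_1}),(G_2,\ell_{G_2}))$ of $f$ on $K$, which therefore separates the two inputs. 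Restricting $H$ via \Cref{rmk:restriction} yields the desired $h \in \mathcal{N\!N}_k^q(\R^d)$ with $h((G_1,\ell_{G_1}))\neq h((G_2,\ell_{G_2}))$.

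The main technical obstacle I anticipate is verifying that the MCNN produced by the universal approximation theorem can indeed be realized within the precise architecture $\mathcal{N\!N}_k^q(\R^d)$ of \Cref{sec:MPNN}, with the same damping parameter $q$ and the same degree-weighted readout against $\mu_G$. This amounts to checking that message passing against $m_\bullet^{G,q}$ and integration against $\mu_G$ correspond exactly to the layer-wise aggregation and readout used in the MCNN construction on $\mathcal{G}_q(\R^d)$; the details of this identification are the content of \Cref{sec:MCNN} and \Cref{rmk:restriction}. Once the translation between the MCNN and MP-GNN architectures is in place, both items follow immediately from the Lipschitz and universal approximation properties recalled above.
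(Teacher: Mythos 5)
Your argument is correct, but it follows a different route from the paper. The paper disposes of both items in one line: by \Cref{rmk:restriction} every $h\in\mathcal{N\!N}_k^q(\R^d)$ factors as $h=H\circ I_q$ for an MCNN $H$, and \citep[Proposition 4.1]{chen2022weisfeilerlehman} already asserts, at the level of MCNNs, exactly the two-way relation between the zero set of $\dWLk$ and the values of MCNNs; since $\dGk((G_1,\ell_{G_1}),(G_2,\ell_{G_2}))=\dWLk(I_q(G_1,\ell_{G_1}),I_q(G_2,\ell_{G_2}))$, both items transfer immediately. You instead rebuild the statement from two more primitive ingredients: item (1) from the Lipschitz bound for MCNNs/MP-GNNs (which is legitimate — it is essentially \Cref{thm: lip}, whose proof is independent of this proposition, and it does require the standing convention that the MLPs $\varphi_i,\psi$ are Lipschitz, which is built into the $q_\varphi$ construction), and item (2) from the universal approximation theorem applied to the $1$-Lipschitz function $\dWLk(\cdot,I_q(G_1,\ell_{G_1}))$ on the two-point compact set $\{I_q(G_1,\ell_{G_1}),I_q(G_2,\ell_{G_2})\}$, followed by the same restriction step $h=H\circ I_q$. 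This is logically sound and pleasantly explicit, and you correctly flag that the only real work is the MCNN-to-MP-GNN identification, which is precisely what \Cref{sec:MCNN} (Equation \eqref{eq:h H}) supplies in both directions. The trade-off is that your item (2) invokes a strictly heavier theorem: universal approximation is itself typically proved via a Stone--Weierstrass argument whose point-separation hypothesis is the MCNN analogue of item (2), so your derivation is not circular (you use the approximation theorem as an established black box) but it travels backwards through the usual logical order, whereas the paper's citation uses the separation statement directly and needs no $\varepsilon/3$ bookkeeping.
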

\begin{proof}
    This follows directly from \Cref{rmk:restriction} and  \citep[Proposition 4.1]{chen2022weisfeilerlehman}.
\end{proof}

In fact, we establish a fact stronger than \Cref{prop:zero set of NN} under a setting similar to (but more flexible than) the one used in \citep{xu2018powerful}. 
Choose any \emph{countable} subset $Z\subseteq\R^d$ and any \emph{countable} subset $P\subseteq \R$. Let $\mathcal{G}_P(Z)$ denote the collection of all {$Z$-}labeled weighted graphs so that their edge weights are contained in $P$.

Now, we will establish that a very restricted set of MP-GNNs is sufficient to have the same discriminative power as the WL distance and, in this way, we establish a much stronger result than \Cref{prop:zero set of NN}. We let $\mathcal{N\!N}_k^{q,1}(\R^d)$ denote the collection of maps $h\in \mathcal{N\!N}_k^{q}(\R^d)$
where $\varphi_i:\R^{d_{i-1}}\rightarrow\R^{d_{i}}$ satisfies that $d_i=1$ for each $i=1,\ldots,k+1$ (note $d_0=d$).
Then, we establish the following main result:
\begin{theorem}\label{thm: better than gin}
For each $k\geq 0$, there exists $h\in\mathcal{N\!N}_k^{q,1}(\R^d)$ such that for any $(G_1,\ell_{G_1}),(G_2,\ell_{G_2})\in \mathcal{G}_P(Z)$, $d_{\mathcal{G},q}^{\scriptscriptstyle{(k)}}\!\lc(G_1,\ell_{G_1}),(G_2,\ell_{G_2})\rc >0$ iff $h((G_1,\ell_{G_1}))\neq h((G_2,\ell_{G_2}))$.
\end{theorem}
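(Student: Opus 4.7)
The strategy mirrors the GIN-style injective-hash argument of Xu et al., adapted to weighted graphs and the countable label/weight setting. We will construct a \emph{single} $h\in\mathcal{N\!N}_k^{q,1}(\R^d)$ whose layer-$i$ scalar feature $\ell^i_G(v)$ faithfully encodes the depth-$i$ WL refinement at $v$, and whose final readout faithfully encodes $\mathfrak{L}_k((\mX_q(G),\ell_G))$. The direction $\dGk=0\Rightarrow h(G_1)=h(G_2)$ is immediate from \Cref{prop:zero set of NN}, so all the work is in the forward direction.

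\textbf{Key algebraic input.} Let $\mathbb{F}\subseteq\R$ be the countable subfield generated by $q$, $1-q$, and $\{p,\,1/p:p\in P\}$. Every message-passing coefficient $w_{vv'}/\deg(v)$ and every readout weight $\overline{\deg}(v)/\sum_{v'}\overline{\deg}(v')$ arising from a graph in $\mathcal{G}_P(Z)$ lies in $\mathbb{F}$. Since $\R$ has uncountable $\mathbb{F}$-dimension, for any countable $L$ there exists $\varphi:L\to\R$ with $\varphi(L)$ $\mathbb{F}$-linearly independent; moreover, such a $\varphi$ can be realized by a scalar-output MLP via a Baire-category/genericity argument, since for each finite subset of $L$, the MLP parameters that fail linear independence form a nowhere-dense set, and preserving the property on all of $L$ only excludes a countable union of such sets.

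\textbf{Inductive construction.} Setting $L_0:=Z$, pick $\varphi_1:\R^d\to\R$ with $\varphi_1(Z)$ $\mathbb{F}$-linearly independent. After one message-passing step,
\[
\ell^1_G(v)=q\,\varphi_1(\ell_G(v))+\tfrac{1-q}{\deg(v)}\sum_{v'\in N_G(v)} w_{vv'}\,\varphi_1(\ell_G(v'))
\]
is an $\mathbb{F}$-linear combination of linearly independent reals, so reading off its coefficients recovers both $\ell_G(v)$ and the multiset $\mbl(w_{vv'}/\deg(v),\,\ell_G(v'))\mbr_{v'\in N_G(v)}$, i.e.\ exactly the depth-$1$ WL refinement at $v$. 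Let $L_1\subseteq\R$ be the (countable) set of all values $\ell^1_G(v)$ as $(G,\ell_G)$ ranges over $\mathcal{G}_P(Z)$; pick $\varphi_2:\R\to\R$ with $\varphi_2(L_1)$ $\mathbb{F}$-linearly independent and iterate to obtain $\varphi_1,\ldots,\varphi_{k+1}$ so that each $v\mapsto\ell^i_G(v)$ is a faithful scalar encoding of the depth-$i$ WL refinement. Applying the same argument to $\varphi_{k+1}(L_k)$, the readout average $\sum_v \frac{\overline{\deg}(v)}{\sum_{v'}\overline{\deg}(v')}\varphi_{k+1}(\ell^k_G(v))=\mathbb{E}_{v\sim\mu_G}\bigl[\varphi_{k+1}(\ell^k_G(v))\bigr]$ uniquely determines $(\ell^k_G)_\#\mu_G$, which under the bijection $\ell^k_G\leftrightarrow\WLh{k}{(\mX_q(G),\ell_G)}$ coincides with $\mathfrak{L}_k((\mX_q(G),\ell_G))$. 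Choosing $\psi$ injective on the (countable) range of these readout scalars completes the construction of $h$.

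\textbf{Main obstacle.} The principal technical hurdle is producing scalar-output MLPs $\varphi_i$ that simultaneously achieve $\mathbb{F}$-linear independence on the infinite sets $L_i$ across all graphs in $\mathcal{G}_P(Z)$, since universal approximation only handles finite subsets. The resolution is the genericity/Baire-category argument above, which is why the countability hypotheses on $Z$ and $P$ are essential: they make $\mathbb{F}$ and each $L_i$ countable, and the set of ``bad'' MLP parameters becomes a countable union of nowhere-dense sets. A secondary point is verifying inductively that $L_i$ remains countable, which is automatic since every element of $L_i$ is an $\mathbb{F}$-linear combination of the countable set $\varphi_i(L_{i-1})$. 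With these points in place, the rest is a routine induction.
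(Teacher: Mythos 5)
Your overall architecture is the same as the paper's: fix the countable coefficient set generated by $q$ and $P$, inductively choose scalar maps $\varphi_1,\dots,\varphi_{k+1}$ so that each layer's scalar feature separates vertices exactly as the depth-$i$ WL refinement does, note that each layer's image stays countable, and finish with an injective readout; the backward direction via \Cref{prop:zero set of NN} is also how the paper argues. The difference is the injectivity device. The paper only asks that $q_{\varphi_i}$ be injective on the countable family of measures $C_P(Z_{i-1})$ and gets such a $\varphi_i$ from a Baire-category argument in the Banach space $\mathrm{Lip}_0(\R^d)$ (each pair of distinct measures gives a nonzero bounded functional whose kernel is a proper closed, hence nowhere-dense, subspace; see \Cref{lm:restriction injective,lm:countable image}). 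You instead demand the stronger property that $\varphi_i(L_{i-1})$ be $\mathbb{F}$-linearly independent, which is indeed sufficient because all message-passing and readout weights lie in your field $\mathbb{F}$; a small overclaim is that the layer-one value recovers ``$\ell_G(v)$ and the neighbor multiset'' separately (the self term can merge with equally-labeled neighbors), but what you actually need and also assert -- recovery of the measure $(\ell_G)_\#m_v^{G,q}$, i.e.\ the paper's depth-one refinement -- is correct.

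The genuine gap is your realization step: the claim that, for each finite $F\subseteq L$, the MLP parameters for which $\{\varphi_\theta(x)\}_{x\in F}$ is $\mathbb{F}$-linearly dependent form a nowhere-dense set is not justified, and as stated it is false for ReLU networks. The bad set is a countable union, over nonzero $(c_x)\in\mathbb{F}^F$, of the zero sets of $\theta\mapsto\sum_{x\in F}c_x\varphi_\theta(x)$, and for ReLU such a zero set can contain an open set of parameters: on the open region where all hidden units are inactive on the inputs in $F$, the output is constant in $x$, so every coefficient vector with $\sum_x c_x=0$ is killed there. So ``bad parameters are meager'' needs either an analytic activation (then each constraint is a nontrivial real-analytic function of $\theta$, once you check non-degeneracy for the fixed architecture) or a different function class. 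The paper sidesteps this entirely by running the Baire argument over Lipschitz functions rather than over network parameters -- at the (acknowledged, cf.\ \Cref{rmk:GIN countable}) cost that the $\varphi_i$ it produces are continuous Lipschitz maps rather than literal MLPs. Your proof becomes correct if you either adopt that route (your countably many linear-independence constraints are again kernels of nonzero bounded functionals on the Lipschitz space, away from the trivial normalization issue at $0$), or fix an analytic activation and verify that each constraint functional is not identically zero in the parameters.
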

Notice that whereas in \Cref{prop:zero set of NN} the existence of $h$ may depend on the choice of labeled graphs, $h$ in \Cref{thm: better than gin} is universal for all pairs of labeled graphs. Furthermore, while maps $\varphi_i$ involved in intermediate layers of $h$ in \Cref{prop:zero set of NN} could potentially have large dimensions ($d_i$ could be very large), $\varphi_i$ can be chosen to have very low dimension ($d_i=1$) in \Cref{thm: better than gin}. In this way, the latter result is stronger than the previous one.
\begin{remark}\label{rmk:GIN countable}
\cite{xu2018powerful} established a result similar to \Cref{thm: better than gin}: for any $k\geq 0$, there exists a MP-GNN $h$ which has the same discriminative power as the $k$-step WL test when restricted to the set of graphs whose labels are from a common \textit{countable} set. We remark that in order to show the existence of such $h$, however, the MP-GNN $h$ they constructed utilizes aggregation functions defined on a \textit{countable} set which may not be able to be continuously extended to a ``continuous'' domain such as $\R^d$. In contrast, each function involved in constructing $h\in\mathcal{N\!N}_k^{q,1}(\R^d)$ is a continuous map between Euclidean spaces. 
\end{remark}

\paragraph{Lipschitz property of MP-GNNs}
In addition to the study of zero sets, one can generalize the results above in a quantitative manner. More specifically, we establish that the specific MP-GNNs defined in this section 
are Lipschitz w.r.t. the WL distance. This in particular indicates that MP-GNNs are stable w.r.t. small perturbations of graphs in the sense of the WL distance.
\begin{theorem}\label{thm: lip}
Given a $k$-layer MP-GNN $h:\mathcal{G}(\R^d)\rightarrow\R$ as described above, assume that $\varphi_i$ is $C_i$-Lipschitz for $i=1,\ldots,k+1$ and that $\psi$ is $C$-Lipschitz.  
Then, for any two labeled graphs $(G_1,\ell_{G_1})$ and $(G_2,\ell_{G_2})$, one has that
\begin{align*}
    &|h((G_1,\ell_{G_1}))-h((G_2,\ell_{G_2}))|\\
    &\leq C\cdot\Pi_{i=1}^{k+1}C_i\cdot d_{\mathcal{G},q}^{\scriptscriptstyle{(k)}}\!\lc(G_1,\ell_{G_1}),(G_2,\ell_{G_2})\rc.
\end{align*}
\end{theorem}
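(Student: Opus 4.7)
The plan is to invoke the stochastic‐process characterization of $d_{\mathcal{G},q}^{\scriptscriptstyle{(k)}}$ provided by \Cref{thm:dwlstochastic-expectation}: for any Markovian coupling $\{(X_i,Y_i)\}_{i\in\N}$ between $\mX_q(G_1)$ and $\mX_q(G_2)$,
\[\mathbb{E}\,d_Z(\ell_{G_1}(X_k),\ell_{G_2}(Y_k)) \ge d_{\mathcal{G},q}^{\scriptscriptstyle{(k)}}\!\lc(G_1,\ell_{G_1}),(G_2,\ell_{G_2})\rc,\]
with equality when we take the infimum. So it suffices to establish the target bound with the right-hand side replaced by $C\cdot\prod_{i=1}^{k+1}C_i\cdot\mathbb{E}\,d_Z(\ell_{G_1}(X_k),\ell_{G_2}(Y_k))$ for an \emph{arbitrary} Markovian coupling, and then take the infimum.

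The key observation is that the message-passing update can be rewritten as an expectation against the Markov kernel: for every vertex $v$,
\[\ell_G^{i+1}(v) = \mathbb{E}_{v'\sim m_v^{G,q}}\!\bigl[\varphi_{i+1}(\ell_G^{i}(v'))\bigr],\]
because $m_v^{G,q}$ puts mass $q$ at $v$ and mass $\frac{(1-q)w_{vv'}}{\deg(v)}$ at each neighbor $v'$ (the degree-zero case is also consistent since then $m_v^{G,q}=\delta_v$). The readout similarly rewrites as $h((G,\ell_G))=\psi\bigl(\mathbb{E}_{v\sim\mu_G}\varphi_{k+1}(\ell_G^k(v))\bigr)$.

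Fix any Markovian coupling $\{(X_i,Y_i)\}$. I will prove by induction on $i\in\{0,1,\dots,k\}$ the estimate
\[\mathbb{E}\bigl[\lvert\ell_{G_1}^{i}(X_{k-i})-\ell_{G_2}^{i}(Y_{k-i})\rvert\bigr] \le \Bigl(\prod_{j=1}^{i}C_j\Bigr)\cdot\mathbb{E}\,d_Z(\ell_{G_1}(X_k),\ell_{G_2}(Y_k)).\]
The base case $i=0$ is immediate since $|\ell_{G_1}(X_k)-\ell_{G_2}(Y_k)|=d_Z(\ell_{G_1}(X_k),\ell_{G_2}(Y_k))$ (writing $|\cdot|$ for the metric on $\R^d$). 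For the inductive step, the definition of Markovian coupling ensures that the conditional law of $X_{k-i+1}$ given $(X_{k-i},Y_{k-i})=(x,y)$ is $m_x^{G_1,q}$, and similarly for $Y_{k-i+1}$, so by the expectation form of the update,
\[\ell_{G_1}^{i}(X_{k-i})-\ell_{G_2}^{i}(Y_{k-i}) = \mathbb{E}\bigl[\varphi_i(\ell_{G_1}^{i-1}(X_{k-i+1}))-\varphi_i(\ell_{G_2}^{i-1}(Y_{k-i+1}))\,\big|\,X_{k-i},Y_{k-i}\bigr].\]
Applying Jensen's inequality, the $C_i$-Lipschitz property of $\varphi_i$, the tower property, and the inductive hypothesis yields the claim at level $i$.

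Finally, applying the $C_{k+1}$-Lipschitz property of $\varphi_{k+1}$ and the $C$-Lipschitz property of $\psi$ to the readout identity, combined with the case $i=k$ of the inductive claim (using that $X_0\sim\mu_{G_1}$, $Y_0\sim\mu_{G_2}$), gives
\[|h((G_1,\ell_{G_1}))-h((G_2,\ell_{G_2}))| \le C\cdot\prod_{j=1}^{k+1}C_j\cdot\mathbb{E}\,d_Z(\ell_{G_1}(X_k),\ell_{G_2}(Y_k)),\]
and taking the infimum over Markovian couplings closes the argument via \Cref{thm:dwlstochastic-expectation}. The only genuine technical step is invoking the Markov property correctly so that, conditional on $(X_{k-i},Y_{k-i})$, the next-step marginals are exactly the transition kernels of the two chains; once this is set up, the remainder is a clean induction combining Jensen's inequality with the Lipschitz hypotheses.
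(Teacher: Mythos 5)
Your argument is correct, and it takes a genuinely different route from the paper's own proof. The paper never invokes the Markovian-coupling reformulation in its proof of \Cref{thm: lip}: instead it works entirely in the iterated-measure picture, observing that $h=H\circ I_q$ for an MCNN $H$ and then proving, by induction on the layer $j$ and \emph{for every pair of vertices} $(v_1,v_2)$, the bound $\|\ell_{G_1}^{\scriptscriptstyle{(\varphi,j)}}(v_1)-\ell_{G_2}^{\scriptscriptstyle{(\varphi,j)}}(v_2)\|\leq \Pi_{i=1}^j C_i\cdot \dW\!\lc \WLh{j}{(\mX_q(G_1),\ell_{G_1})}(v_1),\WLh{j}{(\mX_q(G_2),\ell_{G_2})}(v_2)\rc$, using \Cref{lm:lipschitz inherit} and unrolling the Wasserstein distance between the WL hierarchies one step at a time; this yields the intermediate estimate $\dW\!\lc \lc \ell_{G_1}^{\scriptscriptstyle{(\varphi,k)}}\rc_\#\mu_{G_1},\lc \ell_{G_2}^{\scriptscriptstyle{(\varphi,k)}}\rc_\#\mu_{G_2}\rc \leq \Pi_{i=1}^k C_i\cdot\dGk$, after which the readout step is as in your proof. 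You instead fix one global Markovian coupling, rewrite each message-passing layer as a conditional expectation against the transition kernel (which is exactly condition 2 in the definition of Markovian couplings, so the conditional marginals are the right kernels even under the null-event convention), and run a single induction on $\mathbb{E}\|\ell_{G_1}^{i}(X_{k-i})-\ell_{G_2}^{i}(Y_{k-i})\|$ via conditional Jensen, the tower property, and Lipschitzness, closing with the infimum over couplings through \Cref{thm:dwlstochastic-expectation}. Your route is a nice illustration of the stochastic-process reinterpretation developed in the paper and avoids reasoning about Wasserstein distances on the iterated spaces $\mathcal{P}^{\circ j}(Z)$; the paper's route is self-contained given only the original definition of $\dWLk$ and produces the slightly stronger vertex-level and pushforward-measure bounds (Equations \eqref{eq:nn<dwl1} and \eqref{eq:lip ineq}) as reusable intermediate statements. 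The constants match ($C\cdot\Pi_{i=1}^{k+1}C_i$), the layer/kernel indexing in your time-reversed pairing of layer $i$ with step $k-i$ is consistent, and I see no gap.
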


\begin{remark}
    We note that the WL distance could potentially be used to study the Lipschitz property of other types of MP-GNNs. See \Cref{sec: lip} for such a study of a normalized version of Graph Isomorphism Network.
\end{remark}

\paragraph{Universal approximation} 

Based on the Lipschitz property, we finally establish the universal approximation property of MP-GNNs. 
For this purpose, we introduce some notation.
Given any $k\in\N$ and any subset $\mathcal{K}\subseteq \lc\mathcal{G}(\R^d),d_{\mathcal{G},q}^{\scriptscriptstyle{(k)}}\rc$, we let $C(\mathcal{K},\R)$ denote the set of all continuous functions $f:\mathcal{K}\rightarrow \R$.
We further let $\mathcal{N\!N}_k^q(\R^d)|_\mathcal{K}$ denote the collection of all  functions $h|_\mathcal{K}$ where $h\in \mathcal{N\!N}_k^q(\R^d)$, i.e.,
\[\mathcal{N\!N}_k^q(\R^d)|_\mathcal{K}:=\left\{h|_\mathcal{K}:\,h\in \mathcal{N\!N}_k^q(\R^d)\right\}.\]
Then, we state our main result as follows.

\begin{theorem}[Universal approximation of $k$-layer MP-GNNs]\label{thm: universal}
Given $q>0$ and any $k\in\mathbb{N}$, let $\mathcal{K}\subseteq \lc \mathcal{G}(\R^d),d_{\mathcal{G},q}^{\scriptscriptstyle{(k)}}\rc$ be any compact subspace. Then, $\overline{\mathcal{N\!N}_k^q(\R^d)|_\mathcal{K}}=C(\mathcal{K},\mathbb{R})$.
\end{theorem}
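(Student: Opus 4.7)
The plan is to invoke the Stone-Weierstrass theorem after first passing to a suitable compact metric quotient. Since $\dGk$ is only a pseudo-distance on $\mathcal{G}(\R^d)$, I would first quotient $\mathcal{K}$ by the zero-$\dGk$-distance equivalence relation to obtain a compact Hausdorff space $\tilde{\mathcal{K}}$. Because each $h\in\NN_k^q(\R^d)$ is Lipschitz w.r.t.\ $\dGk$ by \Cref{thm: lip}, it factors through this quotient; similarly any $f\in C(\mathcal{K},\R)$ must be constant on each zero-distance class. This gives a natural identification $C(\mathcal{K},\R)\cong C(\tilde{\mathcal{K}},\R)$ of Banach spaces under which $\NN_k^q(\R^d)|_{\mathcal{K}}$ corresponds to a family $\tilde{\mathcal{F}}\subseteq C(\tilde{\mathcal{K}},\R)$, reducing the claim to $\overline{\tilde{\mathcal{F}}}=C(\tilde{\mathcal{K}},\R)$.

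Next I would verify the three Stone-Weierstrass hypotheses for $\tilde{\mathcal{F}}$. Constants are realized by choosing the readout $\psi$ to be a constant-valued MLP. Point separation on $\tilde{\mathcal{K}}$ follows from \Cref{thm: better than gin}, which supplies, for every pair of labeled graphs at positive $\dGk$-distance, an MP-GNN in $\NN_k^q(\R^d)$ taking distinct values. The subalgebra property of the uniform closure is the only nontrivial check: given $h_1,h_2\in\NN_k^q(\R^d)$, I would build a combined MP-GNN whose per-layer maps are the concatenations $\big(\varphi_i^{(1)},\varphi_i^{(2)}\big)$ of the two individual maps, so that both readouts $x_1,x_2$ are computed in parallel. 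A single final MLP $\psi$ applied to the pair $(x_1,x_2)$ can then be chosen to uniformly approximate either $\psi_1(x_1)+\psi_2(x_2)$ or $\psi_1(x_1)\cdot\psi_2(x_2)$ on the compact image of $\mathcal{K}$ in $\R^{d_{k+1}^{(1)}+d_{k+1}^{(2)}}$ via the classical universal approximation theorem for MLPs on compact Euclidean sets. Closure under scalar multiplication is immediate from rescaling $\psi$.

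Combining these properties, Stone-Weierstrass yields $\overline{\NN_k^q(\R^d)|_\mathcal{K}}=C(\mathcal{K},\R)$, as desired. The main technical obstacle I anticipate is verifying that the pre-$\psi$ readout map $(G,\ell_G)\mapsto (x_1,x_2)$ has compact image when restricted to $\mathcal{K}$; this is exactly what lets us apply Euclidean MLP universal approximation, and it is inherited from the compactness of $\mathcal{K}$ together with the continuity (indeed Lipschitz-continuity) of the pre-$\psi$ map w.r.t.\ $\dGk$, obtained by a layerwise application of \Cref{thm: lip}. The remaining verifications amount to routine bookkeeping.
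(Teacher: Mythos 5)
Your argument is essentially correct, but it takes a genuinely different route from the paper. The paper's proof is a transfer argument: it observes that $I_q:\lc\mathcal{G}(\R^d),\dGk\rc\rightarrow\lc\mathcal{G}_q(\R^d),\dWLk\rc$ is continuous, so $\mathcal{K}_q:=I_q(\mathcal{K})$ is compact, shows that the pullback $J_q:C(\mathcal{K}_q,\R)\rightarrow C(\mathcal{K},\R)$, $f_q\mapsto f_q\circ I_q$, is an isometry of function spaces, and then invokes the universal approximation theorem for MCNNs (\Cref{thm: universal MCNN pseudo}, the pseudo-metric version of Theorem 4.3 of \citealp{chen2022weisfeilerlehman}) together with the identity $h=H\circ I_q$ to pull density back to $\mathcal{K}$. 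You instead re-prove universality from scratch at the MP-GNN level: pass to the metric quotient $\tilde{\mathcal{K}}$ (exactly the maneuver the paper performs inside \Cref{thm: universal MCNN pseudo}), use the Lipschitz property (\Cref{thm: lip}) to show MP-GNNs descend, and run Stone--Weierstrass, with the algebra-closure step handled by block-concatenating the per-layer MLPs --- which works because the message-passing update is linear in the $\varphi$-values, so the combined network computes both hidden states in parallel --- and approximating $\psi_1(x_1)+\psi_2(x_2)$ and $\psi_1(x_1)\psi_2(x_2)$ by a final MLP on the compact pre-readout image. What the paper's route buys is brevity: the Stone--Weierstrass work is outsourced to the cited MCNN result. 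What your route buys is a self-contained proof that never leaves the graph setting; its cost is precisely the concatenation/approximation bookkeeping you describe (including the standard technicality of padding two MLPs to a common depth).

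One citation should be adjusted: \Cref{thm: better than gin} is stated only for graphs in $\mathcal{G}_P(Z)$ with \emph{fixed countable} label set $Z$ and weight set $P$, whereas a compact $\mathcal{K}\subseteq\lc\mathcal{G}(\R^d),\dGk\rc$ need not have its labels or weights contained in any countable set, so it does not directly give point separation on $\tilde{\mathcal{K}}$. Stone--Weierstrass only needs pairwise separation, and that is supplied by \Cref{prop:zero set of NN}(2) for arbitrary pairs in $\mathcal{G}(\R^d)$; alternatively, for a fixed pair of (finite) graphs you may take $Z$ and $P$ to be their finite sets of labels and weights and then apply \Cref{thm: better than gin}. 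With that substitution, and noting that constants are realized by a constant readout MLP, your argument goes through.
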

The theorem above follows directly from the universal approximation result for MCNNs \citep[Theorem 4.3]{chen2022weisfeilerlehman}; see \Cref{app:proof universal} for the proof.


\section{Conclusion and Future Directions}

In this paper, we further investigate the WL distance, proposed in \citep{chen2022weisfeilerlehman}, for comparing LMMCs and establish that the WL distance of depth $k$ can be seen as a variant of the $\ell_1$-Wasserstein distance comparing distributions of trajectories from random walks on the label space.
We further identify connections between the WL distance and causal optimal transport (COT), suggesting potential applications in computing/approximating COT distances and extending the WL distance to handle different label spaces. These avenues offer interesting future research directions.

Given that the WL distance is compatible with the WL-graph isomorphism test, which has been connected to the expressiveness of message-passing graph neural networks (MP-GNNs) \citep{xu2018powerful,azizian2020expressive}, it is natural to equip the space of graphs with a (pseudo-)metric structure induced by the WL distance. As already observed in \citep{chen2022weisfeilerlehman} and further detailed in this paper, MP-GNN can universally approximate continuous functions defined on the space of graphs w.r.t. the WL distance. In fact, a more refined result is obtained (Theorem \ref{thm: universal}) which shows universal approximation result for the family of $k$-layer MP-GNNs. 
Furthermore, similar to the work of \citep{chuang2022tree}, we show that the WL distance can also be used to study stability and Lipschitz property of sub-families of MP-GNNs. We also remark that the WL distance can serve as a suitable choice of metric for the space of graphs to study questions such as the generalization power of message passing GNNs as done in \citep{chuang2022tree}.  

\paragraph*{Acknowledgement.} This work is partially supported by the NSF through grants 
IIS-1901360, IIS-2050360, CCF-1740761, CCF-2112665, CCF-2310412, DMS-1547357,   and by the BSF under grant 2020124.


\bibliography{biblio-dWL-workshop}
\bibliographystyle{icml2023}


\newpage
\onecolumn
\appendix

\section{Details from \Cref{sec:sec4}}
\subsection{Proof of \Cref{prop:mcoupling=kcoupling}}\label{sec:proof of prop:mcoupling=kcoupling}
To facilitate the proof below, we consider the canonical identification $\iota:(X\times Y)^\N\rightarrow X^\N\times Y^\N$ sending $(x_j,y_j)_{j\in\N}$ to $((x_j)_{j\in\N},(y_j)_{j\in\N})$. 
This is obviously a homeomorphism given the product topology. 
By slight abuse of notation, we let $X_i$ denote both the projection map $X_i:X^\N\times Y^\N\rightarrow X$ to the $i$-th component for $i\in\N$ and also the projection map $X_i\circ\iota:(X\times Y)^\N\rightarrow X$. We adopt a similar convention for $Y_i$.

For any $k$-step coupling $\gamma^{\scriptscriptstyle{(k)}}\in\cpl^{\scriptscriptstyle{(k)}}(\mu_X,\mu_Y)$,  there exist $\gamma\in\mathcal{C}(\mu_X,\mu_Y)$ and $(\nu_i)_{\bullet,\bullet}\in\cpl^{\scriptscriptstyle{(1)}}\!\lc m_\bullet^X,m_\bullet^Y\rc$ for $i=1,\ldots,k$ such that
\[\gamma^{\scriptscriptstyle{(k)}}=\int\limits_{X\times Y}\cdots\int\limits_{X\times Y}(\nu_k)_{x_{k-1},y_{k-1}}\,(\nu_{k-1})_{x_{k-2},y_{k-2}}(dx_{k-1}\times dy_{k-1})\cdots(\nu_1)_{x_0,y_0}(dx_1\times dy_1)\gamma(dx_0\times dy_0).\]
We further let $(\nu_i)_{\bullet,\bullet}:=(\nu_k)_{\bullet,\bullet}$ for all $i>k$.
Now, by the Kolmogorov extension theorem, there exists $\mathbb{Q}\in\mathcal{P}((X\times Y)^\N)$ such that 
\begin{enumerate}
    \item $(\nu_{i+1})_{x,y}(x',y')=\mathbb{Q}\big((X_{i+1},Y_{i+1})=(x',y')|(X_{i},Y_{i})=(x,y)\big),$ 
for any $x,x'\in X$, $y,y'\in Y$ and any $i=0,\ldots$. 
\item $\mathbb{Q}((X_i,Y_i)\in B_i,0\leq i\leq n)=\int\limits_{B_0}\gamma(dx_0\times dy_0)\cdots\int\limits_{B_n}(\nu_n)_{x_{n-1},y_{n-1}}(dx_{n}\times dy_{n})$ for any $n\in\N$ and any measurable $B_i\subseteq X\times Y$ for $i=0,\ldots,n$.
\end{enumerate}

Now, we let $\PP:=\iota_\#\mathbb{Q}\in\mathcal{P}(X^\N\times Y^\N)$. Then, it is straightforward to check that 
\begin{enumerate}
    \item $(\nu_{i+1})_{x,y}(x',y')=\mathbb{P}\big((X_{i+1},Y_{i+1})=(x',y')|(X_{i},Y_{i})=(x,y)\big),$ 
for any $x,x'\in X$, $y,y'\in Y$ and any $i=0,\ldots$. 
\item $\mathbb{P}((X_i,Y_i)\in B_i,0\leq i\leq n)=\int\limits_{B_0}\gamma(dx_0\times dy_0)\cdots\int\limits_{B_n}(\nu_n)_{x_{n-1},y_{n-1}}(dx_{n}\times dy_{n})$ for any $n\in\N$ and any measurable $B_i\subseteq X\times Y$ for $i=0,\ldots,n$.
\end{enumerate}
In this way, it is easy to check that $\PP\in\mathcal{C}_\mathcal{M}(\PP_X,\PP_Y)$.
Finally, we have that 
\begin{align*}
    &\mathrm{law}((X_k,Y_k))=(X_k,Y_k)_\#\mathbb{P}\\
 =&\int\limits_{X\times Y}\cdots\int\limits_{X\times Y}(\nu_k)_{x_{k-1},y_{k-1}}\,(\nu_{k-1})_{x_{k-2},y_{k-2}}(dx_{k-1}\times dy_{k-1})\cdots(\nu_1)_{x_0,y_0}(dx_1\times dy_1)\gamma(dx_0\times dy_0)\\
 =&\gamma^{\scriptscriptstyle{(k)}}.
\end{align*}

For the other direction, given a Markovian coupling $\PP\in\mathcal{C}_\mathcal{M}(\PP_X,\PP_Y)$, we let
$$(\nu_{i+1})_{x,y}(x',y'):=\mathbb{P}\big((X_{i+1},Y_{i+1})=(x',y')|(X_{i},Y_{i})=(x,y)\big)$$
for any $i\in\N$, $x,x'\in X$ and $y,y'\in Y$.\footnote{Recall from \Cref{def: markov coupling} that when the event $\{(X_i,Y_i)=(x,y)\}$ is null, we let $(\nu_{i+1})_{x,y}:=m_x^X\otimes m_y^Y$.}
Then, by definition of the Markovian coupling, $(\nu_{i+1})_{\bullet,\bullet}$ is a 1-step coupling between $m_\bullet^X$ and $m_\bullet^Y$ for each $i=0,1,\ldots$. Let $\gamma:=\mathrm{law}((X_0,Y_0))\in\mathcal{C}(\mu_X,\mu_Y)$. 
Then, if we let
\[\nu_{x_{0},y_{0}}^{\scriptscriptstyle{(k)}}\coloneqq\int\limits_{X\times Y}\cdots\int\limits_{X\times Y}(\nu_k)_{x_{k-1},y_{k-1}}\,(\nu_{k-1})_{x_{k-2},y_{k-2}}(dx_{k-1}\times dy_{k-1})\cdots(\nu_1)_{x_0,y_0}(dx_1\times dy_1)\]
for any $x_{0}\in X$ and $y_{0} \in Y$, then $\nu_{\bullet,\bullet}^{\scriptscriptstyle{(k)}}\in\cpl^{\scriptscriptstyle{(k)}}\!\lc m_\bullet^{X},m_\bullet^Y\rc$ and thus
\begin{align*}
    \mathrm{law}((X_k,Y_k))=(X_k,Y_k)_\#\mathbb{P}=\nu_{\bullet,\bullet}^{\scriptscriptstyle{(k)}}\odot \gamma.
\end{align*}

\subsection{Proof of \Cref{thm:dwlstochastic}}
Fix an arbitrary $k$-step coupling $\gamma^{\scriptscriptstyle{(k)}}=\nu_{\bullet,\bullet}^{\scriptscriptstyle{(k)}}\odot \gamma\in \cpl^{\scriptscriptstyle{(k)}}(\mu_X,\mu_Y)$. Then, by \Cref{prop:mcoupling=kcoupling}, there exists a Markovian coupling $\PP\in\mathcal{C}_\mathcal{M}(\PP_X,\PP_Y)$ such that 
$$\mathrm{law}((X_k,Y_k))=\gamma^{\scriptscriptstyle{(k)}}\,\,\text{and}\,\,\mathrm{law}((X_0,Y_0))=\gamma.$$
This implies that
\begin{align*}
    &\int_{X\times Y}d_Z(\ell_X(x),\ell_Y(y))\gamma^{\scriptscriptstyle{(k)}}(dx\times dy)=\int_{X\times Y}d_Z(\ell_X(x),\ell_Y(y))(X_k,Y_k)_\#\PP(dx\times dy)
    \\&=\int_{X^\N\times Y^\N}\, d_Z(\ell_X({X}_k(w_X)),\ell_Y({Y}_k(w_Y)))\PP(dw_X\times dw_Y)\\
    &\geq\inf_{\PP\in\mathcal{C}_\mathcal{M}(\PP_X,\PP_Y)}\int_{X^\N\times Y^\N}\, d_Z(\ell_X({X}_k(w_X)),\ell_Y({Y}_k(w_Y)))\PP(dw_X\times dw_Y).
\end{align*}
Since $\gamma^{\scriptscriptstyle{(k)}}$ is arbitrary, we have that
\begin{align*}
    &\inf_{\gamma^{\scriptscriptstyle{(k)}}\in \cpl^{\scriptscriptstyle{(k)}}(\mu_X,\mu_Y)} \int\limits_{X\times Y}d_Z(\ell_X(x),\ell_Y(y))\gamma^{\scriptscriptstyle{(k)}}(dx\times dy)\\
    &\geq\inf_{\PP\in\mathcal{C}_\mathcal{M}(\PP_X,\PP_Y)}\int_{X^\N\times Y^\N}\, d_Z(\ell_X({X}_k(w_X)),\ell_Y({Y}_k(w_Y)))\PP(dw_X\times dw_Y).
\end{align*}

Conversely, fix an arbitrary Markovian coupling $\PP\in\mathcal{C}_\mathcal{M}(\PP_X,\PP_Y)$. Then, by \Cref{prop:mcoupling=kcoupling}, $\gamma^{\scriptscriptstyle{(k)}}:=\mathrm{law}((X_k,Y_k))$ is a $k$-step coupling between $\mu_X$ and $\mu_Y$. Therefore,
\begin{align*}
    &\int_{X^\N\times Y^\N}\, d_Z(\ell_X({X}_k(w_X)),\ell_Y({Y}_k(w_Y)))\PP(dw_X\times dw_Y)=\int\limits_{X\times Y}d_Z(\ell_X(x),\ell_Y(y))\gamma^{\scriptscriptstyle{(k)}}(dx\times dy)\\   &\geq\inf_{\gamma^{\scriptscriptstyle{(k)}}\in \cpl^{\scriptscriptstyle{(k)}}(\mu_X,\mu_Y)} \int\limits_{X\times Y}d_Z(\ell_X(x),\ell_Y(y))\gamma^{\scriptscriptstyle{(k)}}(dx\times dy).
\end{align*}

Since $\PP$ is arbitrary,
\begin{align*}
    &\inf_{\gamma^{\scriptscriptstyle{(k)}}\in \cpl^{\scriptscriptstyle{(k)}}(\mu_X,\mu_Y)} \int\limits_{X\times Y}d_Z(\ell_X(x),\ell_Y(y))\gamma^{\scriptscriptstyle{(k)}}(dx\times dy)\\
    &\leq\inf_{\PP\in\mathcal{C}_\mathcal{M}(\PP_X,\PP_Y)}\int_{X^\N\times Y^\N}\, d_Z(\ell_X({X}_k(w_X)),\ell_Y({Y}_k(w_Y)))\PP(dw_X\times dw_Y).
\end{align*}

Finally, by \Cref{thm:dwl= dwk}, one can conclude that
\begin{align*}
    \dWLk\!\lc(\mX,\ell_X),(\mY,\ell_Y)\rc &=\inf_{\gamma^{\scriptscriptstyle{(k)}}\in \cpl^{\scriptscriptstyle{(k)}}(\mu_X,\mu_Y)} \int\limits_{X\times Y}d_Z(\ell_X(x),\ell_Y(y))\gamma^{\scriptscriptstyle{(k)}}(dx\times dy)\\
    &=\inf_{\PP\in\mathcal{C}_\mathcal{M}(\PP_X,\PP_Y)}\int_{X^\N\times Y^\N}\, d_Z(\ell_X({X}_k(w_X)),\ell_Y({Y}_k(w_Y)))\PP(dw_X\times dw_Y).
\end{align*}

\subsection{Proof of \Cref{lemma:MCtoBC}}
{Note that it is easy to check that $\pi^k$ is indeed a coupling measure between $\alpha^k$ and $\beta^k$ from the definition. Hence, we only need to verify the bicausality of $\pi^k$.} Observe that
\begin{align*}
    &\pi^k((y_0,\dots,y_l)\vert (x_0,\dots,x_k))\\
    &=\mathbb{P}\big(Y_0=y_0,\dots,Y_l=y_l|X_0=x_0,\dots,X_k=x_k\big)\\
    &=\frac{\mathbb{P}\big(X_0=x_0,\dots,X_k=x_k,Y_0=y_0,\dots,Y_l=y_l\big)}{\mathbb{P}\big(X_0=x_0,\dots,X_k=x_k\big)}\\
    &=\frac{\mathbb{P}\big(X_{l+1}=x_{l+1},\dots,X_k=x_k|X_0=x_0,\dots,X_l=x_l,Y_0=y_0,\dots,Y_l=y_l\big)}{\mathbb{P}\big(X_{l+1}=x_{l+1},\dots,X_k=x_k|X_0=x_0,\dots,X_l=x_l\big)}\\
    &\quad\times\frac{\mathbb{P}\big(X_0=x_0,\dots,X_l=x_l,Y_0=y_0,\dots,Y_l=y_l\big)}{\mathbb{P}\big(X_0=x_0,\dots,X_l=x_l\big)}.
\end{align*}
Moreover, since {$\mathbb{P}$} is a Markovian coupling, it is easy to check that
\begin{align*}
    &\mathbb{P}\big(X_{l+1}=x_{l+1},\dots,X_k=x_k|X_0=x_0,\dots,X_l=x_l,Y_0=y_0,\dots,Y_l=y_l\big)\\
    &=\mathbb{P}\big(X_k=x_k|X_0=x_0,\dots,X_{k-1}=x_{k-1},Y_0=y_0,\dots,Y_l=y_l\big)\\
    &\quad\times\mathbb{P}\big(X_{k-1}=x_{k-1}|X_0=x_0,\dots,X_{k-2}=x_{k-2},Y_0=y_0,\dots,Y_l=y_l\big)\\
    &\quad\cdots\times\mathbb{P}\big(X_{l+1}=x_{l+1}|X_0=x_0,\dots,X_l=x_l,Y_0=y_0,\dots,Y_l=y_l\big)\\
    &=m_{x_l}^X(x_{l+1})\cdots m_{x_{k-1}}^X(x_k)\\
    &=\mathbb{P}\big(X_{l+1}=x_{l+1},\dots,X_k=x_k|X_0=x_0,\dots,X_l=x_l\big).
\end{align*}
Therefore, one can conclude that
\begin{align*}
    \pi^k((y_0,\dots,y_l)\vert (x_0,\dots,x_k))&=\frac{\mathbb{P}\big(X_0=x_0,\dots,X_l=x_l,Y_0=y_0,\dots,Y_l=y_l\big)}{\mathbb{P}\big(X_0=x_0,\dots,X_l=x_l\big)}\\
    &=\pi^k((y_0,\dots,y_l)\vert (x_0,\dots,x_l)).
\end{align*}
This implies that $\pi^k$ is causal from $\alpha^k$ to $\beta^k$. In a similar way, one can also prove that $\pi^k$ is causal from $\beta^k$ to $\alpha^k$. This completes the proof.

\subsection{Proof of \Cref{VWsame}}
The proof is by (backward) induction. First, observe that $\alpha^k_{x_0,\dots,x_{i-1}}=\alpha^k_{x_{i-1}}=m^X_{x_{i-1}}$ and $\beta^k_{y_0,\dots,y_{i-1}}=\beta^k_{y_{i-1}}=m^Y_{y_{i-1}}$ since both of $\alpha^k$ and $\beta^k$ are Markovian. The remaining steps are straightforward, so we omit it.

\subsection{Proof of \Cref{thm:dwl cot}}\label{sec:proof of cot}
First, fix an arbitrary Markovian coupling $\PP\in\mathcal{C}_\mathcal{M}(\PP_X,\PP_Y)$.
Then, by \Cref{lemma:MCtoBC}, we know that $\pi^k:=\mathrm{law}((X_0,\ldots,X_k,Y_0,\ldots,Y_k))\in\mathcal{P}(X^{k+1}\times Y^{k+1})$ is a bicausal coupling between $\alpha^k$ and $\beta^k$. Therefore,
$$\int_{X^\N\times Y^\N}\, d_Z(\ell_X({X}_k(w_X)),\ell_Y({Y}_k(w_Y)))\,\PP(dw_X\times dw_Y)=\int_{X^{k+1}\times Y^{k+1}}c^k(x,y)\,\pi^k(dx\times dy)\geq d^{c^k}(\alpha^k,\beta^k).$$
Since the Markovian coupling $\PP$ is chosen arbitrarily, by \Cref{thm:dwlstochastic}, we have $\dWLk\lc(\mX,\ell_X),(\mY,\ell_Y)\rc\geq d^{c^k}(\alpha^k,\beta^k)$.

Now, let us prove the other direction. By \Cref{eq:dcaltdef} and \Cref{VWsame}, one can choose an optimal coupling $\gamma^0\in\mathcal{C}(\mu_X,\mu_Y)$ such that 
\[d^{c^k}(\alpha^k,\beta^k)=\int_{X\times Y}W_0(x_0,y_0)\,\gamma^0(dx_0\times dy_0).\]
Next, for arbitrary $(x_0,y_0)\in X\times Y$, one can choose an optimal coupling $\nu_{x_0,y_0}^1\in\mathcal{C}(m^X_{x_0},m^Y_{y_0})$ such that 
\[W_0(x_0,y_0)=\int_{X\times Y}W_1(x_1,y_1)\,\gamma_{x_0,y_0}^1(dx_1\times dy_1).\]
This implies that $\nu_{\bullet,\bullet}^1$ is a 1-step coupling between $m_\bullet^X$ and $m_\bullet^Y$. We repeat this process inductively. Then, we have the following 1-step couplings $\nu_{\bullet,\bullet}^1,\nu_{\bullet,\bullet}^2,\dots,\nu_{\bullet,\bullet}^k$ between $m_\bullet^X$ and $m_\bullet^Y$ such that 
\[W_{i-1}(x_{i-1},y_{i-1})=\int_{X\times Y}W_i(x_i,y_i)\,\nu_{x_{i-1},y_{i-1}}^i(dx_i\times dy_i)\]
for all $(x_{i-1},y_{i-1})\in X\times Y$. Now, the following equality holds:
$$d^{c^k}(\alpha^k,\beta^k)=\int_{X\times Y}\cdots\int_{X\times Y}d_Z(\ell_X(x_k),\ell_Y(y_k))\,\nu_{x_{k-1},y_{k-1}}^k(dx_k\times dy_k)\cdots\nu_{x_0,y_0}^1(dx_1\times dy_1)\gamma^0(dx_0\times dy_0).$$
Also, recall that there exists a Markovian coupling induced by $\gamma^0$ and these 1-step couplings $\nu_{\bullet,\bullet}^1,\nu_{\bullet,\bullet}^2,\dots,\nu_{\bullet,\bullet}^k$ by employing Kolmogorov extension theorem (cf. \Cref{rmk:MarkCoupExist}). Therefore, again by \Cref{thm:dwlstochastic}, we have $d^{c^k}(\alpha^k,\beta^k)\geq\dWLk\lc(\mX,\ell_X),(\mY,\ell_Y)\rc$. This completes the proof.

\subsection{The WL distance as COT in the label space $Z$}\label{sec: wl in Z}
In \Cref{thm:dwl cot} we are considering a special COT problem where the two involved state spaces for stochastic processes are different. We note that in the literature, most of the time COT is considering stochastic processes on the same state space. In this section, we then provide an alternative characterization of the WL distance in terms of COT on a same state space under certain conditions.

Given a $Z$-LMMC $(\mX,\ell_X)$, if we require that $\ell_X:X\rightarrow Z$ is injective, then $(\mX,\ell_X)$ induces a MMC on $Z$:  
\[m^{\ell_X}_z:=\begin{cases}
    (\ell_X)_\#m_x^X, &\exists x\in X \text{ such that }\ell_X(x)=z\\
    \delta_z, &\text{otherwise}
\end{cases}\]
We further let $\mu_{\ell_X}:=(\ell_X)_\#\mu_X$ (note, however, $\mu_{\ell_X}$ may not be fully supported on $Z$).

Now, given any two $Z$-LMMCs $(\mX,\ell_X)$ and $(\mY,\ell_Y)$ and $k\in \N$, if we assume that $\ell_X$ and $\ell_Y$ are injective, then these two LMMCs induce two Markov chains on the same state space $Z$. 
It is natural to wonder whether one can characterize the WL distance of depth $k$ via a COT problem on $Z$. 
Recall notation $\alpha^k\in\mathcal{P}(X^{k+1})$ and $\beta^k\in\mathcal{P}(Y^{k+1})$ from \Cref{sec:COT}. 
We now let $\alpha_Z^k:=(\ell_X^{k+1})_\#\alpha^k\in \mathcal{P}(Z^{k+1})$ and $\beta_Z^k:=(\ell_Y^{k+1})_\#\beta^k\in \mathcal{P}(Z^{k+1})$. Here $\ell_X^{k+1}:X^{k+1}\rightarrow Z^{k+1}$ denotes the self product of $\ell_X$, and $\ell_Y^{k+1}$ is similarly defined.
Finally, we consider the cost $d_Z^k:Z^{k+1}\times Z^{k+1}\rightarrow \R$ defined by $d_Z^k((z_0,\ldots,z_k),(z_0',\ldots,z_k')):=d_Z(z_k,z_k')$. Note that $d_Z^k\circ(\ell_X^{k+1}\times\ell_Y^{k+1})=c^k$, where $c^k$ is defined through \Cref{ck}.
Then, we have the following result:
\begin{theorem}\label{thm:dwl cot on Z}
For any $Z$-LMMCs $(\mX,\ell_X)$ and $(\mY,\ell_Y)$, if we assume that $\ell_X$ and $\ell_Y$ are injective, then we have that
\[\dWLk\lc(\mX,\ell_X),(\mY,\ell_Y)\rc=d^{d_Z^k}\lc\alpha_Z^k,\beta_Z^k\rc.\]
\end{theorem}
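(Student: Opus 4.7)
The plan is to reduce Theorem \ref{thm:dwl cot on Z} to Theorem \ref{thm:dwl cot} by exhibiting a bijection between bicausal couplings in $\mathcal{C}_{\mathrm{bc}}(\alpha^k,\beta^k)$ and bicausal couplings in $\mathcal{C}_{\mathrm{bc}}(\alpha_Z^k,\beta_Z^k)$ that preserves transport cost. The key observation is that since $\ell_X$ is injective, so is the product map $\ell_X^{k+1}:X^{k+1}\to Z^{k+1}$, and it restricts to a bimeasurable bijection between $X^{k+1}$ and its image $\ell_X^{k+1}(X^{k+1})\subseteq Z^{k+1}$; similarly for $\ell_Y^{k+1}$. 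Hence pushforward along $\ell_X^{k+1}\times\ell_Y^{k+1}$ is an information-preserving operation between the relevant path spaces.

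The first step is to check the forward direction. Given any $\pi\in\mathcal{C}_{\mathrm{bc}}(\alpha^k,\beta^k)$, define $\pi_Z:=(\ell_X^{k+1}\times\ell_Y^{k+1})_\#\pi$. The marginals of $\pi_Z$ are exactly $\alpha_Z^k$ and $\beta_Z^k$ by definition. To verify bicausality, I would use injectivity to write any conditional probability $\pi_Z((z_0',\dots,z_l')\mid(z_0,\dots,z_k))$ directly in terms of the corresponding conditional of $\pi$ by pulling $z_i$ back to the unique $x_i\in X$ with $\ell_X(x_i)=z_i$ (when such $x_i$ exists; otherwise the conditioning event has zero mass under $\alpha_Z^k$). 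The causality identity then transfers verbatim from $\pi$ to $\pi_Z$.

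The second step, which is the main obstacle, is the reverse direction: given $\tilde\pi\in\mathcal{C}_{\mathrm{bc}}(\alpha_Z^k,\beta_Z^k)$, construct a bicausal pullback $\pi\in\mathcal{C}_{\mathrm{bc}}(\alpha^k,\beta^k)$. Here one must be careful that $\alpha_Z^k$ need not be fully supported on $Z^{k+1}$, but it is supported on $\ell_X^{k+1}(X^{k+1})$ (and likewise for $\beta_Z^k$). Thus $\tilde\pi$ is concentrated on $\ell_X^{k+1}(X^{k+1})\times\ell_Y^{k+1}(Y^{k+1})$, and the componentwise inverse $(\ell_X^{k+1})^{-1}\times(\ell_Y^{k+1})^{-1}$ is well-defined on this support. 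Setting $\pi:=((\ell_X^{k+1})^{-1}\times(\ell_Y^{k+1})^{-1})_\#\tilde\pi$ gives a coupling of $\alpha^k$ and $\beta^k$, and bicausality transfers back by the same conditional-probability argument, now using that the inverse maps are also measurable on the relevant supports.

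Finally, since $d_Z^k\circ(\ell_X^{k+1}\times\ell_Y^{k+1})=c^k$ by construction, the change-of-variables formula gives
\[\int_{Z^{k+1}\times Z^{k+1}}d_Z^k\,d\pi_Z=\int_{X^{k+1}\times Y^{k+1}}c^k\,d\pi\]
for the pair $(\pi,\pi_Z)$ in either direction of the bijection. Taking infima over the two (matched) families of bicausal couplings yields $d^{c^k}(\alpha^k,\beta^k)=d^{d_Z^k}(\alpha_Z^k,\beta_Z^k)$, and combining with Theorem \ref{thm:dwl cot} completes the proof. The only real subtlety, as flagged above, is the careful bookkeeping on the supports when inverting the label maps; everything else is a routine change of variables.
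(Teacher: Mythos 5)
Your proposal is correct and follows essentially the same route as the paper: the paper proves that pushforward along $\ell_X^{k+1}\times\ell_Y^{k+1}$ maps $\cpl_{\mathrm{bc}}(\alpha^k,\beta^k)$ onto $\cpl_{\mathrm{bc}}(\alpha_Z^k,\beta_Z^k)$ (its Lemma \ref{lemma:bicausalpwdsurj}, with the same support bookkeeping for the reverse direction that you flag), then uses $d_Z^k\circ(\ell_X^{k+1}\times\ell_Y^{k+1})=c^k$ and Theorem \ref{thm:dwl cot} to conclude. Your framing as a cost-preserving bijection rather than a surjection is a cosmetic difference only.
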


For the proof of \Cref{thm:dwl cot on Z}, we establish the following lemma.

\begin{lemma}\label{lemma:bicausalpwdsurj}
$(\ell_X^{k+1}\times\ell_Y^{k+1})_\#$ induces a surjective map from $\cpl_{\mathrm{bc}}(\alpha^k,\beta^k)$ to $\cpl_{\mathrm{bc}}(\alpha_Z^k,\beta_Z^k)$.
\end{lemma}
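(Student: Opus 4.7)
The plan is to construct an explicit two-sided correspondence exploiting the injectivity of the label maps. Since $\ell_X : X \to Z$ and $\ell_Y : Y \to Z$ are injective, so are the product maps $\ell_X^{k+1}$ and $\ell_Y^{k+1}$, with images $\ell_X(X)^{k+1}$ and $\ell_Y(Y)^{k+1}$ inside $Z^{k+1}$. Since $\alpha_Z^k$ and $\beta_Z^k$ are supported on these images respectively, any $\tilde{\pi}\in\cpl_{\mathrm{bc}}(\alpha_Z^k,\beta_Z^k)$ is concentrated on $\ell_X(X)^{k+1}\times \ell_Y(Y)^{k+1}$, so the restricted left-inverses of $\ell_X$ and $\ell_Y$ are well-defined on its support. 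This is the hook that lets us pull bicausal couplings back along the label maps.

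For surjectivity, given $\tilde{\pi}\in\cpl_{\mathrm{bc}}(\alpha_Z^k,\beta_Z^k)$, I would define $\pi\in\prob(X^{k+1}\times Y^{k+1})$ by
\[
\pi\bigl((x_0,\ldots,x_k),(y_0,\ldots,y_k)\bigr) := \tilde{\pi}\bigl((\ell_X(x_0),\ldots,\ell_X(x_k)),(\ell_Y(y_0),\ldots,\ell_Y(y_k))\bigr),
\]
i.e., the pullback of $\tilde{\pi}$ under the injective relabelings. The marginals are verified to be $\alpha^k$ and $\beta^k$ directly from the fact that relabeling via injective maps commutes with pushforward, and by construction $(\ell_X^{k+1}\times\ell_Y^{k+1})_\#\pi=\tilde{\pi}$. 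For bicausality, injectivity makes conditioning on $(x_0,\ldots,x_l)$ for $\pi$ equivalent to conditioning on $(\ell_X(x_0),\ldots,\ell_X(x_l))$ for $\tilde{\pi}$, so that
\[
\pi\bigl((y_0,\ldots,y_l)\mid(x_0,\ldots,x_k)\bigr)=\tilde{\pi}\bigl((\ell_Y(y_0),\ldots,\ell_Y(y_l))\mid(\ell_X(x_0),\ldots,\ell_X(x_k))\bigr),
\]
which by bicausality of $\tilde{\pi}$ equals $\tilde{\pi}((\ell_Y(y_0),\ldots,\ell_Y(y_l))\mid(\ell_X(x_0),\ldots,\ell_X(x_l)))=\pi((y_0,\ldots,y_l)\mid(x_0,\ldots,x_l))$; the symmetric condition follows identically. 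An analogous translation confirms that the forward map $(\ell_X^{k+1}\times\ell_Y^{k+1})_\#$ is well-defined on $\cpl_{\mathrm{bc}}$, so both the source and target are bicausal couplings.

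The main obstacle is not conceptual depth but careful bookkeeping with conditional probabilities: one needs to handle the convention on null-probability events consistently (on the off-support complement any measurable kernel may be chosen, and bicausality must be checked only on the actual support), and to make the two-way translation of conditional statements between $X^{k+1}\times Y^{k+1}$ and $Z^{k+1}\times Z^{k+1}$ precise. Injectivity of the label maps is doing all of the work here, since without it the inverse relabelings needed for surjectivity would fail to exist.
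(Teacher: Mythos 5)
Your proposal is correct and follows essentially the same route as the paper's proof: the same pullback construction $\pi((x_0,\ldots,x_k),(y_0,\ldots,y_k)):=\tilde{\pi}((\ell_X(x_0),\ldots,\ell_X(x_k)),(\ell_Y(y_0),\ldots,\ell_Y(y_k)))$, the same observation that the coupling $\tilde{\pi}$ is concentrated on the images of the injective label maps, and the same translation of conditional probabilities to transfer bicausality. No substantive differences from the paper's argument.
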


Given this lemma, and by the fact that $d_Z^k\circ(\ell_X^{k+1}\times\ell_Y^{k+1})=c^k$, it is easy to check that $d^{c^k}(\alpha^k,\beta^k)=d^{d_Z^k}\lc\alpha_Z^k,\beta_Z^k\rc$. Hence, by employing \Cref{thm:dwl cot}, we conclude our proof of \Cref{thm:dwl cot on Z}. 

Finally, we provide a proof of \Cref{lemma:bicausalpwdsurj}.

\begin{proof}[Proof of \Cref{lemma:bicausalpwdsurj}]
Fix an arbitrary $\pi^k\in\cpl_{\mathrm{bc}}(\alpha^k,\beta^k)$. Let $\pi_Z^k:=(\ell_X^{k+1}\times\ell_Y^{k+1})_\#\pi^k$. Then, it is easy to check that $\pi_Z^k$ is indeed a bicausal coupling between $\alpha_Z^k$ and $\beta_Z^k$, so we omit the proof.

Next, fix an arbitrary $\pi_Z^k\in\cpl_{\mathrm{bc}}(\alpha_Z^k,\beta_Z^k)$. Now, let us define a probability measure $\pi^k$ on $X^{k+1}\times Y^{k+1}$ in the following way:
$$\pi^k((x_0,\dots,x_k),(y_0,\dots,y_k)):=\pi_Z^k((\ell_X(x_0),\dots,\ell_X(x_k)),(\ell_Y(y_0),\dots,\ell_Y(y_k)))$$
for all $((x_0,\dots,x_k),(y_0,\dots,y_k))\in X^{k+1}\times Y^{k+1}$. First of all, observe that 
\[\pi_Z^k((z_0,\dots,z_k),(z_0',\dots,z_k'))=0\]
if $z_i\notin\ell_X(X)$ for some $i\in\{0,\dots,k\}$ or $z_j'\notin\ell_Y(Y)$ for some $j\in\{0,\dots,k\}$ since $\pi_Z^k$ is a coupling measure between $\alpha_Z^k$ and $\beta_Z^k$. This implies that indeed $\pi^k$ is a coupling measure between $\alpha^k$ and $\beta^k$. Furthermore, since $\ell_X$ and $\ell_Y$ are injective, it is easy to check that $\pi_Z^k=(\ell_X^{k+1}\times\ell_Y^{k+1})_\#\pi^k$. Finally, note that
\begin{align*}
    \pi^k((y_0,\dots,y_l)\vert (x_0,\dots,x_k))&=\pi_Z^k((\ell_Y(y_0),\dots,\ell_Y(y_l))\vert (\ell_X(x_0),\dots,\ell_X(x_k)))\\
    &=\pi_Z^k((\ell_Y(y_0),\dots,\ell_Y(y_l))\vert (\ell_X(x_0),\dots,\ell_X(x_l)))\quad(\because\,\pi_Z^k\text{ is bicausal})\\
    &=\pi^k((y_0,\dots,y_l)\vert (x_0,\dots,x_l))
\end{align*}
for all $l\in\{0,\dots,k\}$, $(y_0,\dots,y_l)\in Y^{l+1}$, and $(x_0,\dots,x_k)\in X^{k+1}$. Hence, $\pi^k$ is causal from $\alpha^k$ to $\beta^k$. In a similar way one can also prove that $\pi^k$ is causal from $\beta^k$ to $\alpha^k$, too. This completes the proof.
\end{proof}


\section{Details from \Cref{sec: GNN}}

\subsection{Details about Markov chain neural networks (MCNNs)}\label{sec:MCNN}

We first briefly recall the definition of MCNNs from \citep{chen2022weisfeilerlehman}. 

Given any Lipschitz map $\varphi:\R^i\rightarrow\R^j$, define the map $q_\varphi:\mathcal{P}(\R^i)\rightarrow\R^j$ by sending $\alpha$ to $\int_{\R^i}\varphi(x)\alpha(dx)$. 

\paragraph{One layer of MCNN} We define one layer of MCNN as follows: $F_\varphi:\mathcal{M}^L(\R^i)\rightarrow \mathcal{M}^L(\R^j)$ sends $(\mX,\ell_X:X\rightarrow \R^i)$ to $(\mX,\ell_X^\varphi:X\rightarrow \R^j)$ where $\ell_X^\varphi(x):=q_\varphi((\ell_X)_\#m_x^X)$ for each $x\in X$. 

\paragraph{Readout layer} We define a readout layer for MCNN as follows: we first let $S_\varphi:\mathcal{M}^L(\R^i)\rightarrow\R^j$ be defined via $(\mX,\ell_X)\mapsto q_\varphi((\ell_X)_\#\mu_X)$; then we consider any Lipschitz $\psi:\R^j\rightarrow \R$ and call $\psi\circ S_\varphi:\mathcal{M}^L(\R^i)\rightarrow\R$ a readout layer.

\paragraph{$k$-layer MCNNs}
Now, a $k$-layer MCNN is defined as follows. Given a sequence of MLPs $\varphi_i:\R^{d_{i-1}}\rightarrow \R^{d_i}$ for $i=1,\ldots,k+1$ and a MLP $\psi:\R^{d_{k+1}}\rightarrow\R$, the map of the following form is called a $k$-layer MCNN:
\[H:=\psi\circ S_{\varphi_{k+1}}\circ F_{\varphi_k}\circ\cdots\circ F_{\varphi_1}:\mathcal{M}^L(\R^d)\rightarrow \R.\]
We let $\mathcal{N\!N}_k(\R^d)$ denote the collection of all $k$-layer MCNNs.

\paragraph{MCNNs are MP-GNNs}
Now, we show how MCNNs reduce to MP-GNNs when restricted to graph induced LMMCs. Let $h:\mathcal{G}(\R^d)\rightarrow \R$ be a $k$-layer MP-GNN as defined in \Cref{sec:MPNN}. Let $\varphi_i:\R^{d_{i-1}}\rightarrow \R^{d_i}$ for $i=1,\ldots,k+1$ and  $\psi:\R^{d_{k+1}}\rightarrow\R$ be the maps involved in defining $h$. Then, we consider the following $k$-layer MCNN:
\[H:=\psi\circ S_{\varphi_{k+1}}\circ F_{\varphi_k}\circ\cdots\circ F_{\varphi_1}:\mathcal{M}^L(\R^d)\rightarrow \R.\]
We now carry out calculations for the first layer given a graph induced LMMC as input. Let $(G,\ell_G)\in\mathcal{G}(\R^d)$. Consider the corresponding LMMC $(\mX_q(G),\ell_G)$ defined in \Cref{sec: graph induced LMMC}.
Then, by applying $H$ to $(\mX_q(G),\ell_G)$, one has that for any $v\in V$ (without loss of generality, we assume $\deg(v)>0$), 
\begin{align*}
\ell^{\varphi_1}_G(v) &= q_{\varphi_1}((\ell_G)_\#m_v^{G,q})=q_{\varphi_1}\left(q\delta_{\ell_G(v)}+\frac{1-q}{\deg(v)}\sum_{v'\in N_G(v)}w_{vv'}\delta_{\ell_G(v')}\right)\\
    &= \int_{\R^d}\varphi_1(v'')\left(q\delta_{\ell_G(v)}+\frac{1-q}{\deg(v)}\sum_{v'\in N_G(v)}w_{vv'}\delta_{\ell_G(v')}\right)(dv'')\\
    &=q\varphi_1({\ell_G(v)})+\frac{1-q}{\deg(v)}\sum_{v'\in N_G(v)}w_{vv'}\varphi_1({\ell_G(v')}).
\end{align*}
Hence the label map $\ell_G^{\varphi_1}:V\rightarrow\R^{d_1}$ agrees with the one $\ell_G^1$ obtained via the message passing rule specified in \Cref{sec:MPNN}.
In this way, it is straightforward to verify that any MP-GNN defined in Section \ref{sec:MPNN} is a restriction of an MCNN to the collection of graph induced LMMCs $\mathcal{G}_q(\R^d)$, i.e.,
\begin{equation}\label{eq:h H}
    h=H\circ I_q.
\end{equation}
Here recall that $I_q:\mathcal{G}(\R^d)\rightarrow\mathcal{M}^L(\R^d)$ sends a labeled graph to a LMMC and $I_q$ has image $\mathcal{G}_q(\R^d)$.  

\paragraph{Universal approximation property of MCNNs}

Given any $k\in\N$ and any subset $\mathcal{K}\subseteq \lc\mathcal{G}(\R^d),d_{\mathcal{G},q}^{\scriptscriptstyle{(k)}}\rc$, we let $\mathcal{N\!N}_k^q(\R^d)|_\mathcal{K}$ denote the collection of restrictions of functions $h\in \mathcal{N\!N}_k(\R^d)$ to $\mathcal{K}$, i.e.,
\[\mathcal{N\!N}_k(\R^d)|_\mathcal{K}:=\left\{h|_\mathcal{K}:\,h\in \mathcal{N\!N}_k(\R^d)\right\}.\]

The following theorem is a slight variant of \citep[Theorem 4.3]{chen2022weisfeilerlehman}: Notice that $\dWLk$ is a pseudo-distance on $\mathcal{M}^L(\R^d)$. Hence in \citep[Theorem 4.3]{chen2022weisfeilerlehman}, all pseudometric spaces are first transformed to metric spaces by identifying points at 0 distance (cf. \citep[Proposition 1.1.5]{burago2001course}). 
In the following result, we remove this subtlety and deal with the pseudo-distance topology directly.
\begin{theorem}\label{thm: universal MCNN pseudo}
For any $k\in\N$ and any compact subset $\mathcal{K}\subseteq\left(\mathcal{M}^L(\R^d),\dWLk\right)$, one has that $\overline{\mathcal{N\!N}_k(\R^d)|_\mathcal{K}}=C(\mathcal{K},\mathbb{R})$.
\end{theorem}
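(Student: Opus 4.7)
The plan is to reduce this pseudo-distance version to the metric-quotient version already proved as Theorem 4.3 in \citep{chen2022weisfeilerlehman}. Define an equivalence relation $\sim$ on $\mathcal{M}^L(\R^d)$ by declaring $\mX\sim\mY$ iff $\dWLk(\mX,\mY)=0$, form the quotient $\widetilde{\mathcal{M}^L(\R^d)}:=\mathcal{M}^L(\R^d)/\!\sim$ equipped with the well-defined induced metric $\tilde{d}^{(k)}_{\mathrm{WL}}$, and let $\pi:\mathcal{M}^L(\R^d)\to\widetilde{\mathcal{M}^L(\R^d)}$ denote the canonical projection. By construction $\pi$ is $1$-Lipschitz, so $\widetilde{\mathcal{K}}:=\pi(\mathcal{K})$ is a compact subset of the genuine metric space $(\widetilde{\mathcal{M}^L(\R^d)},\tilde{d}^{(k)}_{\mathrm{WL}})$.

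The next step is to establish a natural bijection between $C(\mathcal{K},\R)$ and $C(\widetilde{\mathcal{K}},\R)$ given by $f\longleftrightarrow f\circ\pi|_{\mathcal{K}}$. The forward direction is the key observation about pseudometric topologies: if $f\in C(\mathcal{K},\R)$ and $\dWLk(\mX,\mY)=0$, then continuity of $f$ at $\mX$ forces $f(\mX)=f(\mY)$, so $f$ is constant on $\sim$-classes inside $\mathcal{K}$ and descends to a map $\tilde{f}:\widetilde{\mathcal{K}}\to\R$. Continuity of $\tilde{f}$ follows because $\pi|_{\mathcal{K}}:\mathcal{K}\to\widetilde{\mathcal{K}}$ is a continuous surjection from a compact space, hence a quotient map in the topological sense. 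The backward direction is immediate from continuity of $\pi$. Moreover, this bijection is an isometry in the sup-norm, since $\sup_{\mX\in\mathcal{K}}|f(\mX)|=\sup_{[\mX]\in\widetilde{\mathcal{K}}}|\tilde{f}([\mX])|$.

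Next I need the analogous statement for MCNNs: every $H\in\mathcal{N\!N}_k(\R^d)$ factors through $\pi$, i.e., there is $\widetilde{H}\in C(\widetilde{\mathcal{M}^L(\R^d)},\R)$ with $H=\widetilde{H}\circ\pi$. Here I invoke the Lipschitz property of MCNNs with respect to $\dWLk$ established in \citep{chen2022weisfeilerlehman} — each layer $F_\varphi$ is Lipschitz with respect to the corresponding WL distance via the Wasserstein-contraction estimate $\dW(q_\varphi\circ\mu,q_\varphi\circ\nu)\leq\mathrm{Lip}(\varphi)\,\dW(\mu,\nu)$ applied iteratively, and the readout $\psi\circ S_{\varphi_{k+1}}$ is likewise Lipschitz. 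Lipschitz maps are in particular constant on zero-pseudo-distance classes, so they descend as required. Thus $\mathcal{N\!N}_k(\R^d)|_{\mathcal{K}}$ corresponds under the bijection above to a subfamily of $C(\widetilde{\mathcal{K}},\R)$ that coincides with the restriction of MCNNs to $\widetilde{\mathcal{K}}$ in the metric-quotient setting.

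Finally, I apply the original universal approximation theorem (Theorem 4.3 of \citep{chen2022weisfeilerlehman}) on the compact metric space $(\widetilde{\mathcal{K}},\tilde{d}^{(k)}_{\mathrm{WL}})$ to obtain density of the descended MCNN family in $C(\widetilde{\mathcal{K}},\R)$ in sup-norm, then transport the density statement back through the sup-norm isometry of the previous paragraph to conclude $\overline{\mathcal{N\!N}_k(\R^d)|_{\mathcal{K}}}=C(\mathcal{K},\R)$. The only delicate point I anticipate is verifying carefully that the topological quotient structure on $\widetilde{\mathcal{K}}$ agrees with the metric structure coming from $\tilde{d}^{(k)}_{\mathrm{WL}}$ (so that continuous functions really are in bijection), but this is a standard fact for pseudometric quotients of compact spaces (see, e.g., \citep[Proposition 1.1.5]{burago2001course} cited in the paper). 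Everything else is bookkeeping transferring the density from the metric quotient back to the original pseudo-distance setting.
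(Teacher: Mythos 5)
Your proposal is correct and follows essentially the same route as the paper's own proof: pass to the metric quotient $Q:\mathcal{M}^L(\R^d)\to\mathcal{M}^L_k(\R^d)$, note $Q(\mathcal{K})$ is compact, identify $C(\mathcal{K},\R)$ with $C(Q(\mathcal{K}),\R)$ via a sup-norm isometry (using that continuous functions and MCNNs are constant on zero-$\dWLk$ classes), apply Theorem 4.3 of \citep{chen2022weisfeilerlehman} on the quotient, and transport density back. Your extra care about the quotient topology agreeing with the metric topology on $\pi(\mathcal{K})$ is the same point the paper handles implicitly, so no changes are needed.
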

\begin{proof}
We let $\mathcal{M}^L_k(\R^d)$ denote the space obtained by identifying points at 0 distance {(w.r.t $\dWLk$)} in $\mathcal{M}^L(\R^d)$. Then, $\dWLk$ become a metric on $\mathcal{M}^L_k(\R^d)$. Let $Q$ denote this identification map (also called the \emph{quotient} map):
    \begin{equation}\label{eq:pseudo}
        Q:\left(\mathcal{M}^L(\R^d),\dWLk\right)\rightarrow \left(\mathcal{M}^L_k(\R^d),\dWLk\right).
    \end{equation}

Let $\mathcal{K}_Q:=Q(\mathcal{K})$. Then, $\mathcal{K}_Q$ is compact in $\mathcal{M}^L_k(\R^d)$. Consider the map $Q_c:C(\mathcal{K}_Q,\R)\rightarrow C(\mathcal{K},\R)$ defined by sending $f_Q$ to $f_Q\circ Q$. It is easy to check that $Q_c$ is an isometric embedding w.r.t. the sup metrics on $C(\mathcal{K}_Q,\R)$ and $C(\mathcal{K},\R)$. It is easy to check that $Q_c$ has its isometric inverse $(Q_c)^{-1}:C(\mathcal{K},\R)\rightarrow C(\mathcal{K}_Q,\R)$ defined as follows, which is also an isometric embedding: for any $f\in C(\mathcal{K},\R)$, we let $f_Q:\mathcal{K}_Q\rightarrow\R$ be defined such that for any equivalence class $[(\mX,\ell_X)]\in\mathcal{K}_Q$, $f_Q([(\mX,\ell_X)]):=f((\mX,\ell_X))$ (it is obvious that $f_Q$ does not depend on the choice of the representative $(\mX,\ell_X)$.). Hence, $Q_c$ gives rise to an isometry from $C(\mathcal{K}_Q,\R)$ to $C(\mathcal{K},\R)$.

Now, by \citep[Theorem 4.3]{chen2022weisfeilerlehman}, one has that
\[\overline{Q_c\left(\NN_k(\R^d)|_\mathcal{K}\right)}=C(\mathcal{K}_Q,\R).\]

Since $Q_c$ is an isometry, we have that
$\overline{\mathcal{N\!N}_k(\R^d)|_\mathcal{K}}=C(\mathcal{K},\mathbb{R})$.  
\end{proof}

\subsection{Proof of \Cref{thm: better than gin}}
The proof of the theorem is based on the following several lemmas. 
\begin{lemma}\label{lm: countable q}
We let $P_S:=\{\sum_{i=1}^np_i:p_1,\ldots,p_n\in P\}$ denote the collection of finite sums of elements in $P$. Then, for a fixed $q\in (0,1)$, let 
    \begin{align*}
        Q:=&\{1\}\cup\left\{\frac{\overline{\deg}(v)}{\sum_{v'\in V}\overline{\deg}(v')}:G\text{ has edge weights in }P, \, v\in V\right\}\\
        \cup& \left\{mq+s\frac{(1-q)}{\deg(v)}:G\text{ has edge weights in }P, \, v\in V \text{ is such that }\deg(v)>0, m\in \N,\,s\in P_S\right\}.
    \end{align*}
    Then, $Q$ is a countable set.
\end{lemma}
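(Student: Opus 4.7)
The proof plan relies entirely on the standard fact that a countable union of countable sets is countable, together with the observation that every ingredient building $Q$ comes from a countable set.

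First I would establish that $P_S$ is countable. Since $P$ is countable by assumption, each finite Cartesian power $P^n$ is countable, and the summation map $P^n \to \R$ has countable image. Then $P_S = \bigcup_{n \ge 1} \{\sum_{i=1}^n p_i : (p_1,\ldots,p_n) \in P^n\}$ is a countable union of countable sets, hence countable.

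Next I would locate the possible values of $\overline{\deg}(v)$ and $\sum_{v'\in V}\overline{\deg}(v')$ for any graph $G$ with edge weights in $P$. By definition of $\overline{\deg}$ (see \Cref{eq: modified degree}), for any vertex $v$ the quantity $\overline{\deg}(v)$ is either $1$ (isolated case) or a finite sum of elements of $P$, so $\overline{\deg}(v) \in \{1\} \cup P_S$. Summing finitely many such terms, $\sum_{v' \in V}\overline{\deg}(v')$ likewise lies in the countable set $\N \cup P_S \cup \{n + s : n \in \N, s \in P_S\}$, which is countable because it is the image of a countable product under addition. Therefore the second set in the definition of $Q$ is contained in the countable set
\[\left\{\tfrac{a}{b} : a \in \{1\} \cup P_S,\ b \in \N \cup P_S \cup (\N + P_S)\right\},\]
which is countable as the image of a countable product.

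For the third set, I would argue similarly: for a vertex $v$ with $\deg(v) > 0$, the value $\deg(v) = \overline{\deg}(v)$ lies in $P_S$; the parameters $m \in \N$ and $s \in P_S$ also range over countable sets; and $q$ is fixed. Thus the expressions $mq + s\cdot\frac{1-q}{\deg(v)}$ range over the image of the countable product $\N \times P_S \times P_S$ under a fixed formula, which is again countable.

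Finally, $Q$ is a union of three sets each contained in a countable set, so $Q$ is countable. There is no real obstacle here; the only point demanding mild care is correctly identifying that the denominators $\deg(v)$ and $\sum_{v'}\overline{\deg}(v')$ themselves draw from countable sets, which follows immediately once one observes that edge weights live in the countable set $P$ and that vertex degrees are finite sums thereof.
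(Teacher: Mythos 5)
Your argument is correct and follows essentially the same route as the paper's proof: both rest on the observation that $P_S$ is countable and that every degree quantity $\deg(v)$, $\overline{\deg}(v)$, and $\sum_{v'}\overline{\deg}(v')$ arising from a graph with edge weights in $P$ lies in a countable set, so each piece of $Q$ is the image of a countable product under a fixed map and a countable union of countable sets is countable. Your write-up simply spells out the details (e.g.\ the decomposition $P_S=\bigcup_{n\geq 1}\{\sum_{i=1}^n p_i\}$ and the explicit containing sets) that the paper leaves implicit.
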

\begin{proof}
Note that $P_S$ is still a countable set.
Since graphs have edge weights from a countable set $P$, every $\deg(v)$ (and $\overline{\deg}(v)$) encountered in the definition above is from a countable set. Hence, $Q$ is countable.    
\end{proof}

\begin{lemma}
Consider the union
$$C_{P}(Z)\coloneqq\bigcup_{(G,\ell_G)\in\mathcal{G}_P(Z)}\lc\{(\ell_G)_\#\mu_G\}\cup\left\{(\ell_G)_\#m_v^{G,q}:\,v\in V\right\}\rc\subseteq \mathcal{P}(\R^d)$$
Then, $C_{P}(Z)$ is a countable subset of $\mathcal{P}(\R^d)$.
\end{lemma}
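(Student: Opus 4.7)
The plan is to show that every measure in $C_P(Z)$ is a finitely supported probability measure on $Z$ whose support consists of elements of $Z$ (which is countable) and whose weights come from a countable set, and then conclude by a standard countability argument.

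First, I would observe that each element of $C_P(Z)$ has finite support contained in $Z$. For $(\ell_G)_\#\mu_G$, the support is inside $\ell_G(V_G)$, which is finite. For $(\ell_G)_\#m_v^{G,q}$, the support is inside $\ell_G(\{v\}\cup N_G(v))$ when $\deg(v)>0$, or just $\{\ell_G(v)\}$ when $\deg(v)=0$. So each measure in $C_P(Z)$ can be written as $\sum_{z\in Z'} c_z\,\delta_z$ for some finite $Z'\subseteq Z$ and coefficients $c_z\in[0,1]$.

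Next, I would identify the possible coefficients. For $(\ell_G)_\#\mu_G$, grouping vertices by their labels gives
$$c_z=\sum_{v:\,\ell_G(v)=z}\frac{\overline{\deg}(v)}{\sum_{v'\in V}\overline{\deg}(v')},$$
which is a finite sum of elements of the countable set $Q$ from \Cref{lm: countable q}. For $(\ell_G)_\#m_v^{G,q}$ with $\deg(v)>0$, the coefficient at a point $z$ takes the form $mq+s\,\frac{1-q}{\deg(v)}$ with $m\in\{0,1\}$ and $s=\sum_{v'\in N_G(v):\,\ell_G(v')=z}w_{vv'}\in P_S$, which lies directly in $Q$. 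Hence every coefficient lies in the set $Q_S$ of finite sums of elements of $Q$. Since $Q$ is countable (\Cref{lm: countable q}), so is $Q_S$.

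Finally, I would finish by a routine countability argument: the set of finite subsets of the countable set $Z$ is countable, and for each such finite subset $Z'$ of cardinality $n$ the set of tuples in $Q_S^n$ is countable. Taking the union over all finite $Z'$ produces a countable family of measures that contains $C_P(Z)$, so $C_P(Z)$ is countable. The only mildly delicate point is making sure the grouping of vertices sharing a common label is handled correctly (so that one says ``finite sum of elements of $Q$'' rather than ``element of $Q$''); but since a countable union of countable sums of countable sets is still countable, no real obstacle arises.
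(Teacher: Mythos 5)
Your proposal is correct and follows essentially the same route as the paper: write each measure in $C_P(Z)$ as a finitely supported atomic measure on the countable set $Z$ with coefficients given by finite sums of elements of the countable set $Q$ from \Cref{lm: countable q}, and conclude by the standard countability argument. Your version just spells out the grouping-by-label step and the enumeration over finite supports a bit more explicitly than the paper does.
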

\begin{proof}
Every probability measure $\alpha\in C_P(Z)$ can be expressed as
\[\alpha=\sum_{z\in Z}\alpha(z)\delta_z, \text{ where }\alpha(z)\text{ is a sum of finitely many elements in }Q.\]

Then, by the fact that both $Z$ and $Q$ are countable (cf. \Cref{lm: countable q}), we have that $C_P(Z)$ is countable.
\end{proof}

Recall notation from \Cref{sec:MCNN}. Then, we have that
\begin{lemma}\label{lm:countable image}
Fix any $d\in\mathbb{N}$. Then, for any Lipschitz $\varphi:\R^d\rightarrow\R$, any countable subset $Z\subseteq\R^d$ and any countable subset $P\subseteq \R$, there exists a countable subset $Z'\subseteq\R$ such that 
\begin{equation}\label{eq:countable q}
    F_{\varphi}(\mathcal{G}_P(Z))\subseteq \mathcal{G}_{P}(Z')\text{ and }S_{\varphi}(\mathcal{G}_P(Z))\subseteq Z'.
\end{equation}
\end{lemma}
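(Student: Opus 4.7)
The plan is to construct $Z'$ as the image of the countable set $C_P(Z) \subseteq \mathcal{P}(\mathbb{R}^d)$ under the map $q_\varphi$. Since $C_P(Z)$ is countable (as established in the preceding lemma) and $q_\varphi$ is a function, the set
\[
Z' \coloneqq \{ q_\varphi(\alpha) : \alpha \in C_P(Z) \} \subseteq \mathbb{R}
\]
is automatically countable. The remainder of the argument is verifying that this $Z'$ witnesses both inclusions in \eqref{eq:countable q}.

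For the first inclusion, I would take an arbitrary $(G,\ell_G) \in \mathcal{G}_P(Z)$, view it as the graph-induced LMMC $(\mathcal{X}_q(G),\ell_G)$, and apply $F_\varphi$. By definition of $F_\varphi$, this leaves the underlying MMC $\mathcal{X}_q(G)$ unchanged and replaces the label function by $\ell_G^\varphi(v) = q_\varphi((\ell_G)_\# m_v^{G,q})$ for each $v \in V$. Since $(\ell_G)_\# m_v^{G,q} \in C_P(Z)$ by definition of $C_P(Z)$, we have $\ell_G^\varphi(v) \in Z'$. Thus $F_\varphi((\mathcal{X}_q(G),\ell_G)) = (\mathcal{X}_q(G),\ell_G^\varphi)$ is a graph-induced LMMC whose underlying graph still has edge weights in $P$ and whose labels lie in $Z'$, i.e., it belongs to $\mathcal{G}_P(Z')$.

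For the second inclusion, the readout map gives $S_\varphi((\mathcal{X}_q(G),\ell_G)) = q_\varphi((\ell_G)_\# \mu_G)$, and $(\ell_G)_\# \mu_G \in C_P(Z)$ is again one of the measures appearing in the definition of $C_P(Z)$, so this value lies in $Z'$ by construction.

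I do not anticipate a genuine obstacle: the lemma is essentially a bookkeeping claim that the countability established for $C_P(Z)$ propagates through one application of $q_\varphi$. The only subtlety is making sure that $F_\varphi$ applied to a graph-induced LMMC still produces a graph-induced LMMC (which is immediate because $F_\varphi$ does not touch the Markov kernel or the stationary distribution, only the labels), so that the output indeed sits inside $\mathcal{G}_P(Z')$ rather than merely in $\mathcal{M}^L(Z')$.
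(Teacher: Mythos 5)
Your proposal is correct and matches the paper's proof, which also takes $Z' := q_\varphi(C_P(Z))$ and notes its countability; the paper simply declares the inclusions in \eqref{eq:countable q} obvious, whereas you spell out the verification (including the point that $F_\varphi$ only alters labels, not the underlying kernel or measure). Nothing to fix.
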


\begin{proof}
Let $Z'\coloneqq q_\varphi(C_P(Z))\subseteq\R$. Since $C_P(Z)$ is countable, $Z'$ is also countable. It is obvious that $Z'$ satisfies \Cref{eq:countable q} which concludes the proof.
\end{proof}

\begin{lemma}\label{lm:restriction injective}
For any countable subset $C\subseteq\mathcal{P}(\R^d)$, there exists a Lipschitz function $\varphi:\R^d\rightarrow\R$ such that the restriction of $q_\varphi:\mathcal{P}(\R^d)\rightarrow\R$ to $C$ is injective.
\end{lemma}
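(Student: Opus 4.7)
The approach is a Baire category argument in the Banach space of bounded Lipschitz functions. Let $X = \mathrm{Lip}_b(\R^d)$, endowed with the bounded-Lipschitz norm $\|\varphi\|_{BL} := \|\varphi\|_\infty + \mathrm{Lip}(\varphi)$. It is standard that $(X,\|\cdot\|_{BL})$ is a Banach space, and in particular a Baire space. Observe that every $\varphi\in X$ is in particular Lipschitz, so exhibiting a suitable $\varphi\in X$ will suffice.

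For each ordered pair $(\alpha,\beta)\in C\times C$ with $\alpha\neq\beta$, define the functional
\[
L_{\alpha,\beta}:X\rightarrow\R,\qquad L_{\alpha,\beta}(\varphi):=q_\varphi(\alpha)-q_\varphi(\beta)=\int_{\R^d}\varphi\,d\alpha-\int_{\R^d}\varphi\,d\beta.
\]
This functional is clearly linear, and it is continuous because $|L_{\alpha,\beta}(\varphi)|\leq 2\|\varphi\|_\infty\leq 2\|\varphi\|_{BL}$. The key step is to check that $L_{\alpha,\beta}\not\equiv 0$ whenever $\alpha\neq\beta$, which amounts to the (standard) fact that bounded Lipschitz functions separate Borel probability measures on $\R^d$: if $L_{\alpha,\beta}\equiv 0$, then $\int f\,d\alpha=\int f\,d\beta$ for every $f\in C_b(\R^d)$ by uniform approximation of bounded continuous functions by bounded Lipschitz ones (for instance via the inf-convolutions $f_k(x):=\inf_{y\in\R^d}\bigl(f(y)+k\|x-y\|\bigr)$), which forces $\alpha=\beta$ by Portmanteau.

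Once this is in hand, $\ker L_{\alpha,\beta}$ is a proper closed linear subspace of $X$; since any proper closed linear subspace of a topological vector space has empty interior, $\ker L_{\alpha,\beta}$ is nowhere dense in $X$. The collection of pairs $\{(\alpha,\beta)\in C\times C:\alpha\neq\beta\}$ is countable because $C$ is countable, so the union
\[
\bigcup_{\substack{\alpha,\beta\in C\\ \alpha\neq\beta}}\ker L_{\alpha,\beta}
\]
is meager in $X$. By the Baire category theorem its complement is non-empty, and any $\varphi$ chosen from the complement is a bounded Lipschitz function that satisfies $q_\varphi(\alpha)\neq q_\varphi(\beta)$ for all distinct $\alpha,\beta\in C$, which is exactly the conclusion.

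The only non-trivial point, and the one I expect to be the main ``obstacle'', is the separating claim (that $L_{\alpha,\beta}\not\equiv 0$ when $\alpha\neq\beta$); everything else is formal. An alternative to Baire category would be to enumerate the pairs $(\alpha_k,\beta_k)$, pick unit-norm distinguishing $\psi_k\in X$ for each one, and construct $\varphi=\sum_k\epsilon_k\psi_k$ with $\epsilon_k>0$ chosen recursively small enough both to keep $\sum_k\epsilon_k<\infty$ (so $\varphi\in X$) and to keep $L_{\alpha_k,\beta_k}(\varphi)\neq 0$ at each stage, but this is strictly more cumbersome than the genericity argument above.
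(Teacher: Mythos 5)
Your proof is correct and takes essentially the same route as the paper: a Baire-category argument in a Banach space of Lipschitz functions, showing the kernels of the countably many functionals $\varphi\mapsto\int\varphi\,d(\alpha-\beta)$, $\alpha\neq\beta\in C$, are proper closed (hence nowhere dense) subspaces and picking $\varphi$ off their union -- the paper works in $\mathrm{Lip}_0(\R^d)$ with the Lipschitz norm and uses the finite first-moment assumption on $\mathcal{P}(\R^d)$ for well-definedness, where you use bounded Lipschitz functions with the BL norm instead. One small repair to your separation step: a bounded continuous $f$ is in general only approximated pointwise and monotonically (not uniformly) by the inf-convolutions $f_k$, but monotone convergence then still gives $\int f\,d\alpha=\int f\,d\beta$ for all $f\in C_b(\R^d)$, so the claim that bounded Lipschitz functions separate probability measures goes through.
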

\begin{proof}
Consider the following vector space:
$$\mathrm{Lip}_0(\R^d)\coloneqq\{f:\R^d\rightarrow\R|\,f\text{ is Lipschitz and }f(0)=0\}.$$
When equipped with the norm $\|f\|\coloneqq\sup_{x\neq y}\frac{|f(x)-f(y)|}{\|x-y\|}$, $\mathrm{Lip}_0(\R^d)$ is a Banach space \citep{weaver1995order}. 
Consider the following collection of signed measures $D\coloneqq\{\alpha-\beta:\,\alpha,\beta\in C\text{ and }\alpha\neq \beta\}$. 
Since $C$ is countable, $D$ is also countable. 
Each $\mu\in D$ gives rise to a linear functional $\psi_\mu:\mathrm{Lip}_0(\R^d)\rightarrow\R$ defined by $f\mapsto\int_{\R^d}f(x)\mu(dx)$. 
This functional is well-defined since every probability measure in $\mathcal{P}(\R^d)$ is assumed to have finite 1-moment (see \Cref{sec:probability} for this assumption on $\mathcal{P}(\R^d)$). Moreover, it is easy to check that $\psi_\mu$ is bounded and non-zero. Then, $\ker(\psi_\mu)$ is nowhere dense since it is a proper closed subspace of $\mathrm{Lip}_0(\R^d)$. By Baire category theorem, we have that $\cup_{\mu\in D}\ker(\psi_\mu)\neq\mathrm{Lip}_0(\R^d)$. Then, choose an arbitrary $\varphi\in \mathrm{Lip}_0(\R^d)\backslash\cup_{\mu\in D}\ker(\psi_\mu)$ and it follows that $q_\varphi|_C$ is injective.
\end{proof}

\begin{proof}[Proof of \Cref{thm: better than gin}]
We let $Z_0\coloneqq Z$ and $C_1\coloneqq C_P(Z_0)$. By \Cref{lm:restriction injective}, there exists a Lipschitz map $\varphi_1:\R^d\rightarrow\R$ such that $q_{\varphi_1}|_{C_1}$ is injective. By \Cref{lm:countable image}, there exists a countable subset $Z_1\subseteq\R$ such that 
$$F_{\varphi_1}(\mathcal{G}_{P}(Z_0))\subseteq\mathcal{G}_{P}(Z_1). $$
Let $C_2\coloneqq C_{P}(Z_1)$. By \Cref{lm:restriction injective} again, there exists a Lipschitz map $\varphi_2:\R\rightarrow\R$ such that $q_{\varphi_2}|_{C_2}$ is injective. Then, inductively, for each $i=1,\ldots,k$, there exist a countable subset $Z_i\subseteq\R$ and a Lipschitz map $\varphi_i:\R\rightarrow\R$ so that $q_{\varphi_i}$ is injective when restricted to $C_i\coloneqq C_{P}(Z_{i-1})$ and
$F_{\varphi_{i}}(\mathcal{G}_{P}(Z_{i-1}))\subseteq\mathcal{G}_{P}(Z_i)$. Similarly, there exist a countable subset $S\subseteq\R$ and a Lipschitz map $\varphi_{k+1}:\R\rightarrow\R$ such that $q_{\varphi_{k+1}}$ is injective when restricted to $C_{k+1}\coloneqq C_{P}(Z_{k})$ and
$S_{\varphi_{k+1}}(\mathcal{G}_{P}(Z_{k}))\subseteq S$. We then let $\psi:\R\rightarrow\R$ denote the identity map and let $h\coloneqq\psi\circ S_{\varphi_{k+1}}\circ F_{\varphi_k}\circ \cdots\circ F_{\varphi_1}:\mathcal{G}(\R^d)\rightarrow \mathbb{R}.$

Pick any $(G_1,\ell_{G_1}),(G_2,\ell_{G_2})\in\mathcal{G}_P(Z)$. We prove that $\dGk((G_1,\ell_{G_1}),(G_2,\ell_{G_2}))>0$ iff $h((G_1,\ell_{G_1}))\neq h((G_2,\ell_{G_2}))$.

For each $i=1,\ldots,k$, we let
\begin{equation}\label{eq:notation ell lip}
    \lc G_1,\ell_{G_1}^{\scriptscriptstyle{(\varphi,i)}}\rc \coloneqq F_{\varphi_i}\circ\cdots\circ F_{\varphi_1}((G_1,\ell_{G_1})).
\end{equation}
We similarly define $\lc G_2,\ell_{G_2}^{\scriptscriptstyle{(\varphi,i)}}\rc$. 
Then, we prove that for each $i=1,\ldots,k$, 
\begin{equation}\label{eq:lip separate points}
    \forall v_1\in V_{G_1},\,v_2\in V_{G_2},\,\, \ell_{G_1}^{\scriptscriptstyle{(\varphi,i)}}(v_1)=\ell_{G_2}^{\scriptscriptstyle{(\varphi,i)}}(v_2)\mbox{ iff }\WLh{i}{(\mX_q(G_1),\ell_{G_1})}(v_1)=\WLh{i}{(\mX_q(G_2),\ell_{G_2})}(v_2)
\end{equation}

Given \Cref{eq:lip separate points}, it is obvious that 
\[\lc \WLh{k}{(\mX_q(G_1),\ell_{G_1})}\rc_\#\mu_{G_1}\neq \lc \WLh{k}{(\mX_q(G_2),\ell_{G_2})}\rc_\#\mu_{G_2} \text{ iff } \lc \ell_{G_1}^{\scriptscriptstyle{(\varphi,k)}}\rc_\#\mu_{G_1}\neq \lc \ell_{G_2}^{\scriptscriptstyle{(\varphi,k)}}\rc_\#\mu_{G_2}.\]
Note that $\lc \ell_{G_1}^{\scriptscriptstyle{(\varphi,k)}}\rc_\#\mu_{G_1},\lc \ell_{G_2}^{\scriptscriptstyle{(\varphi,k)}}\rc_\#\mu_{G_2}\in C_{k+1}$. Since $q_{\varphi_{k+1}}$ is injective on $C_{k+1}$ and $\psi$ is injective on $S$, we have that $\lc \ell_{G_1}^{\scriptscriptstyle{(\varphi,k)}}\rc_\#\mu_{G_1}\neq \lc \ell_{G_2}^{\scriptscriptstyle{(\varphi,k)}}\rc_\#\mu_{G_2}$ iff $h((G_1,\ell_{G_1}))\neq h((G_2,\ell_{G_2}))$. Therefore, $\dGk((G_1,\ell_{G_1}),(G_2,\ell_{G_2}))>0$ iff $h((G_1,\ell_{G_1}))\neq h((G_2,\ell_{G_2}))$.

To conclude the proof, we prove \Cref{eq:lip separate points} by induction on $i=1,\ldots,k$. When $i=1$, since $(\ell_{G_1})_\#m_{v_1}^{G_1,q},(\ell_{G_2})_\#m_{v_2}^{G_2,q}\in C_1,\, \forall v_1\in V_{G_1},\,v_2\in V_{G_2}$, by injectivity of $q_{\varphi_1}$ on $C_1$, we have that 
\[ \forall v_1\in V_{G_1},\,v_2\in V_{G_2},\,\, (\ell_{G_1})_\#m_{v_1}^{G_1,q}=(\ell_{G_2})_\#m_{v_2}^{G_2,q}\mbox{ iff }q_{\varphi_1}\big((\ell_{G_1})_\#m_{v_1}^{G_1,q}\big)= q_{\varphi_1}\big((\ell_{G_2})_\#m_{v_2}^{G_2,q}\big).\]
Equivalent speaking, 
\[ \forall v_1\in V_{G_1},\,v_2\in V_{G_2},\,\, \WLh{1}{(\mX_q(G_1),\ell_{G_1})}(v_1)=\WLh{1}{(\mX_q(G_2),\ell_{G_2})}(v_2)\mbox{ iff }\ell_{G_1}^{\scriptscriptstyle{(\varphi,1)}}(x)=\ell_{G_2}^{\scriptscriptstyle{(\varphi,1)}}(v_2).\]

Now, we assume that \Cref{eq:lip separate points} holds for some $i\geq 1$. For $i+1$, since 
$$\lc \ell_{G_1}^{\scriptscriptstyle{(\varphi,i)}}\rc_\#m_{v_1}^{G_1,q},\lc \ell_{G_2}^{\scriptscriptstyle{(\varphi,i)}}\rc_\#m_{v_2}^{G_2,q}\in C_{i+1},\, \forall v_1\in V_{G_1},\,v_2\in V_{G_2},$$
by injectivity of $q_{\varphi_{i+1}}$ on $C_{i+1}$, we have that $\forall v_1\in V_{G_1},v_2\in V_{G_2},$
\[\lc \ell_{G_1}^{\scriptscriptstyle{(\varphi,i)}}\rc_\#m_{v_1}^{G_1,q}=\lc \ell_{G_2}^{\scriptscriptstyle{(\varphi,i)}}\rc_\#m_{v_2}^{G_2,q}\mbox{ iff }q_{\varphi_{i+1}}\lc\lc \ell_{G_1}^{\scriptscriptstyle{(\varphi,i)}}\rc_\#m_{v_1}^{G_1,q}\rc= q_{\varphi_{i+1}}\lc\lc \ell_{G_2}^{\scriptscriptstyle{(\varphi,i)}}\rc_\#m_{v_2}^{G_2,q}\rc.\]
Equivalent speaking, 
\[ \forall v_1\in V_{G_1},\,y\in Y,\,\, \lc \ell_{G_1}^{\scriptscriptstyle{(\varphi,i)}}\rc_\#m_{v_1}^{G_1,q}=\lc \ell_{G_2}^{\scriptscriptstyle{(\varphi,i)}}\rc_\#m_{v_2}^{G_2,q}\mbox{ iff }\ell_{G_1}^{\scriptscriptstyle{(\varphi,i+1)}}(x)=\ell_{G_2}^{\scriptscriptstyle{(\varphi,i+1)}}(y).\]
By the induction assumption, $\forall v_1\in V_{G_1},\,v_2\in V_2$ we have that
\[ \ell_{G_1}^{\scriptscriptstyle{(\varphi,i)}}(v_1)=\ell_{G_2}^{\scriptscriptstyle{(\varphi,i)}}(v_2)\mbox{ iff }\WLh{i}{(\mX_q(G_1),\ell_{G_1})}(v_1)=\WLh{i}{(\mX_q(G_2),\ell_{G_2})}(v_2).\]
This implies that
\begin{align*}
    &\lc \ell_{G_1}^{\scriptscriptstyle{(\varphi,i)}}\rc_\#m_{v_1}^{G_1,q}=\lc \ell_{G_2}^{\scriptscriptstyle{(\varphi,i)}}\rc_\#m_{v_2}^{G_2,q}\\
    \mbox{ iff }&\lc\WLh{i}{(\mX_q(G_1),\ell_{G_1})}\rc_\#m_{v_1}^{G_1,q}=\lc\WLh{i}{(\mX_q(G_2),\ell_{G_2})}\rc_\#m_{v_2}^{G_2,q}\\
    \mbox{ iff }&\WLh{i+1}{(\mX_q(G_1),\ell_{G_1})}(v_1)=\WLh{i+1}{(\mX_q(G_2),\ell_{G_2})}(v_2).
\end{align*}
\[ \]
Therefore, 
\[ \ell_{G_1}^{\scriptscriptstyle{(\varphi,i+1)}}(v_1)=\ell_{G_2}^{\scriptscriptstyle{(\varphi,i+1)}}(v_2)\mbox{ iff }\WLh{i+1}{(\mX_q(G_1),\ell_{G_1})}(v_1)=\WLh{i+1}{(\mX_q(G_2),\ell_{G_2})}(v_2)\]
and we thus conclude the proof.
\end{proof}

\subsection{Proof of \Cref{thm: lip}}
We need the following basic fact.
\begin{lemma}[{\citep[Lemma B.4]{chen2022weisfeilerlehman}}]\label{lm:lipschitz inherit}
For any $C$-Lipschitz function $\varphi:\R^i\rightarrow\R^j$, we have that the map $q_\varphi:\prob(\R^i)\rightarrow\R^j$ is $C$-Lipschitz.
\end{lemma}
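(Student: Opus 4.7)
The plan is to unpack the definition of $q_\varphi$ as an expectation and then exploit the coupling characterization of the Wasserstein distance recalled in \Cref{sec:probability}. Recall that $q_\varphi(\alpha) = \int_{\R^i} \varphi(x)\,\alpha(dx)$ for $\alpha \in \prob(\R^i)$. The goal is to show, for any $\alpha,\beta \in \prob(\R^i)$, that $\|q_\varphi(\alpha) - q_\varphi(\beta)\| \leq C\cdot \dW(\alpha,\beta)$.

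First, I would fix an arbitrary coupling $\pi \in \cpl(\alpha,\beta)$. Since the marginals of $\pi$ are $\alpha$ and $\beta$, I can rewrite both integrals over $\R^i$ as integrals over $\R^i\times\R^i$ against $\pi$, so that
\[
q_\varphi(\alpha) - q_\varphi(\beta) = \int_{\R^i\times \R^i} \bigl(\varphi(x) - \varphi(y)\bigr)\,\pi(dx\times dy).
\]
The integrability of $\varphi$ against $\alpha,\beta$ (and hence of $\varphi(x)-\varphi(y)$ against $\pi$) follows from the Lipschitz bound $\|\varphi(x)\| \leq \|\varphi(0)\| + C\|x\|$ combined with the finite 1-moment assumption on elements of $\prob(\R^i)$.

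Next, I would pass the norm inside the integral by the triangle inequality for Bochner integrals, and then apply the Lipschitz property pointwise to obtain
\[
\|q_\varphi(\alpha) - q_\varphi(\beta)\| \leq \int_{\R^i\times \R^i} \|\varphi(x) - \varphi(y)\|\,\pi(dx\times dy) \leq C \int_{\R^i\times \R^i} \|x-y\|\,\pi(dx\times dy).
\]
Finally, taking the infimum over $\pi \in \cpl(\alpha,\beta)$ on the right-hand side and using the definition of $\dW$ yields the desired bound. There is no real obstacle here; the only subtle point is the integrability check above, which is handled by the combination of linear growth of $\varphi$ and the finite 1-moment assumption.
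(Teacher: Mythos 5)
Your argument is correct and is exactly the standard coupling proof of this fact; the paper itself gives no proof here but simply cites \citep[Lemma B.4]{chen2022weisfeilerlehman}, whose argument is the same: rewrite $q_\varphi(\alpha)-q_\varphi(\beta)$ as an integral of $\varphi(x)-\varphi(y)$ against an arbitrary coupling, bound pointwise by $C\|x-y\|$, and take the infimum over couplings. Your integrability remark (linear growth of $\varphi$ plus the finite 1-moment assumption) is the right justification, so nothing is missing.
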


Recall notations from \Cref{sec:MCNN}. Given a $k$-layer MCNN $H\coloneqq\psi\circ S_{\varphi_{k+1}}\circ F_{\varphi_k}\circ \cdots \circ F_{\varphi_1}$, recall from \Cref{eq:h H} that $h:=H\circ I_q$ gives rise to a MP-GNN.
For any $(G,\ell_G)\in\mathcal{G}(Z)$, consider its induced LMMC $(\mX_q(G),\ell_G)$.
Following notation in \Cref{eq:notation ell lip} we let
\begin{equation}\label{eq:notation ell}
    \lc \mX_q(G),\ell_G^{\scriptscriptstyle{(\varphi,i)}}\rc \coloneqq F_{\varphi_i}\circ\cdots\circ F_{\varphi_1}((\mX_q(G),\ell_G))
\end{equation}

Now, we assume that for $i=1,\ldots,k$, $\varphi_i$ is a $C_i$-Lipschitz MLP for some $C_i>0$. Then, by \Cref{lm:lipschitz inherit}, we have that $q_{\varphi_i}$ is a $C_i$-Lipschitz map for $i=1,\ldots,k$.

Then, we prove that
\begin{equation}\label{eq:lip ineq}
    \dW\!\lc \lc \ell_{G_1}^{\scriptscriptstyle{(\varphi,k)}}\rc_\#\mu_{G_1},\lc \ell_{G_2}^{\scriptscriptstyle{(\varphi,k)}}\rc_\#\mu_{G_2}\rc \leq
\Pi_{i=1}^k C_i\cdot\dGk((G_1,\ell_{G_1}),(G_2,\ell_{G_2})).
\end{equation}

Given Equation \eqref{eq:lip ineq}, by \Cref{lm:lipschitz inherit} again and the fact that $\psi$ is $C$-Lipschitz, one has that
\begin{align*}
    |h((G_1,\ell_{G_1}))-h((G_2,\ell_{G_2}))|&\leq C\cdot \left\|q_{\varphi_{k+1}}\lc\lc \ell_{G_1}^{\scriptscriptstyle{(\varphi,k)}}\rc_\#\mu_{G_1}\rc-q_{\varphi_{k+1}}\lc\lc \ell_{G_2}^{\scriptscriptstyle{(\varphi,k)}}\rc_\#\mu_{G_2}\rc\right\|\\
    &\leq C\cdot C_{k+1}\dW\!\lc \lc \ell_{G_1}^{\scriptscriptstyle{(\varphi,k)}}\rc_\#\mu_{G_1},\lc \ell_{G_2}^{\scriptscriptstyle{(\varphi,k)}}\rc_\#\mu_{G_2}\rc \\
    &\leq C\cdot
\Pi_{i=1}^{k+1} C_i\cdot\dGk((G_1,\ell_{G_1}),(G_2,\ell_{G_2})).
\end{align*}

The following proof of Equation \eqref{eq:lip ineq} is adapted from the proof of \citep[Equation (19)]{chen2022weisfeilerlehman}: it suffices to prove that for any $v_1\in V_{G_1}$ and $v_2\in V_{G_2}$, 
\begin{equation*}\label{eq:nn<dwl-k1}
   \left\|\ell_{G_1}^{\scriptscriptstyle{(\varphi,k)}}(v_1)-\ell_{G_2}^{\scriptscriptstyle{(\varphi,k)}}(v_2)\right\|\leq \Pi_{i=1}^k C_i\cdot \dW\!\lc \WLh{k}{(\mX_q(G_1),\ell_{G_1})}(v_1),\WLh{k}{(\mX_q(G_2),\ell_{G_2})}(v_2)\rc.
\end{equation*}
We prove the above inequality by proving the following inequality inductively on $j=1,\ldots,k$:
\begin{equation}\label{eq:nn<dwl1}
   \left\|\ell_{G_1}^{\scriptscriptstyle{(\varphi,j)}}(v_1)-\ell_{G_2}^{\scriptscriptstyle{(\varphi,j)}}(v_2)\right\|\leq \Pi_{i=1}^j C_i\cdot \dW\!\lc \WLh{j}{(\mX_q(G_1),\ell_{G_1})}(v_1),\WLh{j}{(\mX_q(G_2),\ell_{G_2})}(v_2)\rc.
\end{equation}

When $j=1$, we have that
\begin{align*}   \left\|\ell_{G_1}^{\scriptscriptstyle{(\varphi,1)}}(v_1)-\ell_{G_2}^{\scriptscriptstyle{(\varphi,1)}}(v_2)\right\| & = \left\|q_{\varphi_{1}}\!\lc \lc \ell_{G_1}\rc_\#m_{v_1}^{G_1,q}\rc -q_{\varphi_{j}}\!\lc \lc \ell_{G_2}\rc_\#m_{v_2}^{G_2,q}\rc \right\|\\
&\leq C_1\dW\lc \lc \ell_{G_1}\rc_\#m_{v_1}^{G_1,q},\lc \ell_{G_2}\rc_\#m_{v_2}^{G_2,q}\rc\\
&=C_1\dW\!\lc \WLh{1}{(\mX_q(G_1),\ell_{G_1})}(v_1),\WLh{1}{(\mX_q(G_2),\ell_{G_2})}(v_2)\rc.
\end{align*}

We now assume that \Cref{eq:nn<dwl1} holds for some $j\geq 1$. For $j+1$, we have that
\begin{align*}
    &\Pi_{i=1}^{j+1} C_i\cdot\dW\!\lc \WLh{j+1}{(\mX_q(G_1),\ell_{G_1})}(v_1),\WLh{j+1}{(\mX_q(G_2),\ell_{G_2})}(v_2)\rc\\ &=\Pi_{i=1}^{j+1} C_i\cdot\dW\!\lc \lc \WLh{j}{(\mX_q(G_1),\ell_{G_1})}\rc_\#m_{v_1}^{G_1,q},\lc \WLh{j}{(\mX_q(G_2),\ell_{G_2})}\rc_\#m_{v_2}^{G_2,q}\rc \\
    &=C_{j+1}\cdot\inf_{\gamma\in\mathcal{C}\left(m_{v_1}^{G_1,q},m_{v_2}^{G_2,q}\right)}\int\limits_{V_{G_1}\times V_{G_2}}\Pi_{i=1}^j C_i\cdot\dW\!\lc \WLh{j}{(\mX_q(G_1),\ell_{G_1})}(v_1'),\WLh{j}{(\mX_q(G_2),\ell_{G_2})}(v_2')\rc \gamma(dv_1'\times dv_2')\\
    &\geq C_{j+1} \inf_{\gamma\in\mathcal{C}\left(m_{v_1}^{G_1,q},m_{v_2}^{G_2,q}\right)}\int\limits_{V_{G_1}\times V_{G_2}}\left\|\ell_{G_1}^{\scriptscriptstyle{(\varphi,j)}}(v_1')-\ell_{G_2}^{\scriptscriptstyle{(\varphi,j)}}(v_2')\right\|\gamma(dv_1'\times dv_2')\\
    &=C_{j+1}\cdot\dW\!\lc \lc \ell_{G_1}^{\scriptscriptstyle{(\varphi,j)}}\rc_\#m_{v_1}^{G_1,q},\lc \ell_{G_2}^{\scriptscriptstyle{(\varphi,j)}}\rc_\#m_{v_2}^{G_2,q}\rc \\
    &\geq \left\|q_{\varphi_{j+1}}\!\lc \lc \ell_{G_1}^{\scriptscriptstyle{(\varphi,j)}}\rc_\#m_{v_1}^{G_1,q}\rc -q_{\varphi_{j+1}}\!\lc \lc \ell_{G_2}^{\scriptscriptstyle{(\varphi,j)}}\rc_\#m_{v_2}^{G_2,q}\rc \right\|\\
    &=\left\|\ell_{G_1}^{\scriptscriptstyle{(\varphi,j+1)}}(v_1)-\ell_{G_2}^{\scriptscriptstyle{(\varphi,j+1)}}(v_2)\right\|.
\end{align*}

\subsection{More on Lipschitz properties}\label{sec: lip}
In \Cref{sec: graph induced LMMC}, we introduced one way of inducing LMMCs from weighted graphs. Below, we introduce a new method for doing so and hence establish the Lipschitz property of a new type of MP-GNNs closely related to GINs.

Given any $\eps\geq 0$ and any finite edge weighted graph $G=(V,E, w)$ endowed with a label function $\ell_G:V\rightarrow Z$, we generate a LMMC as follows. We associate to the vertex set $V$ a Markov kernel $m_\bullet^{G,(\eps)}$ as follows: for any $v\in V$,
\[m_v^{G,(\eps)}:=\frac{1}{\deg(v)+1+\eps}\lc (1+\eps)\delta_v+\sum_{v'\in N_G(v)}w_{vv'}\delta_{v'}\rc\]
For each vertex $v\in V$, we let $\deg^\eps(v):=\deg(v)+1+\eps$. Then, the following probability measure $\mu_G^\eps:=\sum_{v\in V}\frac{{\deg^\eps}(v)}{\sum_{v'\in V}{\deg^\eps}(v')}\delta_v$ is a stationary distribution w.r.t. the Markov kernel $m_\bullet^{G,(\eps)}$. Then, we let $\mX_{(\eps)}(G):=(V,m_\bullet^{G,(\eps)},\mu_G^\eps)$.

Now, recall that $\mathcal{G}(Z)$ denotes the collection of all $Z$-labeled graphs. Given $\eps\geq 0$, let $I_{(\eps)}:\mathcal{G}(Z)\rightarrow \mathcal{M}^L(Z)$ denote the map sending a $Z$-labeled graph $G$ into a $Z$-LMMC $(\mX_{(\eps)}(G),\ell_G)$.
Let $\mathcal{G}_{(\eps)}(Z):=I_{(\eps)}(\mathcal{G}(Z))\subseteq\mathcal{M}^L(Z)$. 
Now, for any $k\geq 0$, $\dWLk$ restricted on {$\mathcal{G}_{(\eps)}(Z)$} induces a pseudo-distance, which we denote by $d_{\mathcal{G},{(\eps)}}^{\scriptscriptstyle{(k)}}$, on $\mathcal{G}(Z)$. 
Note the subtle differences in notation introduced in \Cref{sec: graph induced LMMC}.

Now, given any $\eps\geq 0$, we consider the following $k$-layer ``normalized'' GIN.
\begin{align*}
 \text{Message Passing:}&\quad   \ell_G^{i+1}(v)=\varphi_{i+1}\lc\frac{1}{\deg(v)+1+\eps}\lc (1+\eps)\ell^i_G(v)+\sum_{v'\in N_G(v)}w_{vv'}\ell^i_G(v')\rc\rc\\
 \text{Readout:}&\quad   h((G,\ell_G)):=\psi\lc \sum_{v\in V}\frac{{\deg^\eps}(v)}{\sum_{v'\in V} {\deg^\eps}(v')}\ell^k_G(v)\rc
\end{align*}
where $\varphi_i:\R^{d_{i-1}}\rightarrow\R^{d_i}$ and $\psi:\R^{d_{k}}\rightarrow \R$ are MLPs.
Note that the only difference between the above MP-GNN and GIN is the involvement of normalization terms $\frac{1}{\deg(v)+1+\eps}$ and $\frac{{\deg^\eps}(v)}{\sum_{v'\in V} {\deg^\eps}(v')}$. 

\paragraph{Lipschitz property of normalized GINs} Now, we establish that the normalized GINs defined above are Lipschitz w.r.t. $d_{\mathcal{G},{(\eps)}}^{\scriptscriptstyle{(k)}}$.
\begin{theorem}
Given a $k$-layer normalized GIN $h:\mathcal{G}(\R^d)\rightarrow\R$ as described above, assume that $\varphi_i$ is $C_i$-Lipschitz for $i=1,\ldots,k$ and that $\psi$ is $C$-Lipschitz.  
Then, for any two labeled graphs $(G_1,\ell_{G_1})$ and $(G_2,\ell_{G_2})$, one has that
\[|h((G_1,\ell_{G_1}))-h((G_2,\ell_{G_2}))|\leq C\cdot\Pi_{i=1}^{k}C_i\cdot d_{\mathcal{G},{(\eps)}}^{\scriptscriptstyle{(k)}}\!\lc(G_1,\ell_{G_1}),(G_2,\ell_{G_2})\rc.\]
\end{theorem}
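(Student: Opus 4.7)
The plan is to mirror the proof of Theorem~\ref{thm: lip}. Observe first that the normalized GIN's message-passing rule can be rewritten as $\ell_G^{i+1}(v)=\varphi_{i+1}\!\lc q_{\mathrm{id}}\!\lc (\ell_G^i)_\# m_v^{G,(\eps)}\rc\rc$, where $\mathrm{id}:\R^{d_i}\to\R^{d_i}$ is the identity (which is $1$-Lipschitz). This is the key structural observation: although $\varphi$ is applied \emph{after} aggregation rather than before as in Section~\ref{sec:MPNN}, the expectation is itself $q_{\mathrm{id}}$, so \Cref{lm:lipschitz inherit} gives the same $d_W$-to-norm passage. Concretely, for any $v_1\in V_{G_1}$ and $v_2\in V_{G_2}$,
\[\left\|\ell_{G_1}^{i+1}(v_1)-\ell_{G_2}^{i+1}(v_2)\right\|\leq C_{i+1}\,\dW\!\lc (\ell_{G_1}^i)_\# m_{v_1}^{G_1,(\eps)},(\ell_{G_2}^i)_\# m_{v_2}^{G_2,(\eps)}\rc.\]

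Using this inequality as the inductive engine, I would prove by induction on $j=1,\ldots,k$ that for all $v_1\in V_{G_1}$ and $v_2\in V_{G_2}$,
\[\left\|\ell_{G_1}^{j}(v_1)-\ell_{G_2}^{j}(v_2)\right\|\leq \Pi_{i=1}^{j}C_i\cdot \dW\!\lc \WLh{j}{(\mX_{(\eps)}(G_1),\ell_{G_1})}(v_1),\WLh{j}{(\mX_{(\eps)}(G_2),\ell_{G_2})}(v_2)\rc.\]
The base case $j=1$ is immediate from the displayed inequality and $\WLh{1}{(\mX_{(\eps)}(G),\ell_G)}(v)=(\ell_G)_\# m_v^{G,(\eps)}$. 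For the inductive step, I would expand $\WLh{j+1}{(\cdot)}$ as the pushforward of $\WLh{j}{(\cdot)}$ under $m_\bullet^{G,(\eps)}$, rewrite the right-hand Wasserstein distance as an infimum over couplings $\gamma\in\mathcal{C}(m_{v_1}^{G_1,(\eps)},m_{v_2}^{G_2,(\eps)})$, apply the induction hypothesis pointwise inside the integral, and then use \Cref{lm:lipschitz inherit} (with the identity again) to close the loop exactly as in the derivation following Equation~(19) in the proof of \Cref{thm: lip}.

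For the readout, the same trick applies: $h((G,\ell_G))=\psi\!\lc q_{\mathrm{id}}\!\lc (\ell_G^k)_\#\mu_G^\eps\rc\rc$, so $C$-Lipschitzness of $\psi$ combined with \Cref{lm:lipschitz inherit} for $q_{\mathrm{id}}$ yields
\[|h((G_1,\ell_{G_1}))-h((G_2,\ell_{G_2}))|\leq C\cdot \dW\!\lc (\ell_{G_1}^k)_\#\mu_{G_1}^\eps,(\ell_{G_2}^k)_\#\mu_{G_2}^\eps\rc.\]
Finally, I would pass from the per-vertex inductive bound to the global Wasserstein bound by taking any coupling $\gamma\in\mathcal{C}(\mu_{G_1}^\eps,\mu_{G_2}^\eps)$, integrating the inductive inequality against $\gamma$, and invoking the definition of $d_{\mathcal{G},(\eps)}^{\scriptscriptstyle{(k)}}$ as the infimum on the outermost coupling, exactly as in the closing lines of the proof of \Cref{thm: lip}. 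I do not anticipate a real obstacle: the only conceptual point of care is that the ``apply-after-average'' form of the normalized GIN produces the same Lipschitz estimate as the ``apply-then-average'' form of \Cref{sec:MPNN}, because in both cases the relevant composition $\varphi\circ q_{\mathrm{id}}$ or $q_\varphi$ is Lipschitz with the same constant on $\dW$, and hence the inductive bookkeeping is identical.
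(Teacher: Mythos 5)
Your proposal is correct and follows essentially the same route as the paper's proof: the same per-vertex induction on $j$ using \Cref{lm:lipschitz inherit} with the identity map (exploiting that the normalized aggregation is $q_{\mathrm{Id}}$ applied to $(\ell_G^j)_\# m_v^{G,(\eps)}$), the same treatment of the readout via $C$-Lipschitzness of $\psi$, and the same final passage from the vertex-level bound to $d_{\mathcal{G},(\eps)}^{\scriptscriptstyle{(k)}}$ by integrating against a coupling of $\mu_{G_1}^\eps$ and $\mu_{G_2}^\eps$. No gaps to report.
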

\begin{proof}
The proof is similar to the one for \Cref{thm: lip}.

We first prove that
\begin{equation}\label{eq:lip ineq eps}
    \dW\!\lc \lc \ell_{G_1}^{k}\rc_\#\mu_{G_1}^\eps,\lc \ell_{G_2}^{k}\rc_\#\mu_{G_2}^\eps\rc \leq
\Pi_{i=1}^k C_i\cdot\dGek((G_1,\ell_{G_1}),(G_2,\ell_{G_2})).
\end{equation}

Given Equation \eqref{eq:lip ineq eps} and the fact that $\psi$ is $C$-Lipschitz, one has that
\begin{align*}
    |h((G_1,\ell_{G_1}))-h((G_2,\ell_{G_2}))|& \leq C\cdot\left\Vert\sum_{v\in V}\frac{{\deg^\eps}(v)}{\sum_{v'\in V} {\deg^\eps}(v')}\ell^k_{G_1}(v)-\sum_{v\in V}\frac{{\deg^\eps}(v)}{\sum_{v'\in V} {\deg^\eps}(v')}\ell^k_{G_2}(v)\right\Vert\\
    &\leq C\cdot \dW\!\lc \lc \ell_{G_1}^{k}\rc_\#\mu_{G_1}^\eps,\lc \ell_{G_2}^{k}\rc_\#\mu_{G_2}^\eps\rc \\
    &\leq C\cdot
\Pi_{i=1}^{k} C_i\cdot\dGek((G_1,\ell_{G_1}),(G_2,\ell_{G_2})),
\end{align*}
where the second inequality follows \Cref{lm:lipschitz inherit} by letting $\varphi=\mathrm{id}_{\R^{d_{k}}}$.

To prove Equation \eqref{eq:lip ineq eps}, it suffices to prove that for any $v_1\in V_{G_1}$ and $v_2\in V_{G_2}$, 
\begin{equation*}\label{eq:nn<dwl-k}
   \left\|\ell_{G_1}^{k}(v_1)-\ell_{G_2}^{k}(v_2)\right\|\leq \Pi_{i=1}^k C_i\cdot \dW\!\lc \WLh{k}{(\mX_{(\eps)}(G_1),\ell_{G_1})}(v_1),\WLh{k}{(\mX_{(\eps)}(G_2),\ell_{G_2})}(v_2)\rc.
\end{equation*}
We prove the above inequality by proving the following inequality inductively on $j=1,\ldots,k$:
\begin{equation}\label{eq:nn<dwl}
   \left\|\ell_{G_1}^{j}(v_1)-\ell_{G_2}^{j}(v_2)\right\|\leq \Pi_{i=1}^j C_i\cdot \dW\!\lc \WLh{j}{(\mX_{(\eps)}(G_1),\ell_{G_1})}(v_1),\WLh{j}{(\mX_{(\eps)}(G_2),\ell_{G_2})}(v_2)\rc.
\end{equation}

When $j=1$, we have that
\begin{align*}   &\left\|\ell_{G_1}^1(v_1)-\ell_{G_2}^1(v_2)\right\| \\
& \leq C_1\left\|\frac{(1+\eps)\ell_{G_1}(v_1)+\sum_{v_1'\in N_{G_1}(v_1)}w_{v_1v_1'}\ell_{G_1}(v_1')}{\deg^\eps_{G_1}(v_1)}-\frac{(1+\eps)\ell_{G_2}(v_2)+\sum_{v_2'\in N_{G_2}(v_2)}w_{v_2v_2'}\ell_{G_2}(v_2')}{\deg^\eps_{G_2}(v_2)}\right\|\\
&=C_1\left\|q_{\mathrm{Id}}\lc (\ell_{G_1})_\#m_{v_1}^{G_1,(\eps)}\rc-q_{\mathrm{Id}}\lc (\ell_{G_2})_\#m_{v_1}^{G_2,(\eps)}\rc\right\|\\
&\leq C_1 \dW\lc (\ell_{G_1})_\#m_{v_1}^{G_1,(\eps)},(\ell_{G_2})_\#m_{v_1}^{G_2,(\eps)}\rc=C_1\dW\!\lc \WLh{1}{(\mX_{(\eps)}(G_1),\ell_{G_1})}(v_1),\WLh{1}{(\mX_{(\eps)}(G_2),\ell_{G_2})}(v_2)\rc.
\end{align*}
Here $\mathrm{Id}:\R^{d_1}\rightarrow \R^{d_1}$ is the identity map and thus is $1$-Lipschitz.

We now assume that \Cref{eq:nn<dwl} holds for some $j\geq 1$. For $j+1$, we have that
\begin{align*}   &\left\|\ell_{G_1}^{j+1}(v_1)-\ell_{G_2}^{j+1}(v_2)\right\| \\
& \leq C_{j+1}\left\|\frac{(1+\eps)\ell_{G_1}^j(v_1)+\sum_{v_1'\in N_{G_1}(v_1)}w_{v_1v_1'}\ell_{G_1}^j(v_1')}{\deg^\eps_{G_1}(v_1)}-\frac{(1+\eps)\ell_{G_2}^j(v_2)+\sum_{v_2'\in N_{G_2}(v_2)}w_{v_2v_2'}\ell_{G_2}^j(v_2')}{\deg^\eps_{G_2}(v_2)}\right\|\\
&=C_{j+1}\left\|q_{\mathrm{Id}}\lc (\ell_{G_1}^j)_\#m_{v_1}^{G_1,(\eps)}\rc-q_{\mathrm{Id}}\lc (\ell_{G_2}^j)_\#m_{v_1}^{G_2,(\eps)}\rc\right\|\\
&\leq C_{j+1} \dW\lc (\ell_{G_1}^j)_\#m_{v_1}^{G_1,(\eps)},(\ell_{G_2}^j)_\#m_{v_1}^{G_2,(\eps)}\rc\\
    &= C_{j+1} \inf_{\gamma\in\mathcal{C}\left(m_{v_1}^{G_1,(\eps)},m_{v_2}^{G_2,(\eps)}\right)}\int\limits_{V_{G_1}\times V_{G_2}}\left\|\ell_{G_1}^{j}(v_1')-\ell_{G_2}^{j}(v_2')\right\|\gamma(dv_1'\times dv_2')\\
&\leq C_{j+1}\cdot\inf_{\gamma\in\mathcal{C}\left(m_{v_1}^{G_1,(\eps)},m_{v_2}^{G_2,(\eps)}\right)}\int\limits_{V_{G_1}\times V_{G_2}}\Pi_{i=1}^j C_i\cdot\dW\!\lc \WLh{j}{(\mX_{(\eps)}(G_1),\ell_{G_1})}(v_1'),\WLh{j}{(\mX_{(\eps)}(G_2),\ell_{G_2})}(v_2')\rc \gamma(dv_1'\times dv_2')\\
&=\Pi_{i=1}^{j+1}C_i\dW\!\lc \WLh{j+1}{(\mX_{(\eps)}(G_1),\ell_{G_1})}(v_1),\WLh{j+1}{(\mX_{(\eps)}(G_2),\ell_{G_2})}(v_2)\rc.
\end{align*}
This concludes the proof.

\end{proof}

\subsection{Proof of \Cref{thm: universal}}\label{app:proof universal}
Recall the map $I_q:\lc \mathcal{G}(\R^d),d_{\mathcal{G},q}^{\scriptscriptstyle{(k)}}\rc\rightarrow \lc \mathcal{G}_q(\R^d),\dWLk\rc$. This map is continuous by the definition of $d_{\mathcal{G},q}^{\scriptscriptstyle{(k)}}$.
Hence $\mathcal{K}_q:=I_q(\mathcal{K})$ is compact. Then, we define the map $J_q:C(\mathcal{K}_q,\R)\rightarrow C(\mathcal{K},\R)$  sending $f_q$ to $f:=f_q\circ I_q$.
\begin{claim}\label{claim 1}
$J_q:C(\mathcal{K}_q,\R)\rightarrow C(\mathcal{K},\R)$ is a homeomorphism.  
\end{claim}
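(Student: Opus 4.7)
The plan is to establish that $J_q$ is an isometric linear bijection between the Banach spaces $(C(\mathcal{K}_q,\R),\|\cdot\|_\infty)$ and $(C(\mathcal{K},\R),\|\cdot\|_\infty)$; any such map is automatically a homeomorphism. The strategy mirrors the treatment of the quotient-induced map $Q_c$ in the proof of \Cref{thm: universal MCNN pseudo}, with the role of the quotient map now played by $I_q$. The decisive structural fact, which will be used throughout, is that by construction $\dGk$ is the pullback of $\dWLk$ under $I_q\times I_q$, so $I_q:(\mathcal{G}(\R^d),\dGk)\to(\mathcal{G}_q(\R^d),\dWLk)$ is an isometry between pseudo-metric spaces; in particular it is continuous, so $J_q$ does send continuous functions to continuous functions.

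\textbf{Isometry and injectivity.} Using $\mathcal{K}_q=I_q(\mathcal{K})$, one directly computes
\[
\|J_q(f_q)\|_\infty=\sup_{x\in\mathcal{K}}|f_q(I_q(x))|=\sup_{y\in\mathcal{K}_q}|f_q(y)|=\|f_q\|_\infty,
\]
which yields both the isometry property of $J_q$ (so $J_q$ is in particular continuous with continuous inverse on its image) and its injectivity.

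\textbf{Surjectivity and the main obstacle.} For each $f\in C(\mathcal{K},\R)$, I would define $f_q:\mathcal{K}_q\to\R$ by picking, for each $y\in\mathcal{K}_q$, some preimage $x\in I_q^{-1}(y)\cap\mathcal{K}$ (nonempty since $y\in I_q(\mathcal{K})$) and setting $f_q(y):=f(x)$. The only delicate point, and the main obstacle, is that $\dGk$ and $\dWLk$ are only pseudo-metrics, so $I_q$ need not be injective on points and the recipe above is not a priori well-defined. This is resolved by the pullback identity itself: whenever $I_q(x_1)=I_q(x_2)$, one has $\dGk(x_1,x_2)=\dWLk(I_q(x_1),I_q(x_2))=0$, and any continuous real-valued function on a pseudo-metric space must agree on points at pseudo-distance zero (each such point lies in every open ball around the other), forcing $f(x_1)=f(x_2)$. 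Continuity of $f_q$ then follows by a direct $\epsilon$-$\delta$ argument: given $y\in\mathcal{K}_q$ and $\epsilon>0$, continuity of $f$ at a chosen preimage $x$ supplies $\delta>0$; for any $y'\in\mathcal{K}_q$ with $\dWLk(y,y')<\delta$ and any $x'\in I_q^{-1}(y')\cap\mathcal{K}$, the isometry identity $\dGk(x,x')=\dWLk(y,y')<\delta$ yields $|f_q(y)-f_q(y')|=|f(x)-f(x')|<\epsilon$. Hence $f_q\in C(\mathcal{K}_q,\R)$ and $J_q(f_q)=f$, completing the proof.
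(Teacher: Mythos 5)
Your proof is correct and follows essentially the same route as the paper: both establish that $J_q$ is a sup-norm isometry onto $C(\mathcal{K},\R)$ (using $\mathcal{K}_q=I_q(\mathcal{K})$) and construct the inverse by pulling $f$ back through $I_q$, with well-definedness resolved by the fact that $\dGk$ is the pullback of $\dWLk$ and continuous functions on a pseudo-metric space agree at pseudo-distance zero. Your explicit $\epsilon$-$\delta$ verification of the continuity of $f_q$ just fills in a step the paper dismisses as ``easy to check.''
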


By \Cref{thm: universal MCNN pseudo}, we know that $\overline{\mathcal{N\!N}_k(\R^d)|_{\mathcal{K}_q}}=C(\mathcal{K}_q,\mathbb{R})$. Here $\mathcal{N\!N}_k(\R^d)$ denotes the collection of $k$-layer MCNNs (see \Cref{sec:MCNN}) and $\mathcal{N\!N}_k(\R^d)|_{\mathcal{K}_q}$ refers to the collection of restrictions of functions in $\mathcal{N\!N}_k(\R^d)$ to $\mathcal{K}_q$. By \Cref{eq:h H}, we have that $\mathcal{N\!N}_k^q(\R^d)|_\mathcal{K}=J_q(\mathcal{N\!N}_k(\R^d)|_{\mathcal{K}_q})$. 
Hence, by \Cref{claim 1} we have that
\[\overline{\mathcal{N\!N}_k^q(\R^d)|_\mathcal{K}}=J_q(\overline{\mathcal{N\!N}_k(\R^d)|_{\mathcal{K}_q}})=J_q(C(\mathcal{K}_q,\mathbb{R}))=C(\mathcal{K},\mathbb{R}).\]

\begin{proof}[Proof of \Cref{claim 1}]
We show that, in fact, $J_q$ is an isometry w.r.t. the sup metric on $C(\mathcal{K}_q,\R)$ and $C(\mathcal{K},\R)$. Given continuous $f_q,g_q:\mathcal{K}_q\rightarrow \R$, one has that
\begin{align*}
    &\sup_{(\mX_q(G),\ell_G)\in\mathcal{K}_q}|f_q((\mX_q(G),\ell_G))-g_q((\mX_q(G),\ell_G))|\\
    =&\sup_{(\mX_q(G),\ell_G)\in\mathcal{K}_q}|f((G,\ell_G))-g((G,\ell_G))|\\
    =&\sup_{(G,\ell_G)\in\mathcal{K}}|f((G,\ell_G))-g((G,\ell_G))|.
\end{align*}
Hence, $J_q$ is an isometric embedding. 
The inverse $(J_q)^{-1}$ of $J_q$ is identified as follows: for any $f\in C(\mathcal{K},\R)$, we let $f_q:\mathcal{K}_q\rightarrow\R$ be defined such that for any $(\mX_q(G),\ell_G)\in\mathcal{K}_q$, $f_q((\mX_q(G),\ell_G)):=f((G,\ell_G))$ ($f_q$ is well-defined: if $ (\mX_q(G_1),\ell_{G_1})=(\mX_q(G_2),\ell_{G_2})$, then $d_{\mathcal{G},q}^{\scriptscriptstyle{(k)}}((G_1,\ell_{G_1}),(G_2,\ell_{G_2}))=0$ and thus $ f((\mX_q(G_1),\ell_{G_1}))=f((\mX_q(G_2),\ell_{G_2}))$.). It is easy to check that $(J_q)^{-1}$ is the inverse of $J_q$ and is also an isometric embedding. This finishes the proof.
\end{proof}
\end{document}